\newtheorem{theorem}{Theorem}[section]
\newtheorem{remark}{Remark}[section]
\newtheorem{lemma}{Lemma}[section]
\journal{Neurocomputing}
\DeclareTextFontCommand{\emph}{} % Do nothing
\newacronym{acc}{Acc}{Accuracy}
\newacronym{ai}{AI}{Artificial Intelligence}
\newacronym{ampca}{AMPCA} {Arithmetic Mean Probability of Correct Assignment}
\newacronym{boost}{BOOST}{Boosting Trees}
\newacronym{cel}{CEL}{Cross-Entropy Loss}
\newacronym{cdf}{CDF}{Cumulative Distribution Function}
\newacronym{dcm}{DCM}{Discrete Choice Models}
\newacronym{dmlp}{DMLP}{Deep Multilayer Perceptron}
\newacronym{dca}{DCA}{Discrete Classification Accuracy}
\newacronym{dnn}{DNN}{Deep Neural Network}
\newacronym[plural=GPs,firstplural=Gaussian Processes]{gp}{GP}{Gaussian Process}
\newacronym{gbdt}{GBDT}{Gradient Boosting Decision Trees}
\newacronym{gd}{GD}{Gradient Descent}
\newacronym{glm}{GLM}{Generalised Linear Model}
\newacronym{gmpca}{GMPCA} {Geometric Mean Probability of Correct Assignment}
\newacronym{hpo}{HPO}{hyperparameter optimisation} 
\newacronym{klr}{KLR}{Kernel Logistic Regression}
\newacronym{krr}{KRR}{Kernel Ridge Regression}
\newacronym{lr}{LR}{Logistic Regression}
\newacronym{map}{MAP}{maximum a posteriori probability}
\newacronym{ml}{ML}{Machine Learning}
\newacronym{mle}{MLE}{Maximum Likelihood Estimation}
\newacronym{mlp}{MLP}{Multilayer Perceptron}
\newacronym{mnl}{MNL}{Multinomial Logit}
\newacronym{mnp}{MNP}{Multinomial Probit}
\newacronym{mxl}{MXL}{Mixed Logit}
\newacronym{nl}{NL}{Nested Logit Model}
\newacronym{nn}{NN}{Neural Network}
\newacronym{pmle}{PMLE}{Penalised Maximum Likelihood Estimation}
\newacronym{rf}{RF}{Random Forests}
\newacronym{rbf}{RBF}{Radial Basis Function}
\newacronym{rkhs}{RKHS}{Reproducing Kernel Hilbert Spaces}
\newacronym{rnll}{RNLL}{Regularised Negative Log-Likelihood}
\newacronym{rum}{RUM}{Random Utility Model}
\newacronym{sgd}{SGD}{Stochastic Gradient Descent}
\newacronym{smo}{SMO}{Sequential Minimal Optimisation}
\newacronym{svd}{SVD}{Singular Value Decomposition}
\newacronym{svm}{SVM}{Support Vector Machine}
\newacronym{vot}{VOT}{Value of Time}
\newacronym{wtp}{WTP}{Willingness to Pay}
\newacronym{xgboost}{XGBoost} {Extreme Gradient Boosting}
\newlength\maxlength
\newlength\thislength
  \renewcommand*{\glsgroupheading}[1]{}%
\newcommand{\x}{\mathbf{x}}
\newcommand{\R}{\mathbb{R}}
\begin{document}

\begin{frontmatter}

%% Title, authors and addresses

%% use the tnoteref command within \title for footnotes;
%% use the tnotetext command for theassociated footnote;
%% use the fnref command within \author or \address for footnotes;
%% use the fntext command for theassociated footnote;
%% use the corref command within \author for corresponding author footnotes;
%% use the cortext command for theassociated footnote;
%% use the ead command for the email address,
%% and the form \ead[url] for the home page:
%% \title{Title\tnoteref{label1}}
%% \tnotetext[label1]{}
%% \author{Name\corref{cor1}\fnref{label2}}
%% \ead{email address}
%% \ead[url]{home page}
%% \cortext[cor1]{}
%% \affiliation{organization={},
%%             addressline={},
%%             city={},
%%             postcode={},
%%             state={},
%%             country={}}
%% \fntext[label3]{}

\title{Scalable Kernel Logistic Regression with Nyström Approximation: Theoretical Analysis and Application to Discrete Choice Modelling}

%\title{A Nyström-based approach for efficient estimation of Kernel Logistic Regression on large discrete choice datasets}
% A Nyström-based approach for efficient estimation of Kernel Logistic Regression on large datasets: Application to discrete choice modelling
% Efficient estimation of Kernel Logistic Regression models for large-scale datasets
% Scaling up Kernel Logistic Regression: Efficient estimation for large-scale dataset
% Scalable Kernel Logistic Regression for large-scale discrete choice modelling
% Scalable Kernel Logistic Regression for discrete choice modelling with large datasets

%% use optional labels to link authors explicitly to addresses:
%% \author[label1,label2]{}
%% \affiliation[label1]{organization={},
%%             addressline={},
%%             city={},
%%             postcode={},
%%             state={},
%%             country={}}
%%
%% \affiliation[label2]{organization={},
%%             addressline={},
%%             city={},
%%             postcode={},
%%             state={},
%%             country={}}

\author[inst1]{José Ángel Martín-Baos\corref{cor1}}
\ead{JoseAngel.Martin@uclm.es}
\cortext[cor1]{Corresponding author}
\author[inst1]{Ricardo García-Ródenas}
\author[inst2]{Luis Rodriguez-Benitez}
\author[inst3]{Michel Bierlaire}

\affiliation[inst1]{organization={Department of Mathematics, Escuela Superior de Inform\unexpanded{á}tica},%Department and Organization
      addressline={University of Castilla-La Mancha}, 
      city={Ciudad Real},
      postcode={13071}, 
      %state={State One},
      country={Spain}}
\affiliation[inst2]{organization={Department of Information and System Technologies, Escuela Superior de Inform\unexpanded{á}tica},%Department and Organization
      addressline={University of Castilla-La Mancha}, 
      city={Ciudad Real},
      postcode={13071}, 
      %state={State One},
      country={Spain}}
\affiliation[inst3]{organization={Transport and Mobility Laboratory},%Department and Organization
      addressline={Ecole Polytechnique Fédérale de Lausanne}, 
      city={Lausanne},
      postcode={1015}, 
      %state={State One},
      country={Switzerland}}

\begin{abstract}
%% Text of abstract

The application of kernel-based \gls{ml} techniques to discrete choice modelling using large datasets often faces challenges due to memory requirements and the considerable number of parameters involved in these models. This complexity hampers the efficient training of large-scale models. This paper addresses these problems of scalability by introducing the Nyström approximation for \gls{klr} on large datasets. The study begins by presenting a theoretical analysis in which: i) the set of \gls{klr} solutions is characterised, ii) an upper bound to the solution of \gls{klr} with Nyström approximation is provided, and finally iii) a specialisation of the optimisation algorithms to Nyström \gls{klr} is described. After this, the Nyström \gls{klr} is computationally validated. Four landmark selection methods are tested, including basic uniform sampling, a $k$-means sampling strategy, and two non-uniform methods grounded in leverage scores. The performance of these strategies is evaluated using large-scale transport mode choice datasets and is compared with traditional methods such as \gls{mnl} and contemporary \gls{ml} techniques. The study also assesses the efficiency of various optimisation techniques for the proposed Nyström \gls{klr} model. The performance of gradient descent, Momentum, Adam, and L-BFGS-B optimisation methods is examined on these datasets. Among these strategies, the $k$-means Nyström \gls{klr} approach emerges as a successful solution for applying \gls{klr} to large datasets, particularly when combined with the L-BFGS-B and Adam optimisation methods. The results highlight the ability of this strategy to handle datasets exceeding $200,000$ observations while maintaining robust performance.

%Revistas interesantes:
% - Neurocomputing (IF:6.0)
% - Applied Intelligence (IF:5.3)
% - Integrated Computer-Aided Engineering (IF:6.5)
% - International Journal of Neural Systems (IF:8.0)

\end{abstract}

%%Graphical abstract
%\begin{graphicalabstract}
%\includegraphics{grabs}
%\end{graphicalabstract}

%%Research highlights
%\begin{highlights}
%\item Research highlight 1
%\item Research highlight 2
%\end{highlights}

\begin{keyword}
%% keywords here, in the form: keyword \sep keyword
Discrete choice models \sep Random utility models \sep Kernel logistic regression \sep Reproducing kernel Hilbert spaces \sep Nyström method \sep Low-rank approximation

%% PACS codes here, in the form: \PACS code \sep code
%\PACS 0000 \sep 1111
%% MSC codes here, in the form: \MSC code \sep code
%% or \MSC[2008] code \sep code (2000 is the default)
%\MSC 0000 \sep 1111
\end{keyword}

\end{frontmatter}

\glsresetall % Reset all the acronyms after the abstract section

%\linenumbers

\printglossary[type=\acronymtype,style=myacronymstyle]

%% main text
\section{Introduction}
\label{sect:introduction}

\gls{dcm} have long been central to analysing individual decision making, with wide-ranging applications in economics, marketing, and transportation \citep{McFa73,BLL85,Tra09}. \gls{dcm} offer simple explanations of decision processes and statistical tools to test the underlying choice mechanisms. \glspl{rum} are commonly employed in this context, relying on utility functions defined by the modellers as a function of the attributes of the available alternatives and the characteristics of the individuals making the choices.

In recent years, the incorporation of \gls{ml} techniques has significantly impacted this field. A growing body of literature demonstrates that \gls{ml} models generally outperform traditional \gls{dcm} in terms of predictive capacity \citep{ Omr15,SMM16,HaH17,ZYY19,LBD19,ZYY20,WMH21,MGR21,MLR23}. As a result, substantial research efforts have been devoted to exploring and exploiting the potential of \gls{ml} in this direction. Nonetheless, it is essential to acknowledge that certain \gls{ml} methods, such as ensemble approaches, might exhibit poor extrapolation properties, and encounter numerical challenges in calculating derivatives of the probability function of the models. These limitations can ultimately impact the performance of the econometric indicators extracted from them, including the calculation of the \gls{wtp} or market shares, as identified by \citet{MLR23}. Therefore, while \gls{ml} holds great promise, it is crucial to identify which \gls{ml} methods do not exhibit these limitations, in order to guarantee their successful application in \gls{dcm}.
  
Among \gls{ml} techniques, supervised learning with \gls{rkhs} has gained popularity. Specifically, previous studies have successfully applied \gls{svm} \citep{CoV95}, \gls{gp} \citep{RaW05}, and \gls{klr} \citep{ZhH05} to discrete choice problems. However, one limitation of \gls{svm} is its inability to directly estimate probabilities, making it unsuitable for estimating econometric indicators. On the other hand, \gls{klr} has demonstrated its effectiveness in the realm of transport choice modelling and offers a regularisation solution similar to Gaussian processes \citep{RaW05}.

\gls{klr} has two crucial advantages in the area of model output derivation. Firstly, it operates with latent functions which take on the role of utility functions in traditional \gls{dcm}, but with the notable advantage of not requiring a pre-specification of the utility function. This allows for the automatic capture of non-linear effects, leading to greater flexibility in modelling \citep{MGL20}. Secondly, \gls{klr} excels at properly estimating choice probabilities and their derivatives with respect to the attributes, providing essential information for behavioural analysis of the various models \citep{MGR21}. These advantages collectively enhance the suitability and performance of \gls{klr} in transportation mode choice analysis.

The estimation of parameters in kernel-based \gls{ml} models entails significantly higher computational complexity compared to \gls{dcm} models. The \gls{mnl}, which is the fundamental model in \gls{dcm}, is mathematically equivalent to \gls{klr}, as both require estimating a linear parameter function. The difference is that while \gls{mnl} models typically involve only tens of parameters, \gls{klr} uses $N \cdot I$ parameters, where $N$ represents the number of data samples in the training set, and $I$ denotes the number of available alternatives. In the past, the sample size $N$ was relatively small, mainly sourced from household surveys, enabling rapid model estimation. However, with the advent of different kinds of electronic devices capturing massive amounts of data from passengers and transport services, the sample size $N$ has grown considerably, necessitating methodologies suitable for handling large datasets.

The computational cost of estimating large-scale \gls{klr} models is further influenced by the dense and high-dimensional nature of the so-called {\sl kernel} or {\sl Gram matrix}. To enhance the efficiency of the kernel-based techniques, a widely adopted approach involves generating a low-rank matrix approximation to the kernel matrix. The Nyström method \citep{SZZ15} stands out as a leading technique for achieving this approximation.

This study introduces the Nyström approximation into the estimation procedure of \gls{klr} to address the computational burden of the method. This study examines multiple combinations of optimisation techniques used in training large-scale \gls{ml} models, along with Nyström approximations. The numerical analysis shows that by applying this technique it is possible to effectively estimate \gls{klr} in large-scale transport mode choice problems.

\subsection{Contributions and scope}\label{sect:contributions}
The main contributions of this research paper can be summarised as follows:
\begin{enumerate}

    \item From a theoretical point of view, this study characterises the solution set of the \gls{klr} as a hyperplane. This analysis allows a suitable formulation of the training problem, which has been called the {\sl restricted training problem}, and differs from the common formulation used in the literature, overcoming the sub-optimality present in such formulations. Subsequently, the Nyström approximation is integrated into the \gls{klr}, and an error bound is established in the application of the Nyström \gls{klr}.  

    \item It looks at the use of four Nyström methods to expedite the training process of \gls{klr}, namely a basic uniform method, a $k$-means algorithm-based method, and two non-uniform methods that rely on leverage scores. Their effectiveness is demonstrated in addressing transport mode choice problems within large datasets that would otherwise be impractical to handle without the application of these techniques
    
    \item Experimentation centres on the critical task of transport mode choice, using the LPMC \citep{HEJ18} and NTS \citep{HaH17} datasets, because of their significant size. A comprehensive comparison of the Nyström-based \gls{klr} approaches is carried out, followed by an extensive analysis of their accuracy and computational complexity in comparison to benchmark techniques for transport mode choice, including \gls{mnl}, \gls{rf}, \gls{svm}, \gls{xgboost}, and \gls{nn}.

    \item The computation of the training loss function and its gradient in the Nyström method-\gls{klr} has been specialised, ensuring tailored optimisation. Additionally, the efficiency of the gradient descent, Momentum, Adam, and L-BFGS-B optimisation methods is assessed for the proposed Nyström \gls{klr} model, enabling the most effective approach within this specific problem domain to be identified.
\end{enumerate}

\subsection{Structure}
The remaining sections of this paper are structured as follows: Section~\ref{sect:relatedwork} provides an overview of the previous techniques relevant to our proposed approach. Section~\ref{sect:klr} presents the formulation of the \gls{klr} model and analyses the structure of the solution set. Next, Section~\ref{sect:methodology} details the strategy for approximating the kernel matrix of \gls{klr} using  Nyström method, and discusses the error bounds involved in this approximation. Section \ref{sect:algortihms} formulates the optimisation algorithms applicable to Nyströn-\gls{klr} models. Section~\ref{sect:experiments} then outlines the experiments conducted, and presents a comprehensive analysis of the numerical results obtained from this research. Finally, Section~\ref{sect:conclusion} sets out the key findings and draws conclusions from the study, while also suggesting potential avenues for future research.

\section{Related work}
\label{sect:relatedwork}

The two major methodologies commonly applied when studying individual travel behaviour are traditional \gls{rum} and \gls{ml} methods. The comparison between these approaches has been examined in the literature from two angles: (i) assessment of the performance of the models, and (ii) behavioural interpretation in the context of discrete choice modelling. 

Focusing on the \gls{ml} techniques, many studies indicate that \gls{xgboost} and \gls{rf} show good performance \citep{WMH21} in this domain, as well as in other classification domains such as stock price direction \citep{BVH15}. This performance is supported by comparisons among classifiers, such as those presented in \citet{FCB14} with 17 families of classifiers in 171 datasets. However, ensemble methods present problems when obtaining derivable probability functions, which makes kernel-based methods and neural networks more promising from the point of view of obtaining model outputs \citep{SDA22,MLR23}. 

The canonical classifier of kernel-based methods is \gls{svm}, but one of its major limitations is that it cannot return class probabilities. Thus, when probability outputs are required, alternatives like \gls{klr} are preferred. This method has exhibited potential in other domains. For instance, \citet{OuA18} applied it to action recognition, \citet{LLT16} proved its efficacy in web image annotation, \citet{LHS14} used this approach to analyse party preferences based on the characteristics of individuals, and \citet{BDS19} used it to study factors linked to institutional births in India. Nonetheless, kernel-based methods have found limited application in the field of transport. Notably, in studies by \citet{EGR15} and \citet{EGL18}, a nested \gls{klr} was used to model rail service choices. Additionally, \citet{MSH16} demonstrated the application of Gaussian processes to housing choice data, while \citet{MGR21} employed \gls{klr} to obtain unbiased estimates of econometric indicators like the \gls{wtp}.

One of the drawbacks to the application of kernel-based methods, such as \gls{klr}, is that they typically exhibit a computational complexity $O(N^3)$. The computational cost of these models primarily stems from the dense, large-dimensional kernel matrix used in these techniques. This cost limits their use when exceeding a certain amount of data, which is usually established as around $10,000$ data samples. Therefore, it is necessary to reduce the computational cost of these techniques.

To increase the efficiency of the method, common approaches involve employing approximation techniques that yield a low-rank representation of the kernel matrix. These methods include \emph{subset of regressors} \citep{ZDJ11}, the \emph{Nyström method} \citep{LTL19}, \emph{subset of data} \citep{ZhH05}, and \emph{incomplete Choleski factorisation} \citep{CaT05,LGG15}, among others. 

Nyström's methods have been successfully applied to large-scale machine learning techniques such as  Kernel Regression applications  \citep{AlM15,ZSH22},  numerical solutions of an integral equations system \citep{ARS05}, among other domains. 
This large number of applications motivates a research effort {\sl per se} to obtain error bounds for low-rank approximation of kernel matrices $\mathbf{K}$. Fundamental theoretical results on the error bounds of the Nyström approximation are presented in \citet{ZTK08,KMT12,SZZ15,GiM16}. These studies highlight that the performance of various Nyström approximations depends on intricate factors present in the data structure, such as spectral decay, leverage score structure, eigenvalue gaps in significant parts of the spectrum, etc. 

The analysis of these bounds allows for the development of methods to enhance the accuracy of the Nyström approximation, and therefore several improved variants have been developed, including ensemble Nyström \citep{SZZ15}, modified Nyström \citep{DeL16}, and SS-Nyström \citep{WZQ14}. Another approach involves applying sampling analysis methods in order to identify a subset of landmark points that capture the majority of the dataset's information. A detailed exploration of these sampling methods can be found in \citet{SZZ15}.
 
In the application of Nyström methods to machine learning, it is desired to know how the error of the approximation affects the performance of the algorithm itself. These studies are usually performed in two stages, the first involves bounding the error of the learnt parameters as a function of the Nyström approximation error, and the second involves bounding the learning performance based on the errors of the resulting parameters. In \citet{CMT10} the error bounds for \gls{svm}, \acrfull{krr},  and graph Laplacian-based regularisation algorithms were studied. \citet{ZSH22} examined a non-uniform Nyström approximation for Sparse Kernel Regression (SKR), and introduced two non-uniform Nyström methods with theoretical guarantees for SKR. \citet{WGM19} analysed the application of the so-called rank-restricted Nyström approximation to kernel $k-$means clustering and showed that it satisfied a $1+\varepsilon$ approximation ratio in terms of the kernel $k-$means cost function, related to the guarantee provided by the same algorithm without using the Nyström method.

To the best of our knowledge, no bounds have been established for the Nyström \gls{klr} and this paper analyses the problem. In this study,  theoretical bounds have been established for the estimated parameters and it has been numerically observed that there is a correlation between these bounds and the usual metrics used in the validation of the \gls{klr}.

Regarding Nyström's application to \gls{klr}, \citet{LTL19} proposed combining the standard Nyström method with a fast iterative algorithm, similar to sequential minimal optimisation, to solve binary classification problems. However, the studies conducted by these authors addressed simpler problems compared to those encountered in this research. Specifically, they exclusively dealt with binary classification problems, and the datasets they considered contained fewer than $4,000$ samples.

The standard approach for training the parameters of \gls{klr} is by \gls{pmle}. To solve this model, it is necessary to explore unconstrained optimisation methods. Some of the most commonly used methods in  \gls{ml} include stochastic gradient, nonlinear conjugate gradient, quasi-Newton (in particular, limited memory BFGS, or L-BFGS), Newton, and truncated Newton methods \citep{BCN18}. The canonical algorithm applied in \gls{mnl} is the Newton method \citep{HZM16}, which is implemented as the default method in software packages like \textit{Biogeme} (a widely used Python package for estimating \gls{rum} models). However, Newton's method has limited scalability when dealing with a large number of instances. To overcome this limitation, quasi-Newton methods have been developed, with the limited-memory BFGS algorithm being particularly efficient and effective \citep{Mal02,AnG07}. One technique that has proved successful for \gls{mnl} on large datasets is the truncated Newton method \citep{KKB07}. In particular, \citet{LWK08} proposed the TRON algorithm, which is a truncated Newton method that uses a $L_2$ regularised \gls{mnl} to solve problems with large numbers of both instances and features. \citet{ZaW17} showed that for sparse binary classification problems, TRON is faster than L-BFGS for $L_2$ regularised \gls{mnl}. Building on these results, \citet{MTA11,MaT11} apply a truncated Newton method to \gls{klr}, and show its effectiveness for small to medium-sized datasets. An alternative approach is represented by the work of \citet{KKB07}, who introduced a specialised interior-point method to solve an $L_1$ regularised \gls{mnl}, which exhibits high efficiency across problem sizes.  The efficiency of this method is only marginally slower when compared to conjugate gradients and L-BFGS algorithms applied to an $L_2$ regularised \gls{mnl}.

After this review of the relevant literature, the next section introduces the theoretical basis of multinomial kernel logistic regression.

\section{Multinomial Kernel Logistic Regression}
\label{sect:klr}

In contrast to traditional \gls{rum}, the primary objective of \gls{klr} is not to explain the decision-making process of individual users. Instead, it focuses on developing procedures with minimal classification error. \gls{klr} is often considered as a variant of \gls{svm} \citep{CoV95}. Nonetheless, \gls{klr} not only predicts the classification of an object, such as an individual's choice but also provides probability estimates for membership in each category.

The foundation of \gls{klr} lies in constructing several \emph{latent functions}, which are analogous to the systematic utility functions found in \glspl{rum}. However, unlike \gls{rum}, \gls{klr} treats these latent functions as black boxes, where the modeller does not need to explicitly state the relationship between the feature vector and the utilities. Rather, \gls{klr} operates within the framework of \gls{rkhs}, seeking these latent functions, which are denoted by $f_i: \mathcal X \mapsto \mathbb{R}$ for each alternative $i=1,\ldots,I$, within this function space. The \gls{rkhs} is univocally generated by the  positive-definite real-valued \emph{kernel} on $k:{\cal X} \times {\cal X} \mapsto \mathbb{R}$, and its associated \gls{rkhs} space is denoted by ${\mathcal H}_k$. The family of functions $\{k(\mathbf x, \mathbf  x')\}_{\mathbf{x}' \in \mathcal X}$ constitutes a basis for this vector space. Thus, any element from ${\mathcal H}_k$ can be represented as a linear combination of basis elements, in particular  for  $f_i \in {\mathcal H}_k$.

The utilisation of \gls{rkhs} ensures a flexible and powerful approach to modelling the complex relationships between the feature vectors and latent functions, making \gls{klr} a highly effective classification tool in various applications. Given these latent functions, the classification criterion used for a given sample $\mathbf{x}$ is:
\begin{equation}
\widetilde f(\mathbf{x})= \underset{i\in \{1,\ldots,I\}}{\hbox{\rm arg maximise }} f_i(\mathbf{x}).    
\end{equation}
\noindent where {\rm arg maximise} denotes the alternative $i$ at which the function output values $f_i(\mathbf{x})$ is maximised.
The latent functions $f_i(\mathbf{x})$ serve as an index of how well the sample $ \mathbf{x}$ fits alternative $i$, and the classifier $\widetilde{f}(\mathbf{x})$ assigns the alternative for which this index is highest.

Let $\{\mathbf{x}_{n}, y_n\}_{n=1}^N$ be $N$ pairs of points in ${\cal X} \times {\cal Y}$, where ${\cal X}\subseteq {\mathbb R}^M$ is the input space and $\cal Y$ is the response space, which, in our context, serves as a label space. The kernel-based learning problem can be stated as the following optimisation problem:
\begin{equation}
    \label{eq:04_ajuste_klr_a}
    \underset{\widetilde f}{\hbox{Minimise }} \frac{1}{N} \sum_{n=1}^N  \ell \left (y_{n}, \widetilde f(\mathbf{x}_{n}) \right ) + R ( \widetilde f ),
\end{equation}
\noindent where  $\ell: {\cal Y} \times {\cal Y} \mapsto \mathbb{R}$ is a loss function that  quantifies the discrepancies between the predicted, $\widetilde f(\mathbf{x}_n)$, and the observed labels, $y_n$, and $R (\widetilde f)$ is a regularisation term to enhance the generalisation capacity of the model.

As stated, this optimisation problem can be infinite-dimensional in nature, since it takes place over the  \gls{rkhs}. The representer theorem, \citet{KiG71}, states that a minimiser of the regularised empirical risk  problem, Equation~(\ref{eq:04_ajuste_klr_a}), admits a representation of the form: 
\begin{equation}
    \label{eq:04_representacion}
    f_i( \mathbf x) = \sum_{n=1}^N  \alpha_{ni}k( \mathbf{x}_{n}, \mathbf x)
\end{equation} 
\noindent if and only if there exists a non-decreasing function $h:[0,\infty) \mapsto \mathbb{R}$ for which $R(\widetilde f)=h \left (\| \widetilde f\| \right )$. It can be seen how $f_i( \mathbf x)$ depends on the parameter vector $\boldsymbol{\alpha}_{\cdot i}$ and in turn, the classifier $\widetilde f( \mathbf x)$ depends on all latent functions. It will therefore also depend on this set of parameter vectors which will be gathered in the form of a matrix by writing $\boldsymbol{\alpha}_{\cdot \cdot}=(\alpha_{ni} ) \in \R^{N \times I}$. As a straightforward consequence, the problem represented in Equation~(\ref{eq:04_ajuste_klr_a}) is reduced to an finite-dimensional optimisation problem where the goal is to find a matrix $\boldsymbol{\alpha}_{\cdot \cdot}$ that solves:
\begin{equation}
    \label{eq:04_ajuste_klr_b}
    \underset{\boldsymbol{\alpha_{\cdot \cdot }} \in \R^{N \times  I} }{\hbox{Minimise }} \frac{1}{N} \sum_{n=1}^N  \ell(y_{n}, \widetilde f(\mathbf{x}_{n})) + R(\widetilde f),
\end{equation}

One of the most commonly used regularisation terms, $R(\widetilde f)$, is the \emph{Ridge} penalisation, often referred to as \emph{Tikhonov regularisation}. This regularisation term includes a convex and differentiable quadratic component in the objective function, $\dfrac{\lambda}{2} \sum_{i=1}^I \boldsymbol{\alpha}_{\cdot i}^\top \mathbf{K} \boldsymbol{\alpha}_{\cdot i}$, where $\mathbf{K}$ is the  so-called \emph{kernel} or \emph{Gram matrix}, defined as follows: $\mathbf{K}_{n\,n'}=k(\mathbf{x}_{n},\mathbf{x}_{n'}) \hbox{ for } n,n'=1,\ldots,N;$ and $\boldsymbol{\alpha}_{\cdot i}$ is the $i-$th  column of the matrix $\boldsymbol{\alpha}_{\cdot \cdot}$, i.e. $\boldsymbol{\alpha}_{\cdot \cdot}= (\boldsymbol{\alpha}_{\cdot 1}, \ldots,\boldsymbol{\alpha}_{\cdot I} ) \in \R^{N \times I}$. 

Since $k(\cdot, \cdot)$ is a kernel function, the Gram matrix $\mathbf{K}$ is guaranteed to be symmetric and positive semi-definite, ensuring the convexity of $R(\widetilde f)$. This regularisation term tends to smooth out the model's parameters and reduce their variance, thereby helping to prevent overfitting and enhancing the generalisation capacity of the model \citep{HTF01}. This, in turn, allows the model to perform better on unseen data. 

Tikhonov regularisation is defined as the sum of the norm of the latent functions in \gls{rkhs}: 
\begin{equation}
  R( \widetilde f)= \dfrac{\lambda}{2} \sum_{i=1}^I  \|f_i(\mathbf{x})\|^2_{{\cal H}_K}=\dfrac{\lambda}{2} \sum_{i=1}^I\sum_{n=1}^{N}\sum_{n'=1}^{N}k(\x_{n},\x_{n'}) \alpha_{ni} \alpha_{n'i}= \dfrac{\lambda}{2} \sum_{i=1}^I\boldsymbol{\alpha}_{\cdot i}^\top \mathbf{K} \boldsymbol{\alpha}_{\cdot i},
\end{equation}
\noindent where the regularisation parameter $\lambda$ controls the trade-off between the model's goodness of fit and its complexity. In order to facilitate the later computation of the gradient, the regularisation parameter is divided by $2$ in this expression.

Given the fundamental role of these matrices in subsequent computations, it is crucial to establish the following notation to efficiently represent and manipulate the Gram matrices. Specifically, $\mathbf{k}_n$ denotes the $n$-th column of $\mathbf{K}$, which represents the transformed attribute vector for each decision maker $n=1,\ldots,N$, i.e. the feature vector.

In the case of \gls{klr}, the negative value of the log-likelihood (see \citet{HTF01, ZhH05, OuA18}), is typically used as the loss function, which is commonly referred to as minimising the \gls{rnll} or, equivalently, maximising the \gls{pmle}.  In \gls{klr}, the softmax function is used to provide estimates of the posterior probability of the alternatives. Hence,
\begin{equation}
    \label{eq:04_proba_KLR}
    p_{ni} \left ( \boldsymbol{\alpha}_{\cdot \cdot} \right ) 
    = \frac{ \exp \left (f_{i}(\mathbf{x}_{n}) \right ) }{\sum_{j=1}^{I}\exp \left (f_{j}(\mathbf{x}_{n})\right  )}=
\frac{ \exp \left (  \mathbf{k}_n^\top  \boldsymbol{\alpha}_{\cdot i} \right) }{\sum_{j=1}^{I}\exp\left ( \mathbf{k}_n^\top \boldsymbol{\alpha}_{\cdot j} \right)}.   
\end{equation}

Henceforth, the matrix $\mathbf{y}_{\cdot \cdot}$ is defined to indicate whether individual $n$ has selected alternative $i$ using the one-hot encoding of $y_n$. Specifically, $y_{ni}=1$ if $y_n=i$ and $y_{ni}=0$ otherwise. Consequently, the \gls{rnll} problem used for training the \gls{klr} can be formulated as follows:
\begin{equation}
  \label{eq:04_estimation_KLR_RNLL}
    \underset{\boldsymbol{\alpha}_{\cdot \cdot }}{\hbox{Minimise }} {\cal L}(\boldsymbol{\alpha}_{\cdot \cdot})  =  \underset{\boldsymbol{\alpha}_{\cdot \cdot }}{\hbox{Minimise }} \left( -\frac{1}{N} \sum _{n=1}^N \sum_{i=1}^I y_{ni} \log p_{ni}(\boldsymbol{\alpha}_{\cdot \cdot })+ \frac{\lambda}{2} \sum_{i=1}^I \boldsymbol{\alpha}_{\cdot i} ^\top \mathbf{K}\boldsymbol{\alpha}_{\cdot i} \right).
\end{equation}

\subsection{Preliminaries}

This section introduces the concepts and mathematical notation for analysing both the solution of the \gls{klr} model and the derivation of error bounds for the Nyström \gls{klr} method.

{\bf Vectors and matrices.}
Given a  matrix $\mathbf{A}_{\cdot \cdot } \in \mathbb{R}^{N \times I}$, 
$\mathbf{A}_{n \cdot }$ represents the $n-th$ row of $\mathbf{A}_{\cdot \cdot }$ and $\mathbf{A}_{\cdot i}$ the $i-th$ column.  In some non-confusing contexts it is possible to simplify the notation by eliminating the subscript  $\cdot \,\cdot$. For the special case of the kernel matrix the notation is simplified by using $\mathbf{K}$ for the matrix and $\mathbf{k}_i$ for the $i-th$ column. $\mathbb{I}_R$ is the $R \times R$ identity matrix, $\mathbf{0}$ is a vector or matrix with only zeros, sized accordingly, and $\mathbf{1}$ is a similarly sized vector containing exclusively ones. 

{\bf \gls{svd}}. Let $\mathbf{A} \in \mathbb{R}^{N \times R }$ and let $R$ be the rank of the matrix $\mathbf{A}$, i. e. $R=\hbox{rank}(\mathbf{A})$. A {\bf \gls{svd}} of $\mathbf{A}$ can be written as 
\begin{equation}
\mathbf{A}=\mathbf{U} \boldsymbol{\Sigma} \mathbf{V}^\top = \sum_{j=1}^R \sigma_j \mathbf{U}_{\cdot j} \mathbf{V}_{j\cdot }
\end{equation}
\noindent where $\mathbf{U}, \boldsymbol \Sigma, \mathbf{V}$  are an $N\times R$ column-orthogonal matrix,  an $R \times R$ diagonal matrix with nonnegative entries and an $R \times R$ column-orthogonal matrix, respectively. By convention, take $\sigma_1 \ge \sigma_2\ge \cdots \sigma_R>0$.

In the case of the kernel matrix $\mathbf{K}\in \mathbb{R}^{N \times N}$, due to the fact that $\mathbf{K}$ is a symmetric and semidefinite matrix, $\mathbf{U}=\mathbf{V}$ holds, and it can be expressed by using the \gls{svd}:
\begin{equation}
\label{eq:SVD}
\mathbf{K}= \mathbf{U} \Sigma \mathbf{U}^\top= \sum_{j=1}^R \sigma_j \mathbf{U}_{\cdot j}\mathbf{U}_{\cdot j} ^\top,
\end{equation}

{\bf The Moore-Penrose inverse} of $\mathbf{A}$ is defined by $\mathbf{A}^\dag =\mathbf{U} \boldsymbol{\Sigma}^{-1} \mathbf{V}^\top$, where $\boldsymbol{\Sigma}^{-1}$ is the diagonal matrix formed by the values $\frac{1}{\sigma_1},\cdots,\frac{1}{\sigma_R}$.

{\bf Matrix Norms.} The real-valued matrix norm involved in this paper is the Schatten $p-$norm, which is defined as:
\begin{align*}
   \| \mathbf{A}\|_p:= \left (\sum_{i=1}^{\min\{N,I\}} \sigma_i^p \right )^{1/p}
    \end{align*}
\noindent where $\sigma_i$ are the  eigenvalues of the matrix $\sqrt{\mathbf{A}^\top \mathbf{A}}$. In the case $\mathbf{a}\in \mathbb{R}^{N \times 1}$, the Schatten $p-$ norm $\|\mathbf{a}\|_p$ coincides with the well-known $l_p$ norm $\|\mathbf{a}\|_p= \left (\sum_{i=1}^N |a_i|^p\right )^{\frac{1}{p}}$, with $1\le p < \infty.$ (see \cite{HCH19}).
    
The case $p = 2$ yields the {\sl Frobenius norm},  $p = 1$ yields the {\sl nuclear norm} and $p = \infty$ yields the {\sl spectral norm}, which is the operator norm stemming from the vector 2-norm:
\begin{align*}
\| \mathbf{A}\|_\infty := \underset{\|\mathbf{a}\|_2=1}{\max} \frac{\|\mathbf{A}  \mathbf{a}\|_2}{\|\mathbf{a}\|_2}= \sqrt{\sigma_1(\mathbf{A}^\top\mathbf{A})}
\end{align*}

The Schatten $p-$norm was selected because it is  {\sl sub-multiplicative} and also {\sl unitarily invariant}.

To measure the error in the Taylor approximation of the softmax function,  the matrix norm induced by the vector 2- and $\infty$- norm is used:  
\begin{align*}
    \| \mathbf{A}_{\cdot \cdot }\|_{2, \infty} :=\underset{1\le n \le N}{\max} \|\mathbf{A}_{n\cdot}\|_2
\end{align*}
\noindent 

%{\bf Truncated \gls{svd}}.  The matrix 
%\begin{equation}
%    \mathbf{K}_L=\sum_{j=1}^L \sigma_j \mathbf{U}_{\cdot j}\mathbf{U}_{\cdot j}^\top 
%\end{equation}
%is a rank-L truncated \gls{svd} of $\mathbf{K}$, and is an optimal rank-s approximation to $\mathbf{K}$ when the approximation error is measured in a unitarily invariant norm. En el caso de la norma espectral se tiene $\|\mathbf{K}-\mathbf{K}_L\|_2=\sigma_{L+1}$, tomando el valor $\sigma_{R+1}=0.$

{\bf Linear span}. The span of $\mathbf{A}$ is  the set of all linear combinations of the columns of $\mathbf{A}$ and is denoted by $L(\mathbf{A}).$ The orthogonal complement of $L(\mathbf{A})$ is defined by
\begin{equation}
    L(\mathbf{A})^\perp := \left \{ \mathbf{v}\in \mathbb{R}^{N \times 1}:  \langle \mathbf{A}_{\cdot j},\mathbf{v}\rangle= \mathbf{A}_{\cdot j}^\top  \mathbf{v}=0; \hbox{ for all }  j=1,\cdots,I \right \}.
\end{equation}
The linear space can be expressed as the direct sum $\mathbb{R}^{N \times 1}=L(\mathbf{A}) \oplus L(\mathbf{A})^\perp.$ Moreover, if the linear space  $L(\widehat{\mathbf{A}}) \subseteq L(\mathbf{A})$, then $L(\mathbf{A})^\perp  \subseteq L(\widehat{\mathbf{A}})^\perp$.

{\bf Orthogonal projections}. Let  $ \left \{\mathbf{U}_{\cdot 1}, \cdots,\mathbf{U}_{\cdot R} \right  \}$ be  an orthonormal basis of  the  linear space $L(\mathbf{A})$. The orthogonal projection of a vector $\mathbf{v}$ onto the linear space $L(\mathbf{A})$ is:
\begin{equation}\label{eq:P}
\mathcal{P}^\perp_{L(\mathbf{A})} (\mathbf{a}):=\sum_{j=1}^R \langle \mathbf{a}, \mathbf{U}_{\cdot j}  \rangle \mathbf{U}_{\cdot j} \in L(\mathbf{A})
\end{equation}
\noindent and this expression is independent of the orthonormal basis. 

{\bf Projection matrix.} The projection matrix in the space generated by the columns of $\mathbf{A}$, $L(\mathbf{A})$, is given by $\boldsymbol{\Pi}^\perp_{L(\mathbf{A})}=\mathbf{A}(\mathbf{A}^\top \mathbf{A})^\dag \mathbf{A}^\top$. Then, Equation~(\ref{eq:P}) can be rewritten  as:
\begin{equation}
\mathcal{P}^\perp_{L(\mathbf{A})} (\mathbf{a})= \boldsymbol{\Pi}^\perp_{L(\mathbf{A})} \mathbf{a}
\end{equation}

{\bf Matrix notation.} For notational convenience, express  $\mathbf{B}_{\cdot \cdot} \in L(\mathbf{A})$ if and only if $\mathbf{B}_{\cdot j} \in L(\mathbf{A}) $ for all $j$. Let $\mathcal{P}^\perp_{L(\mathbf{A})} (\mathbf{B}_{\cdot \cdot})$  represent the matrix obtained by applying  $\mathcal{P}^\perp_{L(\mathbf{A})} (\mathbf{B}_{\cdot j})$ to each column individually. Using the projection matrix, $\mathcal{P}^\perp_{L(\mathbf{A})} (\mathbf{B}_{\cdot \cdot})= \boldsymbol{\Pi}^\perp_{L(\mathbf{A})} \mathbf{B}_{\cdot \cdot}.$ is satisfied.

\subsection{Analysis of the solution set of the KLR}

This section characterises the solution set of the \gls{klr} training problem by showing that it is a hyperplane in which the function  $\cal L$ remains constant. This result is deemed significant from two distinct perspectives. On the one hand, it would allow the so-called {\sl restricted training problem} to be formulated, meaning the optimal parameters can be obtained by means of an optimisation problem of dimension $\hbox{rank}(\mathbf{K})\cdot I\le N \cdot I$, suitably correcting the overspecification of the model (infinite solutions), which slows down many of the optimisation algorithms usually employed. On the other hand, it provides a theoretical basis for establishing error bounds in the estimation of optimal parameters when employing a Nyström approximation for \gls{klr}.

\begin{theorem}[Characterisation of the solution set of the  \gls{klr} training problem]
\label{th:soluciones}
The \gls{klr} general solution is expressed as:
\begin{equation}
\boldsymbol{\alpha}^*_{\cdot \cdot}=\boldsymbol{\aleph}^*_{\cdot \cdot} + \boldsymbol{\eta}_{\cdot \cdot}, 
\end{equation}
\noindent where $\boldsymbol{\eta}_{\cdot \cdot} \in L(\mathbf{K})^\perp$,  $\boldsymbol{\aleph}^*_{\cdot \cdot}\in L(\mathbf{K})$, and $\boldsymbol{\aleph}^*_{\cdot \cdot}$ is the unique solution to the problem  (\ref{eq:04_estimation_KLR_RNLL}) restricted to the subspace $L(\mathbf{K})$:
\begin{equation}
\label{eq:04_estimation_KLR_RNLL_restricted}
 \underset{\boldsymbol{\alpha}_{\cdot \cdot } \in L(\mathbf{K}) }{\hbox{Minimise }} {\cal L}(\boldsymbol{\alpha}_{\cdot \cdot}).
\end{equation}
\noindent The problem represented by (\ref{eq:04_estimation_KLR_RNLL_restricted}) will be known as the restricted training problem.

Furthermore, any optimal solution $\boldsymbol{\alpha}^*_{\cdot \cdot}$ satisfies the optimality conditions:
\begin{equation}
\label{eq:KKT}
\mathbf{K} \left [ N \lambda \boldsymbol{\alpha}^*_{\cdot i}   + \mathbf{p}_{\cdot i}(\boldsymbol{\alpha}^*_{\cdot \cdot}) - \boldsymbol{y}_{\cdot i} \right ] =\mathbf{0}, \hspace{1cm} i=1,\cdots I.
\end{equation}

\end{theorem}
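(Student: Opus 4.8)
The plan is to exploit the structural fact that the objective $\mathcal{L}$ depends on the columns $\boldsymbol{\alpha}_{\cdot i}$ only through the products $\mathbf{K}\boldsymbol{\alpha}_{\cdot i}$, both in the logits $\mathbf{k}_n^\top\boldsymbol{\alpha}_{\cdot i}$ and in the Tikhonov term $\boldsymbol{\alpha}_{\cdot i}^\top\mathbf{K}\boldsymbol{\alpha}_{\cdot i}$. The first step is the elementary but decisive observation that, because $\mathbf{K}$ is symmetric, $L(\mathbf{K})^\perp$ coincides with the null space of $\mathbf{K}$; hence $\mathbf{K}\boldsymbol{\eta}_{\cdot i}=\mathbf{0}$ for every $\boldsymbol{\eta}_{\cdot i}\in L(\mathbf{K})^\perp$. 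Using the direct sum $\mathbb{R}^{N\times 1}=L(\mathbf{K})\oplus L(\mathbf{K})^\perp$, I would write each column as $\boldsymbol{\alpha}_{\cdot i}=\boldsymbol{\aleph}_{\cdot i}+\boldsymbol{\eta}_{\cdot i}$ with $\boldsymbol{\aleph}_{\cdot i}\in L(\mathbf{K})$ and $\boldsymbol{\eta}_{\cdot i}\in L(\mathbf{K})^\perp$.

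With this decomposition both ingredients of $\mathcal{L}$ collapse onto the $L(\mathbf{K})$ component. Since $\mathbf{k}_n\in L(\mathbf{K})$ is orthogonal to $\boldsymbol{\eta}_{\cdot i}$, the logits satisfy $\mathbf{k}_n^\top\boldsymbol{\alpha}_{\cdot i}=\mathbf{k}_n^\top\boldsymbol{\aleph}_{\cdot i}$, so every probability $p_{ni}$ and therefore the whole log-likelihood term is unchanged; expanding the regularisation and cancelling the cross and pure-$\boldsymbol{\eta}$ contributions via $\mathbf{K}\boldsymbol{\eta}_{\cdot i}=\mathbf{0}$ gives $\boldsymbol{\alpha}_{\cdot i}^\top\mathbf{K}\boldsymbol{\alpha}_{\cdot i}=\boldsymbol{\aleph}_{\cdot i}^\top\mathbf{K}\boldsymbol{\aleph}_{\cdot i}$. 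Hence $\mathcal{L}(\boldsymbol{\alpha}_{\cdot\cdot})=\mathcal{L}(\boldsymbol{\aleph}_{\cdot\cdot})$, i.e. $\mathcal{L}$ is constant along any translation by an element of $L(\mathbf{K})^\perp$. In particular, a minimiser of the restricted problem (\ref{eq:04_estimation_KLR_RNLL_restricted}) attains the global minimum of $\mathcal{L}$ over all of $\mathbb{R}^{N\times I}$, and adding any $\boldsymbol{\eta}_{\cdot\cdot}\in L(\mathbf{K})^\perp$ produces a further global minimiser of equal objective value.

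To pin down the solution set, I would show that $\mathcal{L}$ restricted to the subspace in which every column lies in $L(\mathbf{K})$ is strictly convex. The negative log-likelihood of the softmax is convex in the logits, hence in $\boldsymbol{\alpha}_{\cdot\cdot}$, contributing a positive semidefinite Hessian; the regularisation contributes the block-diagonal Hessian with blocks $\lambda\mathbf{K}$, which is positive definite once restricted to $L(\mathbf{K})$ precisely because $\mathbf{K}$ is positive definite on its own range. A positive semidefinite plus positive definite Hessian is positive definite on that subspace, yielding strict convexity and therefore a unique restricted minimiser $\boldsymbol{\aleph}^*_{\cdot\cdot}$. The general-solution claim then follows by combining this with the invariance: given any global minimiser $\boldsymbol{\alpha}^*_{\cdot\cdot}=\boldsymbol{\aleph}_{\cdot\cdot}+\boldsymbol{\eta}_{\cdot\cdot}$, the invariance forces $\boldsymbol{\aleph}_{\cdot\cdot}$ to minimise $\mathcal{L}$ over $L(\mathbf{K})$, so $\boldsymbol{\aleph}_{\cdot\cdot}=\boldsymbol{\aleph}^*_{\cdot\cdot}$ and $\boldsymbol{\alpha}^*_{\cdot\cdot}=\boldsymbol{\aleph}^*_{\cdot\cdot}+\boldsymbol{\eta}_{\cdot\cdot}$ with $\boldsymbol{\eta}_{\cdot\cdot}\in L(\mathbf{K})^\perp$. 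The one genuinely delicate point is this strict-convexity argument, where the familiar translation invariance of the softmax (adding a common vector to all $\boldsymbol{\alpha}_{\cdot i}$) must be seen to be harmless: that degeneracy is quashed by the strictly convex regularisation, and one must invoke positive definiteness of $\mathbf{K}$ only on $L(\mathbf{K})$, never on all of $\mathbb{R}^{N}$.

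Finally, I would derive the optimality condition (\ref{eq:KKT}) by a direct differentiation. Differentiating the log-likelihood term with respect to $\boldsymbol{\alpha}_{\cdot i}$ and using $\sum_n p_{ni}\mathbf{k}_n=\mathbf{K}\mathbf{p}_{\cdot i}$, $\sum_n y_{ni}\mathbf{k}_n=\mathbf{K}\boldsymbol{y}_{\cdot i}$ together with the one-hot identity $\sum_i y_{ni}=1$ gives the contribution $\frac{1}{N}\mathbf{K}\left(\mathbf{p}_{\cdot i}-\boldsymbol{y}_{\cdot i}\right)$, while the regularisation contributes $\lambda\mathbf{K}\boldsymbol{\alpha}_{\cdot i}$; factoring out $\frac{1}{N}\mathbf{K}$ yields $\nabla_{\boldsymbol{\alpha}_{\cdot i}}\mathcal{L}=\frac{1}{N}\mathbf{K}\left[N\lambda\boldsymbol{\alpha}_{\cdot i}+\mathbf{p}_{\cdot i}(\boldsymbol{\alpha}_{\cdot\cdot})-\boldsymbol{y}_{\cdot i}\right]$. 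Since $\mathcal{L}$ is convex on all of $\mathbb{R}^{N\times I}$, vanishing of this gradient is both necessary and sufficient for global optimality, which is exactly (\ref{eq:KKT}).
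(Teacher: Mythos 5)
Your proof is correct, and its skeleton coincides with the paper's: decompose along $\mathbb{R}^{N\times 1}=L(\mathbf{K})\oplus L(\mathbf{K})^{\perp}$, show that ${\cal L}$ is invariant under translations in $L(\mathbf{K})^{\perp}$ (the logits because $\mathbf{k}_n\in L(\mathbf{K})$, the Tikhonov term because $\mathbf{K}\boldsymbol{\eta}_{\cdot i}=\mathbf{0}$), conclude that the projection of any global minimiser solves the restricted problem, and finish with uniqueness of the restricted minimiser. You genuinely diverge in two steps, both defensibly. For uniqueness, the paper changes variables through the SVD, writing $\boldsymbol{\alpha}_{\cdot\cdot}=\mathbf{U}\boldsymbol{\Sigma}^{-\frac{1}{2}}\boldsymbol{\beta}_{\cdot\cdot}$ so that the penalty becomes $\frac{\lambda}{2}\sum_{i}\boldsymbol{\beta}_{\cdot i}^{\top}\mathbb{I}_R\boldsymbol{\beta}_{\cdot i}$ and then invokes convexity of the negative log-likelihood from \citet{DaK93} and \citet{HZM16}; you instead argue coordinate-free that the Hessian restricted to the subspace is a positive-semidefinite softmax part plus a block-diagonal $\lambda\mathbf{K}$ part which is positive definite on $L(\mathbf{K})$, hence positive definite overall. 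The two arguments are equivalent — the paper's reparametrisation is exactly a linear bijection of $L(\mathbf{K})$ diagonalising your quadratic form — but yours avoids coordinates and makes explicit a point the paper leaves implicit, namely that the softmax translation degeneracy (a common shift $\boldsymbol{\beta}\in L(\mathbf{K})$ added to every column) lies inside the restricted subspace and is suppressed only by the regulariser, which requires $\lambda>0$ just as in the paper. For the optimality conditions~(\ref{eq:KKT}), the paper does not derive them at all, citing \citet{MartinBaos23} and \citet{OuA18}, whereas you compute the gradient $\frac{1}{N}\mathbf{K}\left[N\lambda\boldsymbol{\alpha}_{\cdot i}+\mathbf{p}_{\cdot i}-\boldsymbol{y}_{\cdot i}\right]$ directly, recovering Equation~(\ref{eq:05_GKLR-gradient-matrix}), and your global convexity observation even upgrades stationarity from a necessary to a necessary-and-sufficient condition, slightly more than the theorem claims. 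A last cosmetic difference: the paper's Lemma~\ref{App:lema1} establishes invariance of the probabilities over the larger space $V+L(\mathbf{K})^{\perp}$, needed elsewhere for Remark~\ref{remk:soluciones}, while you prove only the $L(\mathbf{K})^{\perp}$ invariance — which is precisely all this theorem uses.
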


The proof of this theorem is given in  \ref{App:overspecification}.

\begin{remark}
\label{remk:soluciones}
The training problem formulation commonly found in the literature  assumes that the model's parameters are overspecified (see \cite{Karsmakers2007}, \cite{OuA18}, \cite{ZhH05}). To expedite convergence of \gls{klr}, these parameters are generally set to zero for one alternative (typically the last one), $\boldsymbol{\alpha}_{\cdot I}=\mathbf{0}$. However, as illustrated in Theorem \ref{th:soluciones}, this approach is not valid  in general because $\boldsymbol{\alpha}^*_{\cdot \cdot} \in \boldsymbol{\Theta}=\left \{\boldsymbol{\alpha }_{\cdot \cdot} \in \mathbb{R}^{N \times I} :  {\alpha}_{\cdot I}=\mathbf{0} \right \}$ is not equivalent to $\boldsymbol{\alpha}^*_{\cdot \cdot}\in L(\mathbf{K})$ and it leads to a suboptimal solution.
Consequently, in this study, the training problem formulation given by Equation~(\ref{eq:04_estimation_KLR_RNLL}) is adopted instead of the restriction mainly used in the literature $\boldsymbol{\alpha}_{\cdot \cdot}\in \boldsymbol{\Theta}$.
\end{remark}

\section{Nyström-based Multinomial Kernel Logistic Regression} % Theoretical framework
\label{sect:methodology}

Firstly, Section \ref{sec:rev_Nystrom} provides a comprehensive overview of the Nyström-based approximations.  After that, Section \ref{sec:Column} introduces the column selection methods tested in the computational experimentation. Finally, Section \ref{sec:cotas} contains the second most important theoretical contribution, concerning the error bounds when using the Nyström approximation  when applying \gls{klr}.

\subsection{Matrix sketching and the Nyström Method}
\label{sec:rev_Nystrom}

The Nyström method exploits the assumption that certain individuals (or observations) in the kernel matrix contain redundant information (i.e. rank$(\mathbf{K})<N$), and thus it is convenient to avoid duplicating this information. Then, the goal is to select the most representative columns  with the so-called {\sl sketching matrix}. Formally, given the kernel matrix $\mathbf{K}\in \mathbb{R}^{N \times N}$, it is called $\mathbf{C}=\mathbf{K}\mathbf{P} \in \mathbb{R}^{N \times C}$ (typically $C \ll N$) as a {\sl sketch} of $\mathbf{K}$ and $\mathbf{P} \in \mathbf{R}^{N \times C}$ is a {\sl sketching matrix}. The columns of $\mathbf{C}$ typically comprise a rescaled subset of the columns of $\mathbf{K}$; the former type of sketching is called {\sl column selection}.  An alternative approach involves considering random linear combinations of the columns of matrix $\mathbf{K}$, commonly referred to as a {\sl random projection.}

Given a kernel  matrix $\mathbf{K} \in \mathbb{R}^{N \times N}$ and  a sketching matrix $\mathbf{P} \in  \mathbb{R}^{N \times C}$, the Nyström method approximates $\mathbf{K}$  as follows:
\begin{equation}
\label{eq:Nystrom}
    \widehat{\mathbf{K}}= \mathbf{C} \mathbf{W}^\dag  \mathbf{C}^\top,
\end{equation}
\noindent where $\mathbf{W}=\mathbf{P}^\top \mathbf{K} \mathbf{P}$  and $^\dag$ denotes the Moore-Penrose pseudoinverse. The first thing to note about Equation~(\ref{eq:Nystrom}) is that the columns of the matrix $\hat{\mathbf{K}}$ are a linear combination of the columns of $\mathbf{C}$, so that $L(\mathbf{\widehat{K}}) \subseteq L(\mathbf{C})$. Furthermore, $\hbox{rank}(\mathbf{\widehat{K}}) = \hbox{rank}(\mathbf{C})$\footnote{First calculate a bound for the rank of the pseudoinverse of $\mathbf{W}$,
$
 \hbox{rank}(\mathbf{W}^\dag)=
 \hbox{rank}(\mathbf{W})=
 \hbox{rank}((\mathbf{K}^{1/2}\mathbf{P})^\top (\mathbf{K}^{1/2}\mathbf{P}) )\ge 
  \hbox{rank}((\mathbf{K}^{1/2}\mathbf{P})^\top \mathbf{K} (\mathbf{K}^{1/2}\mathbf{P}) )=
   \hbox{rank}(\mathbf{C}^\top \mathbf{C})=
    \hbox{rank}(\mathbf{C}).
 $
 
It follows that:
$
\hbox{rank} (\widehat{\mathbf{K}})=
\hbox{rank} (\mathbf{D}^\top\mathbf{D} )=
\hbox{rank}(\mathbf{D}^\top)
$
\noindent where $\mathbf{D}^\top =\mathbf{C} (\mathbf{W}^\dag)^{1/2}$. On the other hand, $\mathbf{W}^\dag$ is symmetric and positive semidefinite, therefore 
 $
  \hbox{rank}(\mathbf{W}^\dag)=
   \hbox{rank}( (\mathbf{W}^\dag) ^{1/2}   (\mathbf{W}^\dag)^{1/2}) = 
    \hbox{rank}( (\mathbf{W}^\dag) ^{1/2} ).
  $
It then follows that $ \hbox{rank}( (\mathbf{W}^\dag) ^{1/2}) \ge \hbox{rank} (\mathbf{C})$. If it is observed that $\mathbf{D}^\top$ is formed by more than rank$(\mathbf{C})$ independent linear combinations of all the columns of $\mathbf{C}$, it must therefore have a rank greater than or equal to rank$(\mathbf{C})$.  Since the maximum rank it can have is $\hbox{rank}(\mathbf{C})$ it follows that rank$(\widehat{\mathbf{K}}) = \hbox{rank}(\mathbf{C})$.}
 and from there it follows that
$
L(\widehat{\mathbf{K}})=L(\mathbf{C}).
$

Then, the Nyström matrix can be written:
\begin{equation}
\label{eq:projectionKernel}
\widehat{\mathbf{K}}= 
\mathbf{K}^{1/2}
\left \{
\mathbf{K}^{1/2} \mathbf{P} 
\left [[\mathbf{K}^{1/2} \mathbf{P}]^\top  \mathbf{K}^{1/2} \mathbf{P}
\right  ]^\dag   
[\mathbf{K}^{1/2} \mathbf{P}] ^\top 
\right \}\mathbf{K}^{1/2}.
\end{equation}

Identifying the projection matrix in Equation~(\ref{eq:projectionKernel}), the residual is expressed as:
\begin{equation}
\label{eq:residuo}
    \mathbf{K}- \widehat{\mathbf{K}}=
    \mathbf{K}^{1/2}
\left \{
\mathbb{I}_N-
\boldsymbol{\Pi}^\perp_{L(\mathbf{K}^{1/2} \mathbf{P})}
\right \}\mathbf{K}^{1/2}.
\end{equation}

The Nyström approximation $\widehat{\mathbf{K}}$ is the best possible $C-$ approximation to $\mathbf{K}$ given the information $\mathbf{C}$.  In more detail, for all symmetric matrices $\widehat{\mathbf{K}}$  spanned by the columns of $\mathbf{C}$ with a positive semi-definite residual matrix $\mathbf{K} - \widehat{\mathbf{K}}$, the Nyström approximation has the smallest error as measured by either any unitary  invariant norm, such as the Schatten$-p$ norm. Interested readers can find a proof of this statement in \citet{Epp22}.

\subsection{Column selection methods}
\label{sec:Column}

The previous theoretical result indicates that once the $\mathbf{C}$ matrix is given, the Nyström approximation is optimal. It is for this reason that the effort is expended on finding optimal strategies to select the sketching matrix $\mathbf{P}$. The sketching matrix for best $C-$rank approximation is given by   ${\mathbf P} = (\mathbf{U}_{\cdot 1},\ldots,\mathbf{U}_{\cdot C}) {\rm´diag}(\frac{1}{\sigma_1},\ldots,\frac{1}{\sigma_C})$ where $\mathbf{U}_{\cdot j}$ are the first elements of the orthonormal basis used in the \gls{svd} decomposition given in Equation~(\ref{eq:SVD}), assuming that ${\rm rank (\mathbf C)}=C$. This involves computing the $C$ eigenvectors corresponding to the largest eigenvalues $\sigma_1, \cdots, \sigma_C$. Regrettably, achieving high-precision calculation of these eigenvectors is computationally demanding. However, a viable alternative is to resort to column selection methods.

Columns selection obtains $\mathbf{C} \in \mathbb{R}^{N \times C}$ using a randomly sampled and rescaled subset of the columns of $\mathbf{K} \in \mathbb{R}^{N \times N}$.  The columns of the sketch are selected identically and independently as follows: Let $q_1,\cdots q_N \in (0,1)$, with $q_1+\cdots+q_N=1$, be the sampling probabilities associated with the columns of $\mathbf{K}$, each column of $\mathbf{C}$ is randomly sampled from the columns of $\mathbf{K}$ according to the sampling probabilities and rescaled by $\frac{1}{\sqrt{C q_i}}$, where $q_i$ is the sampling probability of column $i$. Three column selection techniques are detailed below.

\begin{enumerate}
\item {\sl Uniform sampling} is column sampling with $q_1=\cdots=q_N=\frac{1}{N}$. 
The simplest method involves randomly drawing $C$ samples from the matrix $\mathbf{K}$ and has been named as the \textit{Nyström KLR} method.

\item {\sl Leverage score sampling} \citep{CKR22} takes  $q_i=\frac{l_i}{\sum_{n=1}^N l_n}$ where  the ridge-leverage score  $l_i$ for any $i$ is expressed by:
\begin{equation}
    l_i= [\mathbf{K}[\mathbf{K}+\mu \mathbb{I}_N]^{-1}]_{ii}, \hbox{ where } \mu >0.
\end{equation}
The main limitation of this method is that it requires inverting the matrix  $(\mathbf{K}+\mu \mathbb{I}_N)$, resulting in a complexity of ${\cal O}(N^3)$. It is therefore necessary to approximate these scores. Two approaches have been successfully applied in the literature, both with a theoretical computational complexity of ${\cal O}(N \cdot S^2)$. The first one is based on the divide-and-conquer ridge-leverage \citep{CKR22}, which, instead of computing the leverage scores using all the data, divides the dataset into $Q$ disjoint small subsets of size $S$, and then computes the leverage scores for each subset. This method can be applied to the Nyström \gls{klr} and is named as \emph{DAC ridge-leverage Nyström KLR}. The second method is based on a recursive implementation of the ridge-leverage scores \citep{MuM17}. At each step, the ridge-leverage scores are recursively improved by approximating the kernel matrix using landmark points sampled from the previously computed scores. The implementation of this method with Nyström \gls{klr} will be named as \emph{RLS Nyström KLR}. In these methods, column selection corresponds to a sketching matrix $\mathbf{P}\in \mathbb{R}^{N \times C}$ that has exactly one non-zero entry in each column, whose position  corresponds to the index of the column selected from $\mathbf{K}$. 

\item {\sl k-means sampling}. 
\citet{OgG17, KMT12,ZTK08} study the effectiveness of kernel $k-$means samples as landmarks in the Nyström method for low-rank approximation of kernel matrices. In this method the $N$ columns of the kernel matrix $\mathbf{K}$ are grouped into $C$ groups and each of these groups is replaced by its mean ({\sl centroid}). 
 Among clustering methods with a fixed number of clusters, the $k-$means algorithm stands out as a widely used method. However, this technique can also be computationally expensive when dealing with large datasets.
For instance, the complexity of the kernel $k-$means clustering as applied to problem at hand ($k=C$) is ${\cal O}(N^3+N^2\cdot M+T\cdot N^2 \cdot C)$, where $M$ is the dimension of the instance space and $T$ the number of iterations completed. This complexity makes it unaffordable in the context of large data sets. A viable solution was proposed by \citet{WGM19} through the introduction of Nyström methods that would obtain a bound on computational cost ${\cal O}(N\cdot C^2+N\cdot M \cdot C+T\cdot N\cdot S\cdot C)$ with $S\ge C$. Alternatively, in this study, $k-$means are applied in the original space ${\cal X}$, leading to a computational cost ${\cal O}(T\cdot N\cdot C^2)$, and in addition, the mini-batch $k$-means method is used to reduce this  computational burden, as proposed by \citet{Scu10}. \citet{OgG17} gives a theoretical justification for why  selecting centroids obtained using $k-$means clustering in the instance space generates a good estimate of the kernel $k-$means centroids. For the latter centroids,  \citet{OgG17} show that under certain hypothesis they span the same subspace as the top $(C-1)$  singular vectors of a factorisation of the kernel matrix and, thus, define a good low-rank approximation of the kernel matrix. Once the cluster analysis has been performed, the most direct way to choose the sketching matrix $\mathbf{P}$ to weight the corresponding columns of $\mathbf{K}$, would be the same way the centroids were calculated in the original space ${\cal X}$. However, a novel way to do this is introduced, where the original data are  enlarged with the centroids of the $C$ groups. In such cases, it can be considered that the samples in the dataset comprise both the original individuals, denoted by the letter $X$, and the  created individuals associated with the centroids, denoted by $Y$. Then, an extended kernel matrix could be written in block form:\[
   \mathbf{K}^*=\left(\begin{array}{@{}c|c@{}}
    \begin{matrix}
    \mathbf{K}_{YY}
    \end{matrix}
    &\begin{matrix}
    \mathbf{K}_{YX}
    \end{matrix} \\
    \hline
    \begin{matrix}
    \mathbf{K}_{XY}
    \end{matrix} &
    \begin{matrix}
    \mathbf{K}
    \end{matrix}
    \end{array}\right).
\]

The original kernel matrix  $\mathbf{K}$ can be approximated using the Nyström approach on the extended kernel matrix $\mathbf{K}^*$ for $\mathbf{P}=[\mathbb{I}_C|\mathbf{0}]^\top $ as follows:
\begin{equation}
    \widehat{\mathbf{K}}= \mathbf{K}_{XY}  (\mathbf{K}_{YY} ) ^\dag  \mathbf{K}_{YX}.
\end{equation}

\end{enumerate}

\subsection{Nyström KLR error bounds}
\label{sec:cotas}

Now, the error incurred by using Nyström \gls{klr} instead of the  \gls{klr} is scrutinised by examining the norm of the difference between their optimal parameters: $\|\boldsymbol{\alpha}^*_{\cdot i}- \widehat{\boldsymbol{\alpha}}^*_{\cdot i}\|_p$. As a consequence of Theorem \ref{th:soluciones}, it may occur that $\|\boldsymbol{\alpha}^*_{\cdot i}- \widehat{\boldsymbol{\alpha}}^*_{\cdot i}\|_p \rightarrow \infty$ while $\|\mathbf{p}_{\cdot i}(\boldsymbol{\alpha}^*_{\cdot \cdot})- \mathbf{p}_{\cdot i}(\widehat{\boldsymbol{\alpha}}^*_{\cdot \cdot})\|_p \rightarrow 0$.  That is, both methods could give the same predictions with completely different parameters.
This forces the analysis to focus on comparing the respective solutions of the restricted training problems, i. e. by bounding $\|\boldsymbol{\aleph}^*_{\cdot i}- \widehat{\boldsymbol{\aleph}}^*_{\cdot i}\|_p$.

\begin{theorem}[Error bounds in Nyström \gls{klr}]
\label{th:cota1}
Let $\widehat{\mathbf{K}}$ be the Nyström approximation for the Kernel matrix $\mathbf{K}$ generated using the sketching matrix $\mathbf{P}$ and $\mathbf{C}=\mathbf{K}\mathbf{P}$. Let $\boldsymbol{\aleph}^*_{\cdot i}$ and $\widehat{\boldsymbol{\aleph}^*_{\cdot i}}$ be the optimal solution for \gls{klr}  and the optimal solution for Nyströnm-\gls{klr} for the kernel matrix $\widehat{\mathbf{K}}$,  respectively. 

Thus, $\widehat{\boldsymbol{\aleph}^*_{\cdot i}}={\mathcal P}^\perp_{L(\widehat{\mathbf{K}})} (\boldsymbol{\aleph}^*_{\cdot i})$ and an error bound is given by the following expression:

\begin{equation}
\| \boldsymbol{\aleph}^*_{\cdot i}- \widehat{\boldsymbol{\aleph}^*_{\cdot i}}  \|_p \le  \| \boldsymbol{\aleph}^{*}_{\cdot i} \|_p  \left \| \left [\mathbf{K}^2- \widehat{\mathbf{K}^2} \right ]^{1/2}\mathbf{K}^{\dag}\right \|_p ,
\end{equation} 
and also:
\begin{equation*}
      \|\boldsymbol{\aleph}^*_{\cdot i}\|_p\le
      \frac{2}{N\lambda}\| \mathbf{1} \|_p,
\end{equation*}
\noindent where $\mathbf{1}\in \mathbb{R}^N$ is a vector of 1s.
\end{theorem}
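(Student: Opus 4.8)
The plan is to prove the statement in two logically separate stages: first the structural identity $\widehat{\boldsymbol{\aleph}^*_{\cdot i}} = \mathcal{P}^\perp_{L(\widehat{\mathbf{K}})}(\boldsymbol{\aleph}^*_{\cdot i})$, and then the two norm estimates, the first of which rewrites the error as a projection residual onto $L(\widehat{\mathbf{K}})^\perp$ and the second of which controls $\|\boldsymbol{\aleph}^*_{\cdot i}\|_p$ directly from the optimality system. Throughout I would use the inclusion $L(\widehat{\mathbf{K}}) = L(\mathbf{C}) \subseteq L(\mathbf{K})$ noted after Equation~(\ref{eq:Nystrom}), so that the projection is well defined, together with $\boldsymbol{\aleph}^*_{\cdot i}\in L(\mathbf{K})$ from Theorem~\ref{th:soluciones}.

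For the structural identity I would argue by uniqueness. Applying Theorem~\ref{th:soluciones} to the kernel $\widehat{\mathbf{K}}$, the Nyström restricted problem has a unique minimiser in $L(\widehat{\mathbf{K}})$, characterised by the analogue of the optimality system~(\ref{eq:KKT}) with $\mathbf{K}$ replaced by $\widehat{\mathbf{K}}$. It then suffices to check that the candidate $\boldsymbol{\beta}_{\cdot i} := \boldsymbol{\Pi}^\perp_{L(\widehat{\mathbf{K}})}\boldsymbol{\aleph}^*_{\cdot i}$ lies in $L(\widehat{\mathbf{K}})$, which is immediate, and satisfies that system; here the identity $\widehat{\mathbf{K}}\,\boldsymbol{\Pi}^\perp_{L(\widehat{\mathbf{K}})} = \widehat{\mathbf{K}}$ (valid because $\widehat{\mathbf{K}}$ is symmetric with column space $L(\widehat{\mathbf{K}})$) collapses the projection inside the stationarity condition, and uniqueness forces $\widehat{\boldsymbol{\aleph}^*_{\cdot i}} = \boldsymbol{\beta}_{\cdot i}$. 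I expect this to be the main obstacle: the softmax term $\mathbf{p}_{\cdot i}$ couples the columns nonlinearly through the latent values, so showing that the residual $N\lambda\boldsymbol{\beta}_{\cdot i} + \mathbf{p}_{\cdot i} - \boldsymbol{y}_{\cdot i}$ is annihilated by $\widehat{\mathbf{K}}$ — i.e. reconciling the probability vectors of the two problems — is the delicate step where the precise interplay between $\widehat{\mathbf{K}}$ and the subspace $L(\widehat{\mathbf{K}})$ must be exploited.

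Granting the identity, the error is a projection residual, $\boldsymbol{\aleph}^*_{\cdot i} - \widehat{\boldsymbol{\aleph}^*_{\cdot i}} = \left(\mathbb{I}_N - \boldsymbol{\Pi}^\perp_{L(\widehat{\mathbf{K}})}\right)\boldsymbol{\aleph}^*_{\cdot i}$. Since $\boldsymbol{\aleph}^*_{\cdot i}\in L(\mathbf{K})$, I would insert $\boldsymbol{\aleph}^*_{\cdot i} = \mathbf{K}\mathbf{K}^{\dag}\boldsymbol{\aleph}^*_{\cdot i}$ and factor the residual as $\mathbf{B}\mathbf{K}^{\dag}\boldsymbol{\aleph}^*_{\cdot i}$ with $\mathbf{B} := (\mathbb{I}_N - \boldsymbol{\Pi}^\perp_{L(\widehat{\mathbf{K}})})\mathbf{K}$. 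The engine is the residual identity for the Nyström approximation of $\mathbf{K}^2$, namely $\mathbf{K}^2 - \widehat{\mathbf{K}^2} = \mathbf{K}(\mathbb{I}_N - \boldsymbol{\Pi}^\perp_{L(\widehat{\mathbf{K}})})\mathbf{K} = \mathbf{B}^\top\mathbf{B}$, which follows from Equations~(\ref{eq:projectionKernel}) and~(\ref{eq:residuo}) read with square-root $\mathbf{K}$ and the fact that $L(\mathbf{K}\mathbf{P}) = L(\widehat{\mathbf{K}})$. Hence $[\mathbf{K}^2 - \widehat{\mathbf{K}^2}]^{1/2}\mathbf{K}^{\dag}$ and $\mathbf{B}\mathbf{K}^{\dag}$ share the same Gram matrix $\mathbf{K}^{\dag}\mathbf{B}^\top\mathbf{B}\,\mathbf{K}^{\dag}$ (using symmetry of $\mathbf{K}^{\dag}$ and of $[\mathbf{K}^2 - \widehat{\mathbf{K}^2}]^{1/2}$), so they have the same singular values and the same Schatten $p$-norm. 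Submultiplicativity of the Schatten norm then gives $\|\boldsymbol{\aleph}^*_{\cdot i} - \widehat{\boldsymbol{\aleph}^*_{\cdot i}}\|_p = \|\mathbf{B}\mathbf{K}^{\dag}\boldsymbol{\aleph}^*_{\cdot i}\|_p \le \|[\mathbf{K}^2 - \widehat{\mathbf{K}^2}]^{1/2}\mathbf{K}^{\dag}\|_p\,\|\boldsymbol{\aleph}^*_{\cdot i}\|_p$, which is the first estimate.

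For the second estimate I would return to the optimality system~(\ref{eq:KKT}). Because $\boldsymbol{\aleph}^*_{\cdot i}\in L(\mathbf{K})$, the condition $\mathbf{K}[N\lambda\boldsymbol{\aleph}^*_{\cdot i} + \mathbf{p}_{\cdot i} - \boldsymbol{y}_{\cdot i}] = \mathbf{0}$ places the bracket in $L(\mathbf{K})^\perp$; projecting onto $L(\mathbf{K})$ and using $\boldsymbol{\Pi}^\perp_{L(\mathbf{K})}\boldsymbol{\aleph}^*_{\cdot i} = \boldsymbol{\aleph}^*_{\cdot i}$ yields the closed form $\boldsymbol{\aleph}^*_{\cdot i} = -\tfrac{1}{N\lambda}\boldsymbol{\Pi}^\perp_{L(\mathbf{K})}(\mathbf{p}_{\cdot i} - \boldsymbol{y}_{\cdot i})$. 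Taking norms, using that the orthogonal projection is norm non-increasing, and then the triangle inequality with the entrywise bounds $0\le p_{ni}\le 1$ and $0\le y_{ni}\le 1$ (so that $\|\mathbf{p}_{\cdot i}\|_p\le\|\mathbf{1}\|_p$ and $\|\boldsymbol{y}_{\cdot i}\|_p\le\|\mathbf{1}\|_p$) delivers $\|\boldsymbol{\aleph}^*_{\cdot i}\|_p \le \tfrac{2}{N\lambda}\|\mathbf{1}\|_p$, the factor $2$ arising precisely from splitting $\|\mathbf{p}_{\cdot i} - \boldsymbol{y}_{\cdot i}\|_p$ by the triangle inequality rather than bounding the difference entrywise.
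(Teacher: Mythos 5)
Your overall architecture matches the paper's: both proofs establish the structural identity $\widehat{\boldsymbol{\aleph}}^*_{\cdot i}={\mathcal P}^\perp_{L(\widehat{\mathbf{K}})}(\boldsymbol{\aleph}^*_{\cdot i})$ first and then reduce the error to a projection residual. Your Stage~3 is, step for step, the paper's closing argument (project the optimality system~(\ref{eq:KKT}) onto $L(\mathbf{K})$, use non-expansiveness of the projection, split by the triangle inequality, bound $\mathbf{p}_{\cdot i}$ and $\mathbf{y}_{\cdot i}$ entrywise by $\mathbf{1}$). Your Stage~2 is also essentially the paper's computation, packaged slightly more cleanly: where the paper inserts $\mathbf{U}\mathbf{U}^\top$ factors from the \gls{svd} and multiplies Equation~(\ref{eq:proyeccion_nystrom}) by $\mathbf{K}^{\dag}$ on both sides, you factor $\mathbf{K}^2-\widehat{\mathbf{K}^2}=\mathbf{B}^\top\mathbf{B}$ with $\mathbf{B}=(\mathbb{I}_N-\boldsymbol{\Pi}^\perp_{L(\widehat{\mathbf{K}})})\mathbf{K}$ and compare Gram matrices; this has the minor added benefit of exhibiting $\mathbf{K}^2-\widehat{\mathbf{K}^2}$ as positive semi-definite, which justifies the square root. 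These two stages are fine.

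The genuine gap is in Stage~1, and it sits exactly where you flag it. You reduce the structural identity to checking that the candidate $\boldsymbol{\beta}_{\cdot i}=\boldsymbol{\Pi}^\perp_{L(\widehat{\mathbf{K}})}\boldsymbol{\aleph}^*_{\cdot i}$ satisfies the Nyström stationarity system, observe that $\widehat{\mathbf{K}}\boldsymbol{\Pi}^\perp_{L(\widehat{\mathbf{K}})}=\widehat{\mathbf{K}}$ collapses the linear term, and then declare the reconciliation of the probability vectors to be "the delicate step where the precise interplay must be exploited" --- but you never carry that step out, and it does not follow from anything you list. The obstruction is concrete: $\widehat{\mathbf{p}}_{\cdot i}(\boldsymbol{\beta}_{\cdot \cdot})$ is built from the latent matrix $\widehat{\mathbf{K}}\boldsymbol{\beta}_{\cdot \cdot}=\widehat{\mathbf{K}}\boldsymbol{\aleph}^*_{\cdot \cdot}$, whereas $\mathbf{p}_{\cdot i}(\boldsymbol{\aleph}^*_{\cdot \cdot})$ is built from $\mathbf{K}\boldsymbol{\aleph}^*_{\cdot \cdot}$, and the difference $(\mathbf{K}-\widehat{\mathbf{K}})\boldsymbol{\aleph}^*_{\cdot \cdot}$ does not vanish merely because $\boldsymbol{\aleph}^*_{\cdot \cdot}\in L(\mathbf{K})$. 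So "uniqueness forces the identity" remains a plan rather than a proof, and the identity is precisely what your Stage~2 is built on.

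The paper closes this gap with two specific ingredients absent from your proposal. First, the orthogonality identity of Equation~(\ref{eq:clave}), $\widehat{\boldsymbol{\chi}}^{*\top}_{\cdot \cdot}[\mathbf{K}-\widehat{\mathbf{K}}]=\mathbf{0}$, proved by writing the projected candidate as $\widehat{\boldsymbol{\chi}}^{*}_{\cdot \cdot}=\mathbf{K}\mathbf{P}\mathbf{D}$ and exploiting the projection form of the Nyström residual, Equation~(\ref{eq:residuo}), to show both cross terms vanish. Second, Lemma~\ref{lemma1}, a first-order Taylor expansion of the softmax in which both the first-order correction and the remainder are controlled by exactly the quantity $(\widehat{\mathbf{K}}-\mathbf{K})^\top\widehat{\boldsymbol{\chi}}^{*}_{\cdot \cdot}$ that the identity annihilates; the expansion therefore becomes exact and yields $\widehat{\mathbf{p}}_{\cdot i}(\widehat{\boldsymbol{\chi}}^{*}_{\cdot \cdot})=\mathbf{p}_{\cdot i}(\boldsymbol{\aleph}^*_{\cdot \cdot})$, after which the paper verifies the Nyström optimality conditions~(\ref{eq:teo2_KKT_6}) and invokes uniqueness, as you intended. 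Note also a smaller unstated step your verification needs: that the full-\gls{klr} residual $\mathbf{z}_i=N\lambda\boldsymbol{\aleph}^*_{\cdot i}+\mathbf{p}_{\cdot i}(\boldsymbol{\aleph}^*_{\cdot \cdot})-\mathbf{y}_{\cdot i}$ is annihilated by $\widehat{\mathbf{K}}$, which the paper obtains from $\mathbf{z}_i\in L(\mathbf{K})^\perp\subseteq L(\widehat{\mathbf{K}})^\perp$, itself a consequence of $L(\widehat{\mathbf{K}})\subseteq L(\mathbf{K})$. Without the linearisation lemma and the orthogonality identity, your Stage~1 cannot be completed as written.
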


The proof of this theorem is given in the \ref{sec:demostracion_cotas}. The norm $\left \|\left [\mathbf{K}^2- \widehat{\mathbf{K}^2} \right ]^{1/2} \right \|_p$ depends on spectral decay of $\mathbf{K}$ and Theorem \ref{th:cota1} quantifies  the performance of the Nyström \gls{klr} method based on this value.

\section{Training Algorithms for the Nyström-based  Kernel Logistic Regression} % Theoretical framework
\label{sect:algortihms}
First, Section~\ref{sect:gradient_klr} begins by elaborating on the process of calculating the gradient and the loss function of the Nyström \gls{klr} method and delving into the computational complexity it entails.  Lastly, Section~\ref{sect:line-search-methods} introduces a detailed discussion on the general theory of optimisation algorithms that can be effectively applied to efficiently estimate Nyström \gls{klr}.

\subsection{Calculating the gradient and loss function in Nyström-based KLR}
\label{sect:gradient_klr}

A comprehensive review of optimisation methods for large-scale \gls{ml} was conducted by \citet{BCN18}, emphasising the crucial role played by numerical optimisation algorithms, particularly the \gls{sgd} method, within the realm of \gls{ml}. The application of \gls{sgd} to specific classes of \gls{ml} models relies on the premise that the objective function used in training possesses a separable structure. This allows the objective function to be decomposed into the sum of individual loss functions for each sample $n=1,\ldots,N$ in the dataset, as depicted below:
\begin{equation}
    \mathcal{L}(\boldsymbol{\alpha}_{\cdot \cdot }) = \frac{1}{N}\sum_{n=1}^N \mathcal{L}_n(\boldsymbol{\alpha}_{\cdot \cdot }).
\end{equation}
Accordingly, the gradient of the objective function can also be expressed in a separable form concerning the individual samples:
\begin{equation}
    \nabla \mathcal{L}(\boldsymbol{\alpha}_{\cdot \cdot }) = \frac{1}{N}\sum_{n=1}^N \nabla \mathcal{L}_n(\boldsymbol{\alpha}_{\cdot \cdot }).
\end{equation}

The computational cost associated with model training is driven by the number of parameters and the dataset size. To mitigate the impact of the latter factor, a common approach employed in the literature is to randomly select a subset of samples at each iteration to approximate the gradient. Each of these sets is known as a \emph{mini-batch}. This approximation is feasible due to the separable nature of the gradient across the dataset. This characteristic is exploited by the mini-batch \gls{sgd}, which provides an efficient way of optimising models on large-scale datasets. 

The penalty term of the \gls{rnll}, as seen in Equation~(\ref{eq:04_estimation_KLR_RNLL}), disrupts the separable structure of the problem. For this reason, instead of the expression found in the existing literature, a separable expression was derived for the gradient of \gls{klr}. This expression is as follows:
\begin{equation}
    \label{eq:05_GKLR-gradient}
\nabla_{\boldsymbol{\alpha}} \mathcal{L}= \frac{1}{N}  \sum_{n=1}^N \left(N  \lambda\boldsymbol{\alpha}_{n\cdot } +(\mathbf{p}_{n\cdot}- \mathbf{y}_{n\cdot} ) \right ) \otimes \mathbf{k}_n ,
\end{equation}
\noindent where 
 $\otimes$ denotes the Kronecker product, used to obtain a compact notation in the gradient. This matrix operator is defined as follows. Let $\mathbf{A}\in \mathbb{R}^{c\times d}$ and $\mathbf{B}\in \mathbb{R}^{e\times f}$, then:
\begin{equation}
    \mathbf{A} \otimes \mathbf{B} = 
    \begin{bmatrix}
        A_{11} \mathbf{B} & & A_{1d} \mathbf{B} \\
        & \ddots & \\
        A_{c1}\mathbf{B}& & A_{cd} \mathbf{B}
    \end{bmatrix} \in \mathbb{R}^{c \cdot e\times d \cdot f}.
\end{equation} 
A detailed explanation of the derivation of this gradient can be found in \citet{MartinBaos23}.

From a computational point of view, it is convenient to work with matrices rather than vectors in the calculation of the gradient and the evaluation of the objective function, 
which is why Equation~(\ref{eq:05_GKLR-gradient}) is reshaped into a matrix:
\begin{equation}
 \label{eq:05_GKLR-gradient-matrix}
\nabla \mathcal{L}_{\cdot\cdot}(\boldsymbol{\alpha}_{\cdot \cdot }) = \frac{1}{N}\mathbf{K} \left [ N \lambda \boldsymbol{\alpha}_{\cdot \cdot}   + \mathbf{p}_{\cdot \cdot}(\boldsymbol{\alpha}_{\cdot \cdot}) - \boldsymbol{y}_{\cdot \cdot} \right ] \in \mathbb{R}^{N \times I}.
\end{equation}

Generally, when the sample size $N$ reaches tens of thousands, the computational time becomes prohibitively expensive, posing a significant challenge in handling such large datasets with \gls{klr} models. Efficiently computing the matrix of latent functions $\mathbf{f}_{\cdot \cdot }$ is crucial for obtaining $\mathbf{p}_{\cdot \cdot}$, as stated in Equation~(\ref{eq:04_proba_KLR}), and therefore evaluating the objective function and its gradient during the training of \gls{klr}. This is the motivation behind the use of the Nyström approximation, and below is given an outline of its application to the calculation of the gradient and loss function.

\begin{align}
\widehat{\mathbf{f}}_{\cdot \cdot }(\boldsymbol{\alpha}_{\cdot \cdot})=&\widehat{\mathbf{K}} \boldsymbol{\alpha}_{\cdot \cdot}
= \,\mathbf{C} \left [\mathbf{W}^\dag [\mathbf{C}^\top \boldsymbol{\alpha}_{\cdot\cdot}] \right ].
\label{eq:04_estimation_KLR_Nystrom1}\\
\widehat{\mathbf{p}}_{\cdot \cdot}(\boldsymbol{\alpha}_{\cdot \cdot})=&
\left [  \left  (
\exp (\widehat{\mathbf{f}}_{\cdot \cdot }(\boldsymbol{\alpha}_{\cdot \cdot}) ) \mathbf{1}
\right ) _{\circ -1} \right ]^\top 
\exp (\widehat{\mathbf{f}}_{\cdot \cdot }(\boldsymbol{\alpha}_{\cdot \cdot}) ).
\\
\nabla \mathcal{L}_{\cdot\cdot}(\boldsymbol{\alpha}_{\cdot \cdot }) = &\frac{1}{N}
\, \mathbf{C} \left [\mathbf{W}^\dag \left [\mathbf{C}^\top
\left [ N \lambda \boldsymbol{\alpha}_{\cdot \cdot}  + \widehat{\mathbf{p}}_{\cdot \cdot}(\boldsymbol{\alpha}_{\cdot \cdot}) - \boldsymbol{y}_{\cdot \cdot} \right ] \right] \right ].
\label{eq:04_estimation_KLR_Nystrom2}
\\
\mathcal{L} (\boldsymbol{\alpha}_{\cdot \cdot }) = & 
\mathbf{1}^\top [\log(\widehat{\mathbf{p}}_{\cdot \cdot}(\boldsymbol{\alpha}_{\cdot \cdot}))^\top \mathbf{y}_{\cdot \cdot }] \mathbf{1}
+\frac{\lambda}{2} \hbox{Trace} (\boldsymbol{\alpha}^\top_{\cdot \cdot } \widehat{\mathbf{f}}_{\cdot \cdot }(\boldsymbol{\alpha}_{\cdot \cdot})).
\end{align}
\noindent

where $\mathbf{1} \in \mathbb{R}^{I \times 1}$, $\hbox{Trace}(\mathbf{A})$ denotes the trace of the matrix $\mathbf{A}$, the operation $\mathbf{A}_{\circ -1}$ in the previous expressions represent the Hadamard power operation. Furthermore, both this operation and the functions $\log(\mathbf{A})$ and $\exp(\mathbf{A})$ are applied element-wise on the matrix $\mathbf{A}$.

The Nyström method offers two significant advantages. The first pertains to the spatial complexity of the model, as it reduces the memory required for storing the kernel matrix from ${\cal O}(N^2)$ to $ {\cal O}(N \cdot  C)$. The second advantage is related to the temporal complexity. The computation of the objective function and its gradient involves successive kernel matrix multiplications, as seen in Equations~(\ref{eq:04_estimation_KLR_Nystrom1}) and (\ref{eq:04_estimation_KLR_Nystrom2}). The Nyström method enables the computational cost for each kernel matrix multiplication to be reduced from $ {\cal O}(N^2 \cdot I)$ to $ {\cal O}(N \cdot   C \cdot I)$ (typically $C \ll N$).

\subsection{Line search methods}
\label{sect:line-search-methods}

In practical optimisation problems, the choice of the optimisation algorithm depends on a range of factors, such as the problem's dimensions and complexity, the desired level of accuracy, and the available computational resources. This study uses {\sl line search methods} to estimate the parameter matrix $\boldsymbol{\alpha}_{\cdot \cdot}$. These techniques update the parameter matrix iteratively through the following formula:
\begin{eqnarray} 
    \boldsymbol{\alpha}^{(t+1)}_{\cdot \cdot} = \boldsymbol{\alpha}^{(t)}_{\cdot \cdot } - \delta_t g(\boldsymbol{\alpha}^{(t)}_{\cdot \cdot }).
\end{eqnarray}

Here, $\delta_t$ represents the {\sl learning rate}, conforming to the condition $\delta_t>0$, and $g(\boldsymbol{\alpha}^{(t)}_{\cdot \cdot})$ is the search direction. This search direction can take different forms, with three common options being:
\begin{itemize}
    \item[(i)] When $g(\boldsymbol{\alpha}^{(t)}_{\cdot \cdot})=\nabla \mathcal{L}_{\cdot \cdot}(\boldsymbol{\alpha}^{(t)}_{\cdot \cdot})$, it leads to the \gls{gd} method.
    \item[(ii)] When $g(\boldsymbol{\alpha}^{(t)}_{\cdot \cdot})=[\nabla ^2\mathcal{L}(\boldsymbol{\alpha}^{(t)}_{\cdot  \cdot })]^{-1} \nabla \mathcal{L}_{\cdot \cdot}(\boldsymbol{\alpha}^{(t)}_{\cdot \cdot})$, where $\nabla ^2\mathcal{L}(\boldsymbol{\alpha}^{(t)}_{\cdot \cdot})$ is a proper representation of the Hessian matrix, it leads to Newton's method.  Under the assumption that the Hessian matrix maintains Lipschitz continuity and the function $f$ exhibits strong convexity at $\boldsymbol{\alpha}^{(t)}_{\cdot \cdot}$, Newton's method shows quadratic convergence towards the optimal parameters. 
    \item[(iii)] When $g(\boldsymbol{\alpha}^{(t)}_{\cdot \cdot })=\mathbf{H}^{(t)}\nabla \mathcal{L}_{\cdot \cdot}(\boldsymbol{\alpha}^{(t)}_{\cdot \cdot})$, where $\mathbf{H}^{(t)}$ is a positive definite matrix approximating the Hessian matrix, it results in quasi-Newton methods.
\end{itemize}

Convergence rate is a crucial aspect of optimisation algorithms, as it determines how quickly the algorithm can reach an optimal solution. Among popular optimisation methods, the \gls{gd} method exhibits a linear convergence rate. On the other hand, quasi-Newton methods demonstrate superlinear convergence, while Newton's method shows quadratic convergence. Algorithms with higher convergence rates require fewer iterations to achieve a certain level of accuracy. However, it is important to consider that these benefits come at the cost of increased computational complexity, especially in the case of Newton's method, which requires computing the Hessian matrix at each iteration.

This study employs a selection of algorithms with a variety of convergence rates that have demonstrated effectiveness in solving optimisation problems involving extensive datasets. Specifically, the quasi-Newton method, L-BFGS-B \citep{BLN95}, was chosen for its superlinear convergence rate, which offers efficient convergence. Additionally, the study explored the \gls{gd} method as a representative of an algorithm with a linear convergence approach, and two widely-used variants of the \gls{gd} method: the Momentum \gls{gd} and the Adam method.

Momentum \gls{gd} is a variant of the standard \gls{gd} algorithm that helps to accelerate convergence by adding a momentum term to the update rule. The update at each iteration is based not only on the current gradient but also on the accumulated gradient of past iterations. The momentum term acts as a moving average of past gradients, which smooths the search trajectory and helps the algorithm avoid getting stuck in local minima. This smoothing effect is particularly beneficial in the presence of noise or sparse gradients. The momentum term is controlled by a hyperparameter, typically denoted as $\beta\in[0,1]$. A $\beta$ value close to one gives more weight to the past gradients, resulting in a more stable search trajectory, while a $\beta$ value close to zero puts more emphasis on the current gradient, allowing the algorithm to respond quickly to changes in the optimisation landscape. A variation of this technique is called the Nesterov Accelerated Gradient, which incorporates a look-ahead step that anticipates the momentum's effect on the gradient direction, resulting in improved convergence properties and faster optimisation. The Momentum \gls{gd} algorithm has been shown to be effective in accelerating convergence in various optimisation problems, especially in deep learning applications. The search direction used for this algorithm is stated as follows:
\begin{eqnarray} 
    g(\boldsymbol{\alpha}^{(t)}_{\cdot \cdot}) = \beta g(\boldsymbol{\alpha}^{(t-1)}_{\cdot \cdot}) + (1-\beta)\nabla \mathcal{L}_{\cdot \cdot}(\boldsymbol{\alpha}^{(t)}_{\cdot \cdot}).
\end{eqnarray}

Adam \citep{KiB14} is another popular optimisation algorithm that combines the advantages of two other optimisation methods, Momentum and RMSprop. Adam uses a momentum term that estimates the first-order and second-order moments of the gradients to update the parameters, resulting in a more efficient optimisation process. This is particularly effective in non-convex optimisation problems with large and sparse datasets, where other optimisation algorithms may struggle. Despite its effectiveness, the performance of Adam is heavily dependent on the choice of hyperparameters, such as the learning rate, or the momentum parameters $\beta_1\in[0,1]$ and $\beta_2\in [0,1]$. Consequently, precise calibration of these hyperparameters is crucial to attaining optimal performance across varying optimisation scenarios. This algorithm is stated as follows:
\begin{align} 
    \mathbf{M}^{(t)} &= \beta_1 \mathbf{M}^{(t-1)}+ (1-\beta_1)\nabla \mathcal{L}_{\cdot \cdot }(\boldsymbol{\alpha}^{(t)}_{\cdot \cdot })\\
    \mathbf{V}^{(t)} &= \beta_2 \mathbf{V}^{(t-1)}+ (1-\beta_2)\nabla \mathcal{L}_{\cdot \cdot}(\boldsymbol{\alpha}^{(t)}_{\cdot \cdot })_{\circ 2}\\
    g(\boldsymbol{\alpha}^{(t)}_{\cdot \cdot }) &= \dfrac{\sqrt{1-\beta^t_2}}{1-\beta^t_1}  \frac{\mathbf{M}^{(t)}}{\mathbf{V}^{(t)}_{\circ \frac{1}{2}} + \varepsilon},
\end{align}
where the operations $\mathbf{A}_{\circ 2}$ and $\mathbf{A}_{\circ \frac{1}{2}}$ in the previous formulae represent the Hadamard power and root operations, respectively, which are performed element-wise on the matrix $\mathbf{A}$. $\beta^t_1$ and $\beta^t_2$ denote the $t$-th power of $\beta_1$ and $\beta_2$, respectively. Additionally, $\varepsilon\ge 0$.

The learning rate, often referred to as the \emph{step size}, plays a pivotal role in optimisation algorithms, as it determines the speed at which the optimiser descends the error curve. Selecting an appropriate learning rate is critical for achieving timely convergence in optimisation processes. In methods that use a high-quality search direction, such as quasi-Newton methods, determining the optimal step size typically involves solving a one-dimensional optimisation problem, or using the Armijo rule to ensure sufficient descent. However, in the context of \gls{ml} problems, the Armijo rule may introduce computational overheads and is thus less commonly used. Consequently, a frequently employed approach is to use a constant learning rate $\delta_t = \delta$ at all iterations. However, this approach may lead to convergence issues if the learning rate is not chosen appropriately.

To overcome this limitation, a more dynamic approach can be adopted, generating a sequence of decreasing learning rates using various methods, such as the \emph{time-based}, \emph{exponential}, or \emph{step} learning rate decay methods. These approaches provide more control over the step size, and have the potential to expedite convergence. This study employs a time-based learning rate decay strategy, where the learning rate at iteration $t$ is determined by  $\delta_t = \frac{\delta_0}{1+\gamma t}$, where $\delta_0$ is the initial learning rate, and $\gamma$ is a decay rate.

Now that the theoretical basis of this proposal has been established, the next section introduces the experimental results.

\section{Experimental results}
\label{sect:experiments}

The numerical results obtained from the experiments are now presented. Firstly, Section~\ref{Sect:Nystrom_Setup} describes the two large datasets selected, LPMC and NTS. These  datasets were widely used in previous comparisons in the literature. Additionally, this section introduces  \gls{ml} classifiers that have demonstrated good performance in previous work, and which were replicated in the experimentation here. To ensure a concise methodology, the section finishes by outlining the steps taken in this study. Subsequently, two experiments were conducted and are presented in Sections~\ref{Sect:Comp_Nystrom_KLR} and \ref{Sect:Comp_Nystrom_KLR_vs_ML}: the first compared the four Nyström methods implemented for \gls{klr}, while the second compared these methods with other representative \gls{ml} and \gls{mnl} models. Finally, Section~\ref{Sect:Comp_optimisation_Nystrom_KLR} assesses several optimisation algorithms applied to the Nyström \gls{klr} method, aiming to determine which is the most efficient.

\subsection{Experimental setup} % Description of the experimental framework ?
\label{Sect:Nystrom_Setup}

All numerical tests were conducted using Python 3 on a Linux computer with Ubuntu 20.04 LTS. The system specifications included a $3.8$ GHz 12-core AMD Ryzen 3900xt processor and $32$ GB of RAM. The Python programming language was chosen for its flexibility and rich ecosystem of libraries that implement \gls{ml} and \gls{mnl} methods. Moreover, Python is extensively used in the majority of research studies in the literature, providing a solid foundation for the comparison and reproducibility of results.

\subsubsection{Datasets}
\label{Sect:Datasets}

The numerical experiments in this study used two datasets, LPMC and NTS. These datasets were selected for their large sample sizes and their use in state-of-the-art comparisons of \gls{ml} methods for travel mode choice, which make them well-suited for this research. 

The LPMC dataset \citep{HEJ18} used in this study comprises London Travel Demand Survey data, enriched with additional variables sourced from a directions API. It encompasses a substantial sample size of $81,096$ entries, collected from $17,616$ participants, and consists of $31$ variables, of which $20$ were specifically selected for this study. Categorical variables in the dataset were encoded using the one-hot encoding technique. The choice variable in the LPMC dataset has four distinct categories: walk ($17.6\%$), bike ($2.98\%$), public transport ($35.28\%$), and car ($44.16\%$).

Conversely, the NTS dataset \citep{HaH17} is a ML-focused dataset containing Dutch National Travel Survey results from 2010 to 2012, with a total of $230,608$ surveys. However, in order to speed up the comparative analysis, only $50\%$ of the samples were used in this experiment. The NTS dataset includes sixteen variables, all of which were considered. Categorical variables in the dataset were also encoded using one-hot encoding. Like the LPMC dataset, the choice variable in NTS also has four categories: walk ($16.29\%$), bike ($24.41\%$), public transport ($4.03\%$), and car ($55.26\%$).

\subsubsection{Proposed classifiers}
\label{Sect:Proposed_classifiers}

The \gls{mnl} model is usually considered the baseline model for discrete choice analysis. This study employs the open-source Python package Biogeme \citep{Bie03} to estimate the \gls{mnl} models using a \gls{mle} approach. For the NTS dataset, linear utilities specified over all attributes were considered, using distinct parameters for each alternative. However, for the LPMC dataset, a utility function with individual-specific attributes and alternative-varying attributes was defined. In this case, all features were selected as individual-specific, except for certain features which were chosen per alternative:
\begin{itemize}
\item[-] Walk: \textit{distance} and \textit{dur\_walking}.
\item[-] Bike: \textit{distance} and \textit{dur\_cycling}.
\item[-] Public transport: \textit{dur\_pt\_access}, \textit{dur\_pt\_rail}, \textit{dur\_pt\_bus}, \textit{dur\_pt\_int\_waiting}, \textit{dur\_pt\_int\_walking}, \\
\textit{pt\_n\_interchanges}, and \textit{cost\_transit}.
\item[-] Car: \textit{dur\_driving} and \textit{cost\_driving\_total}.
\end{itemize}

Now, \gls{ml} methods included in this research are briefly described:
\begin{itemize}
  \item \acrfull{svm}. This is a binary classifier that employs a \textit{kernel} to transform the data into a high-dimension space, aiming to identify the optimal linear decision surface (hyperplane) that separates data into two classes. This paper incorporates two variants of this model: one with a linear kernel, a model frequently included in prior studies comparing \gls{mnl} and \gls{ml} methods, and another \gls{svm} equipped with a radial basis function kernel. The latter model presents estimation problems due to the large size of the datasets used. This is addressed by the use of a strategy similar to that used for \gls{klr}, applying the Nyström method. This is done using the same configuration as in the base Nyström \gls{klr} model for each dataset.
  A \textit{one-vs-one} strategy is applied to extend \gls{svm} to multi-class classification. The one-vs-one classification strategy trains a binary \gls{svm} classifier for each unique pair of classes and predicts the class that receives the most votes. Note that \gls{svm} does not directly provide probability estimates for each class. Hence, the probabilities were approximated using Platt's method, which consists of training the \gls{svm} using $5-$fold CV, and subsequently applying a logistic regression to the \gls{svm} scores.

  \item \acrfull{rf}. This classifier works by training several decision trees in parallel, using a random subset of attributes in order to build an uncorrelated forest. In a decision tree, nodes represent binary decision rules, while leaves signify distinct classes. The predictions of the \gls{rf} are determined by aggregating the votes of these individual decision trees. The ensemble of trees yields more accurate classifications compared to the average performance of an individual decision tree. 
  
  \item \acrfull{xgboost}. Gradient Boosting is a tree-based ensemble method first proposed by \citet{Fri01}. This method is grounded in iteratively growing low-depth decision trees based on the idea of additive training. Each individual decision tree tries to predict the total residual error of the previous decision tree. \gls{xgboost} is open-source and implements a scalable and efficient version of the Gradient Boosting algorithms. 
  
  \item \acrfull{nn}. Neural models are extensively employed for function approximation and classification tasks. In this study, we adopt a \gls{mlp} topology due to its simplicity and its capability to capture intricate non-linear relationships between input variables and the predicted class. This architecture begins with an input layer, comprising one neuron for each attribute in the dataset. Subsequently, a hidden layer follows, which contains a variable number of neurons interconnected with the input layer's neurons. Finally, the output layer of the model consists of neurons corresponding to the number of classes in the problem. These output neurons provide probabilities indicating the probability of a given instance belonging to each class.
\end{itemize}

Next, \gls{rf}, \gls{svm}, and \gls{nn} methods were implemented using the \textit{scikit-learn} Python package, and the \textit{XGBoost} package \citep{Chen16} was used for \gls{xgboost}. With regard to  the \gls{klr} model, it was implemented using the \textit{GKLR} package\footnote{The \textit{GKLR} project is available at \url{https://github.com/JoseAngelMartinB/gklr} and can be installed from the PIP repository at \url{https://pypi.org/project/gklr/} by executing the command \textit{pip install gklr} in your terminal or command prompt.} developed by the authors.

\subsubsection{Model validation}
\label{Sect:Model_validation}

The datasets used in this study were partitioned into training and testing subsets. For the NTS dataset, a random split was made, with $70\%$ of the data used for training and $30\%$ for testing. In line with the methodology employed by \citet{HEJ18}, the LPMC dataset was divided into a training set comprising the first two years of data (approximately $70\%$ of the dataset) and a test set comprising the last year of data. Using the training set as reference, all variables in both training and test sets were normalised, centering them to zero mean and unit standard deviation. It should be noted that no measures were taken to address the class imbalance in the datasets. 

In their study, \citet{Hil21} highlighted that employing trip-wise sampling with panel data introduces a bias in model performance assessments, especially when dealing with flexible non-linear \gls{ml} classifiers. To adhere to the crucial principle of validating \gls{ml} classifiers on unseen out-of-sample data, it is essential to ensure that responses from the same individuals (or households) are entirely included in either the training or test set. This requirement is easily met in the LPMC dataset; however, the NTS dataset lacks such information. To address this issue, the approach proposed by \citet{Hil21} was followed, which involves identifying households based on a unique combination of geographic area (inferred from the variables \textit{diversity} and \textit{green}) and socio-economic information about the individuals (comprising variables such as \textit{age}, \textit{gender}, \textit{ethnicity}, \textit{education}, \textit{income}, \textit{cars}, \textit{bikes}, and \textit{driving license ownership}). 

After preparing the training and test datasets, the subsequent crucial step involves training the models using the selected methods. However, it is critical first to undertake a hyperparameter tuning phase to identify the optimal set of hyperparameters for the \gls{ml} models. This process also includes fine-tuning the parameters of the optimisation algorithms. For instance, in the case of \gls{klr}, parameters such as the learning rate decay method, the initial learning rate, and the decay rate were adjusted.

This hyperparameter optimisation stage was performed using the \textit{hyperopt} Python package, employing the \gls{cel} metric to evaluate the quality of the hyperparameters. To ensure a robust estimation of this metric, a 5-fold cross-validation approach on the training data was used. $1,000$ function evaluations were carried out to adjust the hyperparameters. The details regarding the search space and the optimal hyperparameter values for each model are provided in \ref{App:HPO} to facilitate transparency and reproducibility of this research.

The next step is training the models and assessing their performance. The study proposes two performance metrics: \gls{dca} and \gls{gmpca}. Within the \gls{ml} community, the most frequently-used benchmark index is \gls{dca} (also known as accuracy), which quantifies the proportion of observations from the model that have been correctly predicted. Although this is not a recommended index for comparing discrete choice models, it is used in this study to enable comparisons with results from earlier studies in the literature. Accuracy is defined as
\begin{equation}
  \gls{dca} = \dfrac{1}{N_{\hbox{\tiny test}}} \sum_{n}^{N_{\hbox{\tiny test}}} \mathbb{I} (y_n=\widehat y_n),
\end{equation}
\noindent where $N_{\hbox{\tiny test}}$ is the number of samples in the test set.

According to the findings presented in \citet{Hil19}, it should be emphasised that performance measures should include probability-based indices, due to their crucial role in the behavioural analysis of models. Traditionally, log-likelihood is the metric applied in \gls{rum} studies, in order to identify the model that best explains a given dataset. Nevertheless, this metric has two inherent difficulties: i) its sensitivity to dataset size, and ii) its challenging interpretability. To address these shortcomings, \citet{Hil19} advocate for the adoption of the \gls{gmpca} as an alternative indicator. The \gls{gmpca} is a probability-based index that measures the geometric mean of the probabilities of each individual, to choose the correct alternative according to the model, hence, this index can be interpreted as a normalised likelihood function. The advantage of the \gls{gmpca} is that it has a clear physical interpretation as a robust measure of the average correctness of the model. In light of this recommendation, the \gls{gmpca} index was incorporated into the analysis, and is expressed as follows: 
\begin{equation}
  \gls{gmpca}= \left ( \prod_{n}^{N_{\hbox{\tiny test}}} p_{n, i_n } \right ) ^{\frac{1}{N_{\hbox{\tiny test}}}},
\end{equation}
\noindent where $i_n$ is the alternative chosen by individual $n$, and $N_{\hbox{\tiny test}}$ is the number of samples in the test set.

\subsection{Comparison of Nyström methods for KLR on large datasets}
\label{Sect:Comp_Nystrom_KLR}
    
Theorem \ref{th:cota1} shows that the convergence of the parameters of the Nyström \gls{klr} to the parameters of the \gls{klr} depends on the norm $\| (\mathbf{K}^2 -\widehat{\mathbf{K}}^2)^{1/2}\|_p$. More specifically, if
  ${\cal E} (C)$ is the Schatten $p-$norm of the error in the parameters of the Nyström \gls{klr}, which depends on the number of landmark points $C$, it is proved that  ${\cal E}(C)=O(\| (\mathbf{K}^2 -\widehat{\mathbf{K}}^2)^{1/2}\|_p)$.

The behaviour of this bound will be analysed numerically for two transport mode choice problems. The theorem is satisfied for any Nyström approximation, in particular for the best $C-$rank approximation, which is given by the expression $\widehat{\mathbf{K}}_C= \sum_{j=1}^C \sigma_j  \mathbf{U}_{\cdot j} \mathbf{U}_{\cdot j}^\top$, obtaining:
 \begin{eqnarray}
 (\mathbf{K}^2 -\widehat{\mathbf{K}}_C^2)^{1/2} &= \left (\sum_{j=1}^R \sigma^2_j  \mathbf{U}_{\cdot j} \mathbf{U}_{\cdot j}^\top - \sum_{j=1}^C \sigma^2_j  \mathbf{U}_{\cdot j} \mathbf{U}_{\cdot j}^\top \right ) ^{1/2}= \left (
 \sum_{j=C+1}^R \sigma^2_j  \mathbf{U}_{\cdot j} \mathbf{U}_{\cdot j}^\top \right )^{1/2}\\
 & =
 \sum_{j=C+1}^R \sigma_j  \mathbf{U}_{\cdot j} \mathbf{U}_{\cdot j}^\top \Rightarrow
 \| (\mathbf{K}^2 -\widehat{\mathbf{K}}_C^2)^{1/2}\|_p= \left ( \sum_{j=C+1} ^R \sigma^p_j \right )^{1/p}.
\end{eqnarray}

Considering the spectral norm ($p=\infty$), it follows that  $ \| (\mathbf{K}^2 -\widehat{\mathbf{K}}_C^2)^{1/2}\|_\infty = \sigma_{C+1}$.  This leads to the fact that the error bound, as a function of the number of landmark points $C$, has an expression of the form ${\cal E}(C) \le \delta \sigma_{C+1}$.

Figure \ref{fig:nystrom_singular_values} shows the decay of the  spectrum of $\mathbf{K}$, $\sigma_{C+1}$ versus $C$, in the two datasets. Therefore,  
the error bounds of the parameters will be proportional to the functions shown in Figure \ref{fig:nystrom_singular_values}. Fitting the regression model   $\sigma_{C+1}= \dfrac{a}{C^b}$, the fitted model  $\sigma_{C+1}= \dfrac{e^{6.58}}{C^{1.36}}$ is obtained for the LPMC problem, and $\sigma_{C+1}= \dfrac{e^{6.51}}{C^{1.30}}$ for the NTS dataset. These results indicate a rapid spectral decay ($b>1$) in these application examples, suggesting that a large number of landmarks is not required for an effective Nyström approximation. Additionally, it is observed that $\|(\mathbf{K}^2 -\widehat{\mathbf{K}}_C^2)^{1/2}\|_p=\|\mathbf{K} -\widehat{\mathbf{K}}_C\|_p$, and thus by selecting more than $500$ landmarks, the error of Nyström matrix approximations with respect to the kernel matrix $\mathbf{K}$ is of the order of $\sigma_{500}\approx 10^{-1}$.

\begin{figure}[h]
	\centering
    \includegraphics[width=\textwidth]{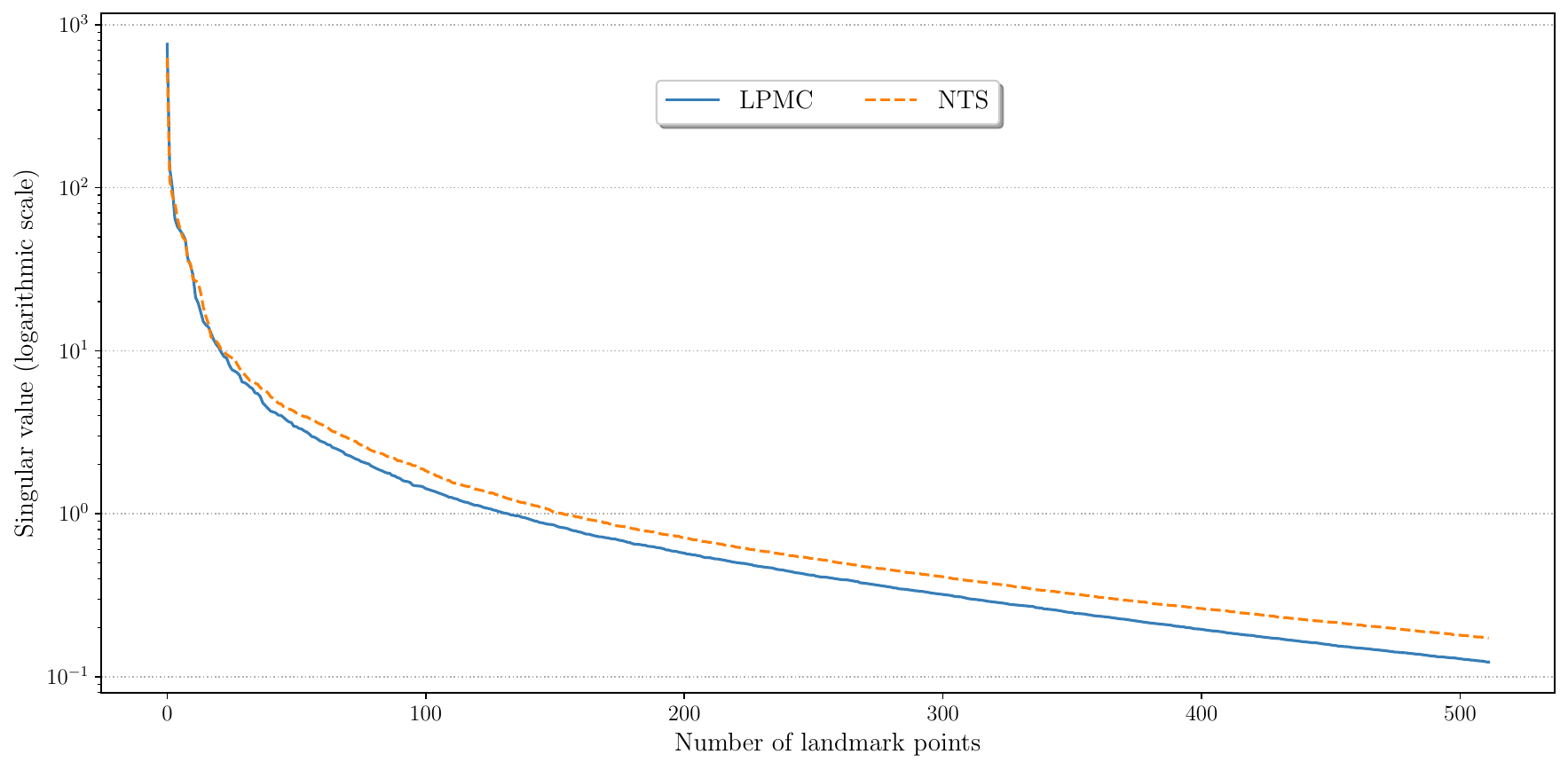}
    \caption{Singular values of the kernel matrix $\mathbf{K}$ for the LPMC and NTS datasets}
    \label{fig:nystrom_singular_values}
\end{figure}

Next, \gls{gmpca}, \gls{dca}, and the computational cost of various implementations of \gls{klr}, are evaluated,  using the column selection methods presented in Section~\ref{sec:Column}. For each implementation of \gls{klr}, a \gls{rbf} kernel was selected, and the parameters of these models were estimated using the L-BFGS-B algorithm. Specifically, the comparison focuses on determining the number of landmark points $C$ required in each Nyström approximation to achieve satisfactory results. The question that arises is whether the new metrics exhibit the same pattern as that registered in this experiment. Intuition indicates that they should be similar and that therefore these metrics should  be seen to give small improvements as they go beyond $300$ landmarks. 
          
The results for the LPMC and NTS datasets are illustrated in Figures~\ref{fig:LPMC_KLR_comp} and \ref{fig:NTS_KLR_comp}, respectively. The standard Nyström method using uniform sampling is denoted by the blue line, the Nyström method using the $k$-means sampling strategy is represented by the orange line, the DAC ridge-leverage Nyström by the green line, and the RLS-Nyström by the red line.

\begin{figure}[p]
	\centering
    \includegraphics[width=\textwidth]{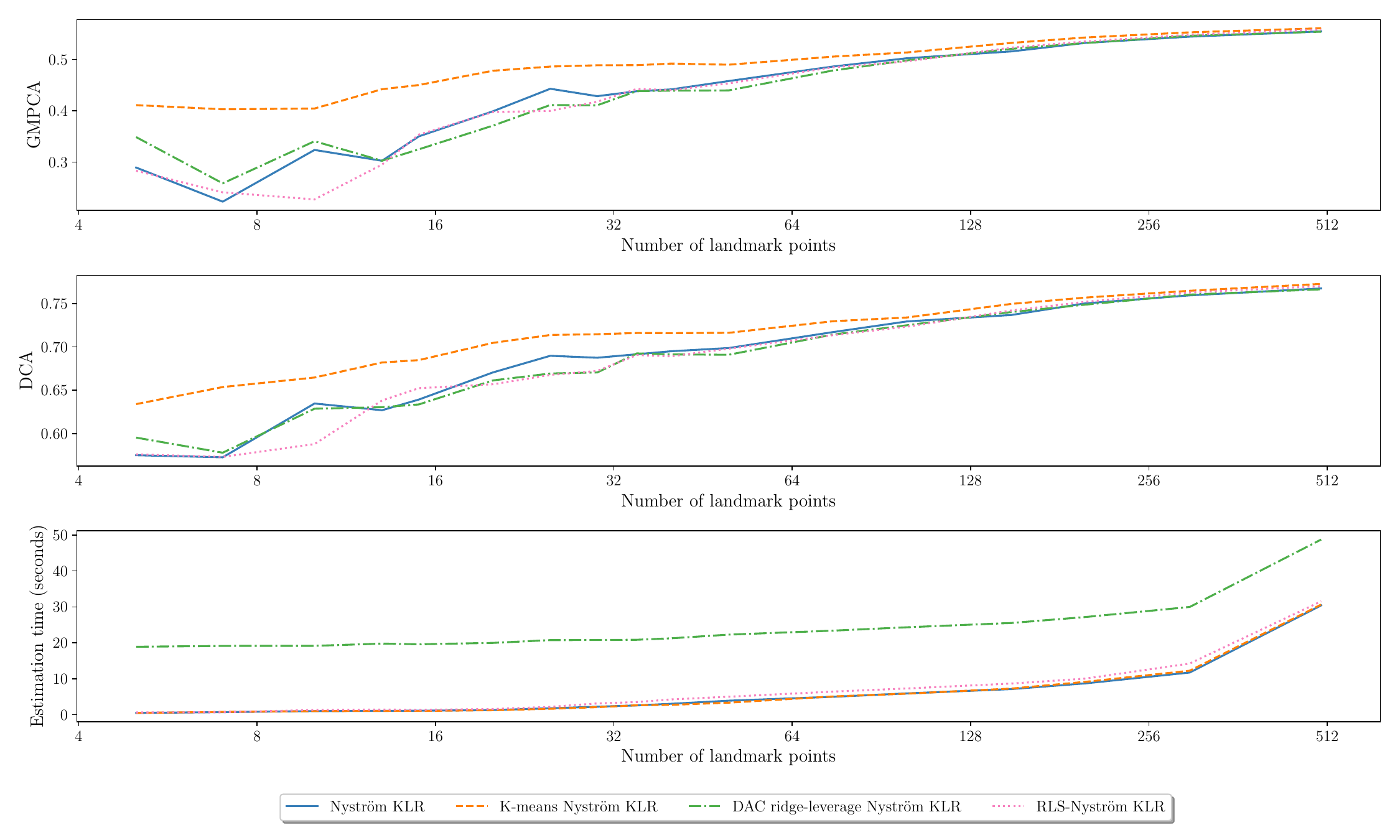}
    \caption{Comparison of Nyström sampling techniques on the LPMC dataset}
    \label{fig:LPMC_KLR_comp}
\end{figure}

\begin{figure}[p]
	\centering
    \includegraphics[width=\textwidth]{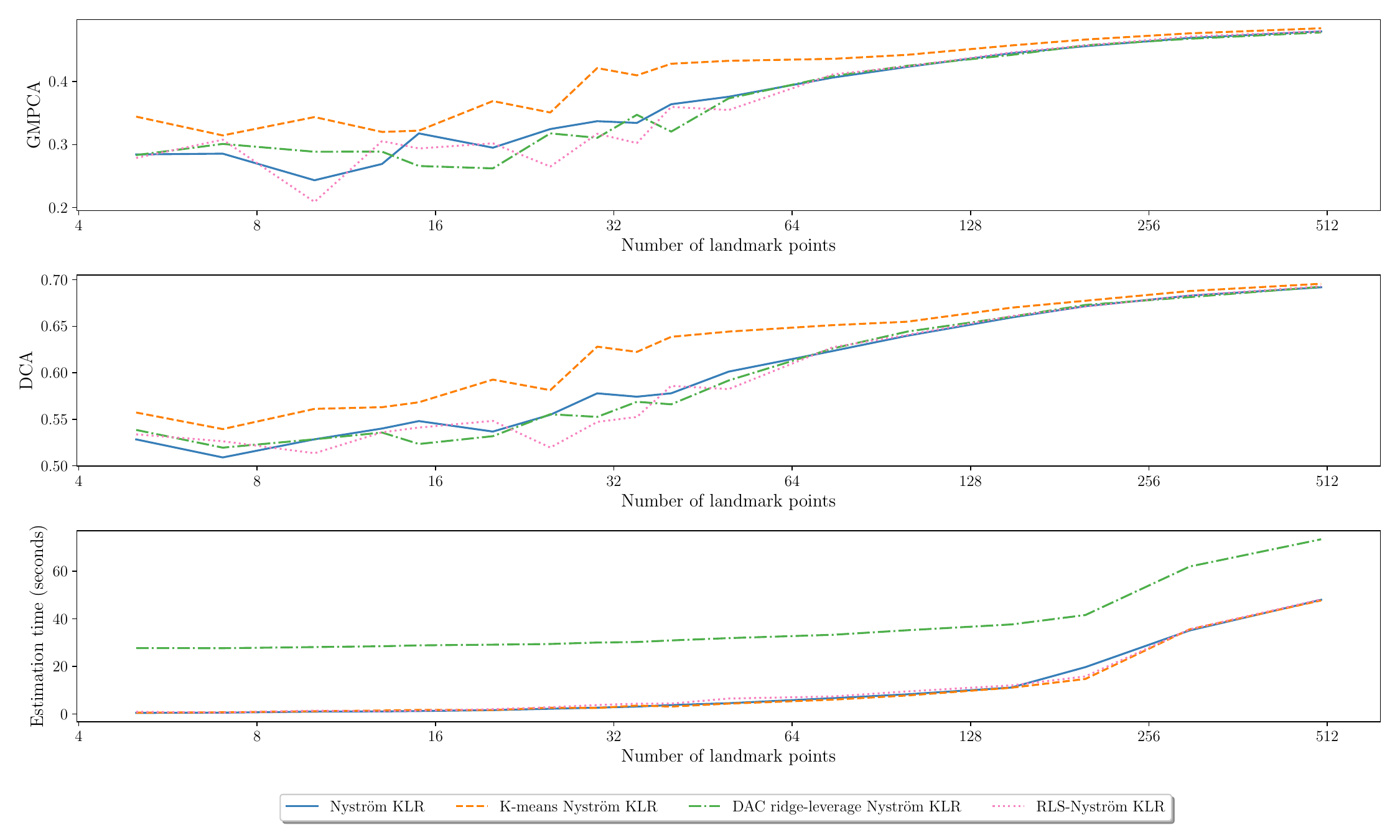}
    \caption{Comparison of Nyström sampling techniques on the NTS dataset}
    \label{fig:NTS_KLR_comp}
\end{figure}

In both datasets, as the number of landmark points in the Nyström method increases, all the methods eventually converge to a similar level of accuracy. When $C$ grows, the standard Nyström \gls{klr} converges to the best rank $C-$approximation $\widehat{\mathbf{K}}_C$ (\cite{KMT12}, Theorem 2)  and in turn converges to the kernel matrix $\mathbf{K}$ (\cite{SZZ15}, Theorem 6). Therefore, since Figures~\ref{fig:LPMC_KLR_comp} and \ref{fig:NTS_KLR_comp}  shows that all Nyström schemes (including random sampling) converge to the same level of accuracy, it is this performance that would be achieved by directly applying \gls{klr}. However, it is important to highlight that using \gls{klr} directly, without Nyström, on these datasets would not be feasible due to the size of the kernel matrix. To illustrate this, storing the training matrices ($70\%$ of the observations) of the LPMC and NTS datasets would require approximately $22$ GB and $194$ GB of RAM memory, respectively. Hence, the adoption of the techniques described here becomes indispensable for handling such large datasets.

Regarding computational costs, all methods give similar performance, except for the DAC ridge-leverage Nyström, which requires more time to compute the leverage indices. The RLS-Nyström method also takes slightly longer than the $k$-means Nyström method, although the difference is minimal.

The analysis of the results suggests that the $k$-means Nyström \gls{klr} method exhibits faster convergence on both datasets, especially when the number of landmark points is small. This can be evidenced by the higher values of the \gls{gmpca} and \gls{dca} indices when using less than $100$ landmark points. Additionally, the results of the NTS dataset indicate that, even with a larger dataset, the $k$-means Nyström method achieves convergence using only a few landmark points. However, as the number of landmark points grows larger, there is little discernible difference between the various techniques, except for a slight overall improvement in \gls{gmpca} and \gls{dca}. Nevertheless, it should be noted that the estimation time of the \gls{klr} model and the size of the kernel matrix will significantly increase when using a large number of landmark points.

To conclude this experiment, it should be noted that the metrics \gls{gmpca} and \gls{dca} behave similarly to the error bound of the parameters ${\cal \varepsilon}(C)$. Looking at Figures~\ref{fig:LPMC_KLR_comp} and \ref{fig:NTS_KLR_comp}, which employ logarithmic scaling on the X-axis, the performance improvement is negligible above $300$ landmark points.

\subsection{Comparison of the Nyström KLR method with other ML algorithms on large datasets}
\label{Sect:Comp_Nystrom_KLR_vs_ML}

Having studied the behaviour of the different Nyström-based \gls{klr} methods, in this experiment, these methods were compared with several state-of-the-art \gls{ml} methods and a \gls{mnl} model. The purpose of this comparison was to demonstrate that the Nyström \gls{klr} method can serve as a viable alternative to other methods for large datasets. For this experiment, a baseline model was established using a \gls{mnl} with linear utility functions, which were estimated using Biogeme.

The results of this experiment are set out in Table~\ref{tab:ML-comp-test-table-Nystrom}. For each implementation of \gls{klr}, a \gls{rbf} kernel was employed, and the model parameters were estimated using the L-BFGS-B algorithm. As seen in Figure~\ref{fig:LPMC_KLR_comp} for the LPMC dataset, the Nyström \gls{klr} with $500$ landmark points returns asymptotic values for the \gls{dca} and \gls{gmpca}, so there is hardly any difference in performance between the models. Hence, $500$ landmark points were used for the Nyström \gls{klr} models. For the NTS dataset, a much larger dataset, Figure~\ref{fig:NTS_KLR_comp} shows that for $500$ landmark points the \gls{dca} and \gls{gmpca} indices still have a small positive slope, therefore $1,000$ landmark points were chosen for the Nyström methods. If the number of landmark points is reduced, it can be seen that the performance of the $k$-means Nyström method would decrease slightly, while the other Nyström methods would experience a more severe decrease.

It is important to note that the estimated size of the training kernel matrices in these models is approximately $0.2$ GB and $1.2$ GB for the LPMC and NTS datasets, respectively. This represents a reduction of $110$ and $160$ times their size, respectively, compared to traditional \gls{klr} models, which makes the application of this model feasible for such large datasets.

Computational time, also referred to as training time, is an important factor to consider. In terms of the time required to estimate the Nyström \gls{klr} models, although it remains substantial, it is significantly lower than that of the linear \gls{svm} model, especially for very large datasets like NTS. The time required is also comparable to that of the Nyström \gls{svm} model. Moreover, with the exception of the DAC ridge-leverage variant, all Nyström \gls{klr} models require less time to estimate than the \gls{mnl} models, which are commonly used by practitioners for large datasets.

Finally, analysing the performance of the models, it can be observed that the ensemble-based methods (such as \gls{xgboost} or \gls{rf}) and neural networks are the best performers in terms of \gls{dca} and \gls{gmpca}. The best-performing model with respect to each index in both datasets is marked in bold, which in this case always corresponds to the \gls{xgboost} model. This finding is in line with other recent studies in the literature \citep{WMH21}.

Regarding the \gls{klr} models, the best \gls{klr} implementation for each dataset is marked with an asterisk `*'. It can be observed that for the LPMC dataset, which is the smallest one, the results of all the Nyström \gls{klr} models are quite close to those of the top-performing \gls{ml} algorithms, which suggests they are promising models. However, by employing sampling strategies, the Nyström-based \gls{klr} models demonstrate improved performance compared to the base \gls{mnl} model. It is noteworthy that, in the NTS dataset, the Nyström \gls{klr} methods yield lower scores in terms of \gls{dca}. Nevertheless, they still provide better approximations to the probability of each alternative, as measured through the \gls{gmpca}, compared to the \gls{mnl} models. 

There are two plausible explanations for the weaker results in the NTS dataset. The first relates to the significant class imbalance observed in the `public transport' and `walk' alternatives. Table~\ref{tab:ML-class-comp-test-table} reports the precision scores for each alternative in the test set. It is notable that for the minority classes in the LPMC and NTS datasets, precision scores are generally low across all methods. Significantly, KLR, along with MNL, SVM, and XGBoost, achieve the highest precision scores for the smallest class in both datasets. Addressing this class imbalance through techniques such as resampling or using different evaluation metrics tailored for imbalanced datasets might yield improved results. Future research should address this problem.

Another possible explanation for the performance difference is that the LPMC dataset focuses on discrete choice modelling, whereas the NTS dataset is more oriented towards \gls{ml} applications, and not all variables are equally important. For instance, the NTS dataset contains a significant number of environmental variables, which might have a reduced impact on the classification process. This issue was highlighted in \citet{MLR23}, where it was calculated the SHAP values for each variable in both datasets, which allowed the variations in variable importance to be analysed. In the LPMC dataset, a wide range of variables significantly influences the decision-maker's choices. Conversely, in the NTS dataset, despite having fewer variables, only a few of them, primarily the `distance', have a notable impact on the classification. The remaining variables are less significant and can impact classifier performance. This accounts for the generally lower effectiveness of all classifiers, particularly the \gls{klr}. This highlights the sensitivity of the \gls{klr} method to non-significant features, suggesting the importance of conducting an attribute analysis prior to applying these techniques.

\begin{table}[ht]
\centering
\caption{Evaluation of DCA and GMPCA indices in the test set and estimation time by model for LPMC and NTS datasets}
\label{tab:ML-comp-test-table-Nystrom}
\resizebox{\textwidth}{!}{
\begin{tabular}{rllllll}
\toprule
{} & \multicolumn{3}{c}{LPMC} & \multicolumn{3}{c}{NTS} \\
\cmidrule(lr){2-4} \cmidrule(lr){5-7} 
{} &    DCA &  GMPCA & Estimation time (s) &    DCA &  GMPCA & Estimation time (s) \\
\midrule
MNL                            &  72.54 &  48.85 &              623.43 &  65.42 &  43.83 &              855.61 \\
SVM                            &  73.61 &  49.92 &              193.48 &  63.22 &  43.34 &              765.59 \\
LinearSVM                      &  72.13 &  48.92 &              691.21 &  64.64 &  43.72 &             3,963.52 \\
RF                             &  73.58 &  50.14 &                2.67 &  68.19 &  46.84 &                1.87 \\
XGBoost                        &  \textbf{74.71} &  \textbf{51.85} &               82.04 &  \textbf{68.72} &  \textbf{48.05} &              138.72 \\
NN                             &  73.87 &  50.72 &                5.25 &  68.40 &  47.12 &                7.51 \\
Nyström KLR                    &  73.45 &  50.41* &              303.39 &  64.98 &  44.53* &              776.46 \\
k-means Nyström KLR            &  73.46 &  50.35 &              309.40 &  65.09* &  44.50 &              719.25 \\
DAC ridge-leverage Nyström KLR &  73.49 &  50.33 &              507.37 &  64.91 &  44.41 &             1,010.26 \\
RLS-Nyström KLR                &  73.62* &  50.43 &              324.85 &  64.81 &  44.52 &              727.24 \\
\bottomrule
\multicolumn{7}{l}{\textit{Note:} For Nyström KLR models, $500$ landmark points were used for the LPMC dataset and $1,000$ for the NTS dataset.}
\end{tabular}
}
\end{table}

\begin{table}
\centering
\caption{Evaluation of precision score for each alternative in the test sets for LPMC and NTS datasets}
\label{tab:ML-class-comp-test-table}
\resizebox{\textwidth}{!}{
\begin{tabular}{rllllllll}
\toprule
{} & \multicolumn{4}{c}{LPMC} & \multicolumn{4}{c}{NTS} \\
\cmidrule(lr){2-5} \cmidrule(lr){6-9} 
{} &  Walk &  Bike & Public Transport &   Car &  Walk &  Bike & Public Transport &   Car \\
\midrule
MNL                            &  0.66 &  0.50 &             0.77 &  0.72 &  0.56 &  0.51 &             0.62 &  0.72 \\
SVM                            &  0.66 &  0.38 &             0.78 &  0.74 &  0.55 &  0.53 &             0.70 &  0.66 \\
LinearSVM                      &  0.67 &  0.00 &             0.78 &  0.70 &  0.54 &  0.49 &             0.00 &  0.71 \\
RF                             &  0.70 &  0.00 &             0.75 &  0.74 &  0.64 &  0.54 &             0.66 &  0.74 \\
XGBoost                        &  0.71 &  0.31 &             0.76 &  0.75 &  0.63 &  0.54 &             0.65 &  0.75 \\
NN                             &  0.70 &  0.00 &             0.76 &  0.74 &  0.60 &  0.54 &             0.62 &  0.75 \\
Nyström KLR                    &  0.67 &  0.23 &             0.76 &  0.74 &  0.60 &  0.51 &             0.71 &  0.70 \\
k-means Nyström KLR            &  0.67 &  0.27 &             0.76 &  0.74 &  0.61 &  0.51 &             0.71 &  0.70 \\
DAC ridge-leverage Nyström KLR &  0.67 &  0.24 &             0.76 &  0.74 &  0.61 &  0.51 &             0.73 &  0.70 \\
RLS-Nyström KLR                &  0.67 &  0.18 &             0.76 &  0.74 &  0.61 &  0.51 &             0.73 &  0.70 \\
\bottomrule
\multicolumn{7}{l}{\textit{Note:} For Nyström KLR models, $500$ landmark points were used for the LPMC dataset and $1,000$ for the NTS dataset.}
\end{tabular}
}
\end{table}

\subsection{Comparison of optimisation techniques for Nyström KLR}
\label{Sect:Comp_optimisation_Nystrom_KLR}

After overcoming the challenge of storing large kernel matrices, the subsequent step involves determining the most efficient optimisation algorithm to estimate the vector of parameters in \gls{klr}. The estimation process involves solving the optimisation problem in Equation~(\ref{eq:04_estimation_KLR_RNLL}), where the loss function is the negative conditional log-likelihood, leading to a procedure called minimising \gls{rnll} or, equivalently, maximising \gls{pmle}. Typically, the algorithms used for solving this \gls{pmle} problem rely on the gradient of the objective function.
From a theoretical point of view, training \gls{mnl} or \gls{klr} models with \gls{pmle} is fundamentally equivalent, as both are instances of the same \acrfull{glm}. However, the differences appear from a computational point of view due to the number of parameters in the models. In the case of \gls{mnl} (in the context of \gls{rum}), the number of parameters increases on the order of ${\cal O}(M \cdot I)$, where $M$ is the dimension of the original input space, i.e., $\mathbf{X}\subset \mathbb{R}^M$, and $I$ is the number of alternatives in the choice set. On the other hand, \gls{klr} grows more abruptly with ${\cal O}(N \cdot I)$, where $N$ is the number of observations. Consequently, training \gls{klr} models poses a computational challenge, as it requires the estimation of a large number of coefficients.

As discussed earlier, the Nyström \gls{klr} model has the advantage of reducing the size of the kernel matrix, which makes the number of parameters to be estimated in the model increase on the order of ${\cal O}(C \cdot I)$, where $C$ is the number of landmark points. This advantageous feature allows the number of required parameters to no longer depend on the size of the dataset but only on the number of landmark points, which is a significant advantage for large datasets. Thus, the number of parameters becomes more similar to that of the \gls{mnl}, making the Nyström approach a promising method for scalable estimation on large datasets.

To compare the different optimisation techniques, it was necessary to first implement these optimisation methods within the \textit{GKLR} Python package. The experimental setup and datasets of the previous section are maintained, and it was selected a $k$-means Nyström \gls{klr} method for this comparison. Specifically, the LPMC dataset was used, with $500$ landmark points, while the NTS dataset employed $1,000$ landmark points. 

The $k$-means Nyström \gls{klr} model was estimated for both datasets with a \gls{pmle} and using the L-BFGS-B, \gls{gd}, Momentum, and Adam optimisation methods. For the L-BFGS-B method, convergence was ensured by setting a tolerance of $10^{-12}$ and a maximum of $10,000$ iterations. This guarantees that the method terminates only when it can no longer explore lower values of the objective function. Regarding the \gls{gd}, Momentum, and Adam methods, they were set to undergo a total of $5,000$ iterations (epochs) with a time-based learning rate decay strategy. Furthermore, a hyperparameter tuning process was conducted for the learning rate and decay rate through grid search within a predefined search space for all optimisation methods.

Figure~\ref{fig:LPMC_optimization_algo_comparison} illustrates the relationship between the objective function value and the required estimation time for each optimisation method using the LPMC dataset. A similar representation is offered in Figure~\ref{fig:NTS_optimization_algo_comparison} for the NTS dataset, which is notably larger in scale. Furthermore, Table~\ref{tab:Optimization_algo_table} provides a comprehensive overview of the final optimisation results for each method.

Upon analysis, it becomes evident that the L-BFGS-B method is the optimal choice for training the $k$-means Nyström \gls{klr} model under the \gls{pmle} framework. Closely behind are the Adam and Momentum methods, in that order. Notably, the \gls{gd} method lags behind the other algorithms even with adjustments to the learning rate or increased computational time. Additional numerical tests, which have not been reported, indicate that the Adam method can achieve the best results when a sufficient number of iterations are performed with a good hyperparameter set. However, it is essential to note that the Adam method takes approximately $4$ to $6$ times longer (in terms of CPU time) than the L-BFGS-B method to achieve and surpass these optimal values.

\begin{figure}[p]
	\centering
    \includegraphics[width=\textwidth]{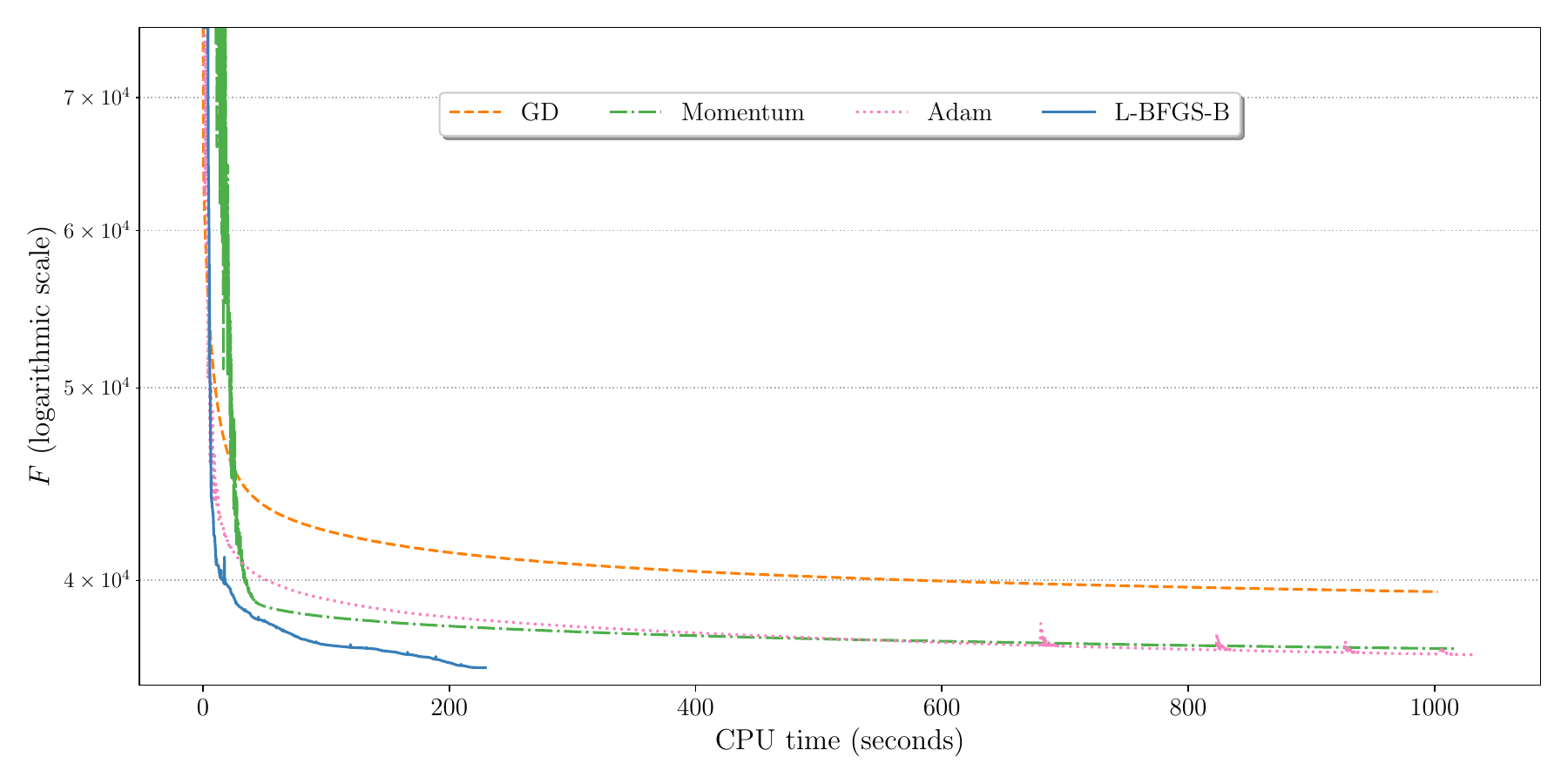}
    \caption{Comparison of optimisation algorithms for KLR on the LPMC dataset}
    \label{fig:LPMC_optimization_algo_comparison}
\end{figure}

\begin{figure}[p]
	\centering
    \includegraphics[width=\textwidth]{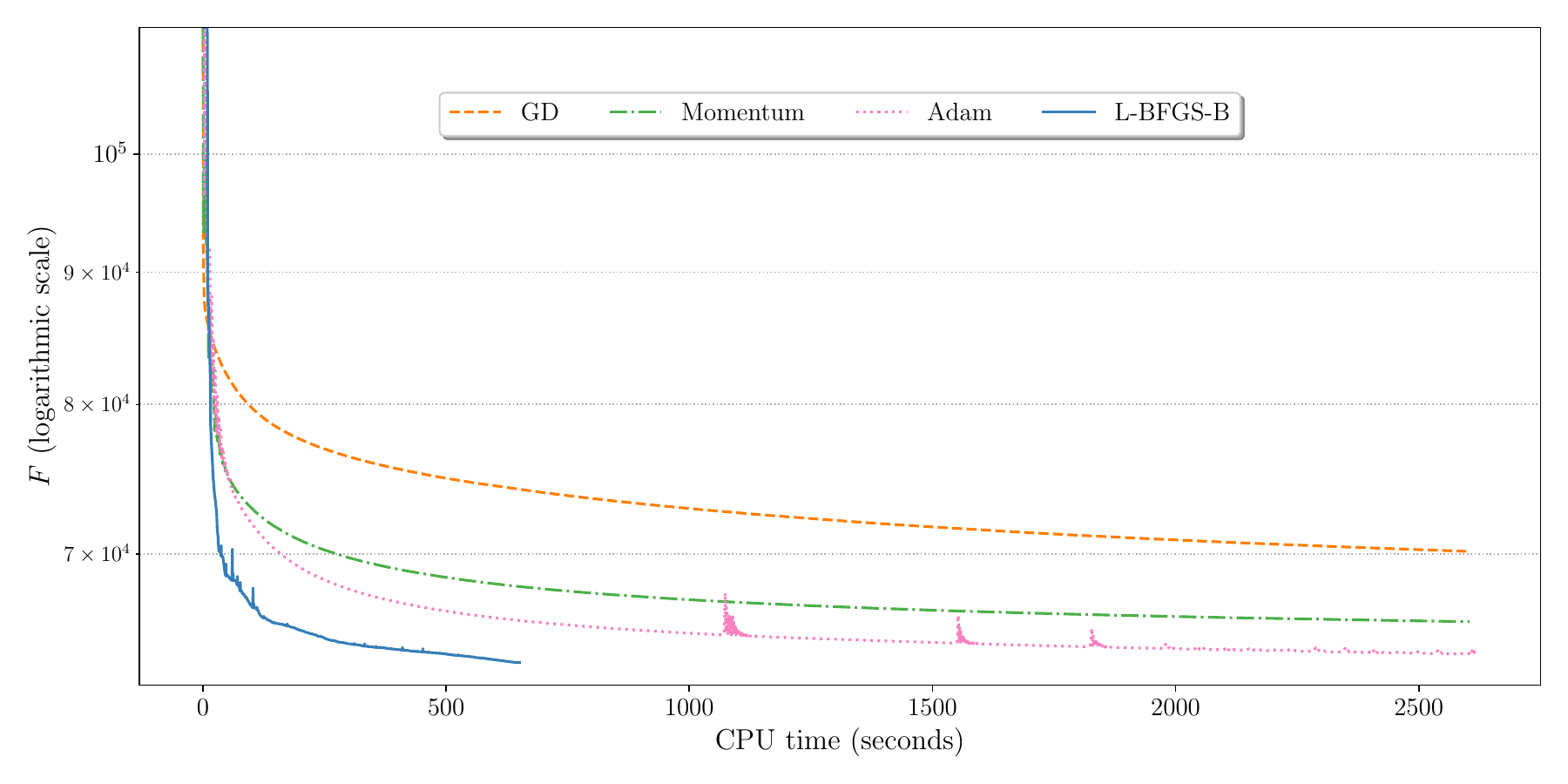}
    \caption{Comparison of optimisation algorithms for KLR on the NTS dataset}
    \label{fig:NTS_optimization_algo_comparison}
\end{figure}

\begin{table}[ht]
\centering
\caption{Optimisation results}
\label{tab:Optimization_algo_table}
\begin{tabular}{rllll}
\toprule
{} & \multicolumn{2}{c}{LPMC} & \multicolumn{2}{c}{NTS} \\
\cmidrule(lr){2-3} \cmidrule(lr){4-5} 
{} & Final Loss & CPU Time (s) & Final Loss & CPU Time (s) \\
\midrule
L-BFGS-B &  \textbf{36,144.43} &       \textbf{229.31} &  \textbf{63,552.09} &       \textbf{650.78} \\
GD       &  39,473.54 &     1,003.81 &  70,173.70 &     2,599.00 \\
Momentum &  36,949.68 &     1,020.09 &  65,908.53 &     2,605.57 \\
Adam     &  36,686.48 &     1,035.54 &  64,020.47 &     2,620.94 \\
\bottomrule
\end{tabular}
\end{table}

An important consideration of this study is that to obtain the graphs presented, it was necessary to report the value of the objective function at every iteration of the estimation methods. However, obtaining the value of the objective function using the \gls{klr} method with large datasets is computationally expensive. Consequently, all estimation methods display CPU times that surpass those that would have been the case if the objective function's value were not reported, or if it were computed only at intervals of $\Delta_t$ iterations. Since this calculation is performed in all methods, it does not pose a problem when comparing them, which is the purpose of this experiment.

Another significant finding of this study is the consequence of Theorem~\ref{th:soluciones}, particularly Remark~\ref{remk:soluciones}. The training problem of \gls{klr} is overspecified in terms of the parameters. To expedite convergence, the traditional approach is to set the parameters for one alternative to zero, i.e., $\boldsymbol{\alpha}^*_{\cdot \cdot} \in \boldsymbol{\Theta}=\left \{\boldsymbol{\alpha }_{\cdot \cdot} \in \mathbb{R}^{N \times I}:  {\alpha}_{\cdot I}=\mathbf{0} \right \}$. However, in this study, the training problem formulation given by Equation~(\ref{eq:04_estimation_KLR_RNLL}) is adopted instead of the commonly used restriction $\boldsymbol{\alpha}_{\cdot \cdot}\in \boldsymbol{\Theta}$. For this reason, if the constraint ${\alpha}_{\cdot I}=\mathbf{0}$ is imposed, a suboptimal solution is obtained.

To demonstrate this, we repeated the previous experiment while enforcing ${\alpha}_{\cdot I}=\mathbf{0}$. Table~\ref{tab:06_Optimization_algo_table_zeros} reports the final loss values from training, the CPU estimation time, and the GMPCA index for the test set. Comparing the loss values with those in Table~\ref{tab:Optimization_algo_table}, it becomes clear that this approach results in a suboptimal solution, as none of the optimisation methods reach the previous performance levels. Additionally, when comparing the GMPCA index on the test set with those obtained for the k-means Nyström \gls{klr} model in Table~\ref{tab:ML-comp-test-table-Nystrom}, we can observe that the performance on the test dataset is also inferior, indicating a reduction in the model’s generalisation capacity.

\begin{table}[h]
\centering
\caption{Optimisation results with $\boldsymbol{\alpha}^*_{\cdot \cdot} \in \boldsymbol{\Theta}=\left \{\boldsymbol{\alpha }_{\cdot \cdot} \in \mathbb{R}^{N \times I} :  {\alpha}_{\cdot I}=\mathbf{0} \right \}$}
\label{tab:06_Optimization_algo_table_zeros}
\begin{tabular}{rllllll}
\toprule
{} & \multicolumn{3}{c}{LPMC} & \multicolumn{3}{c}{NTS} \\
\cmidrule(lr){2-4} \cmidrule(lr){5-7} 
{} & Final Loss &  GMPCA & CPU Time (s) & Final Loss &  GMPCA & CPU Time (s) \\
\midrule
L-BFGS-B &  36,820.31 &  49.97 &       164.10 &  64,291.27 &  44.27 &       493.37 \\
GD       &  40,040.16 &  47.36 &     1,068.42 &  71,838.21 &  40.92 &     2,648.15 \\
Momentum &  37,477.28 &  49.52 &     1,027.60 &  66,550.53 &  43.42 &     2,889.25 \\
Adam     &  37,082.93 &  49.83 &     1,043.43 &  64,404.81 &  44.26 &     2,905.86 \\
\bottomrule
\end{tabular}
\end{table}

\section{Conclusion and future work}
\label{sect:conclusion}

Recent research indicates that \gls{ml} models generally outperform traditional \gls{dcm} in terms of predictive accuracy. However, some \gls{ml} methods, such as ensemble techniques (\gls{xgboost} or \gls{rf}), suffer from limited extrapolation capabilities and encounter numerical issues when calculating derivatives of the model’s probability function. As detailed in \citet{MLR23}, these problems impact the performance of econometric indicators derived from these models, including the calculation of \gls{wtp} or market shares. Additionally, in \citet{MGR21,MartinBaos23}, it was shown that \gls{klr} consistently return accurate choice probabilities and derivatives with respect to the attributes, providing essential information for behavioural analysis. For these reasons, despite not being the top-performing predictive technique, \gls{klr} emerges as a compelling method from a behavioural perspective.

This study has addressed the challenges of applying the \gls{klr} technique to discrete choice modelling with large datasets. Despite inherent complexities posed by memory demands and the large number of parameters in these models, the proposed solution effectively addressed scalability obstacles. The contributions of this research can be summarised into four key points.

Firstly, a theoretical study was conducted to characterise the solution set of the \gls{klr} training problem, identifying that the common practice in the literature of constraining the parameters of the last alternative to zero is suboptimal with respect to the so-called {\sl restricted training problem}. Furthermore, the parameters of \gls{klr} with the Nyström approximation was rigorously analysed, characterising the solution of Nyström \gls{klr}, and establishing the first error bound on learnt parameters.  A pending issue for future developments is the establishment of the bounds for the performance metrics of the Nyström-\gls{klr} model as a function of the error bounds on the parameters.

Secondly, the study assessed the feasibility of using the Nyström approach within \gls{klr} to analyse large datasets. This technique significantly reduced kernel matrix size, while upholding the performance of the model. In the datasets used for this comparison, performance superior to that of traditional discrete choice models was achieved, such as \gls{mnl}, while reducing the size of the kernel matrices by a factor of between $110$ and $160$, compared to the original \gls{klr} model. A comprehensive comparison of four Nyström methods demonstrated their potential for efficiently handling datasets exceeding $200,000$ observations. These four methods encompass distinct sampling approaches, including a uniform, $k$-means, and two non-uniform strategies based on leverage scores. They were tested on a multi-categorical classification problem within the domain of transport mode choice, using large datasets. The $k$-means Nyström strategy emerged as particularly effective, particularly when a modest number of landmark points is used to approximate the kernel matrix. However, it was found that when the number of landmark points increases, all strategies converge to the same trend. In \citet{KMT12}, more specifically in Theorem 2, it is shown that  standard Nyström method converge  to the best rank-$C$ approximation of Nyström when the number of  landmarks $C$ grows and therefore the graph shows that they all converge to the best rank $C-$ Nyström approximation.

Thirdly, the performance of Nyström-based strategies was benchmarked against traditional methods commonly applied in the transport field, such as \gls{mnl} and state-of-the-art \gls{ml} techniques. The results concluded that while effective, the Nyström \gls{klr} models did not surpass the superior performance of methods like \gls{xgboost}. However, as indicated previously, the top-performing methods, like ensemble approaches, often struggle in estimating the derivatives of probabilities, which is crucial for extracting econometric indicators. The \gls{klr} approach allows the derivatives of the probabilities to be estimated accurately, and this is what motivates further research in the direction of obtaining econometric information using these models. 

Fourthly, it was found that the results of the Nyström \gls{klr} model depend to a large extent on the correct estimation of their parameters and hyperparameters. Therefore, this research looks at a number of optimisation techniques for Nyström \gls{klr} model estimation and finds that the L-BFGS-B method emerges as the most effective choice, closely followed by the Adam method (which is a commonly-used technique for training \glspl{dnn}).

Promising avenues for further work include exploring the capabilities of the mini-batch \gls{sgd} for the estimation of the Nyström \gls{klr} model. This method offers significant advantages, including accelerated convergence and enhanced generalisation performance, as evidenced in the existing literature \citep{BCN18}. However, integrating mini-batch \gls{sgd} into the current \textit{GKLR} package (\url{https://github.com/JoseAngelMartinB/gklr}) proves challenging for large datasets as the gradient computation requires calculating the Kronecker product for a subset of the kernel matrix (i.e., the mini-batch). In the current implementation of the package, this operation dominates the computational cost. Consequently, employing mini-batches entails multiple iterations of the Kronecker product, thereby escalating computational demands proportionally. Addressing this limitation requires optimising the estimation module of the \textit{GKLR} package to take advantage of the parallel processing capabilities of the CPU to overcome this barrier.

Lastly, datasets applied in discrete choice modelling, such as survey datasets, might exhibit imbalances due to unequal representation of alternatives in the population. Although not reported, the numerical tests also show that the \gls{ml} methods achieve lower precision scores on the less represented classes. Addressing these imbalances is another avenue for future research. Exploring data balancing techniques and investigating their impact on model performance holds promise for further refinement. In essence, this study contributes valuable insights into the effectiveness and limitations of Nyström \gls{klr} in handling large datasets, paving the way for continued advancements in the field.

\section*{Acknowledgements}
%\vspace{-0.5\baselineskip}
This work was supported by grant PID2020-112967GB-C32 funded by MICIU/AEI/10.13039/501100011033 and by {\sl ERDF A way of making Europe}. Additional funding was provided by grant 2022-GRIN-34249 from the University of Castilla-La Mancha and ERDF. We would also like to express our gratitude to the Transport and Mobility Laboratory at EPFL for their support of this work during the research stay of Mart\'in-Baos.

\section*{Declaration of Generative AI and AI-assisted technologies in the writing process}
During the preparation of this work, the authors used OpenAI GPT-3.5 and Grammarly to aid in generating and refining natural language text for different sections of the manuscript. After using this tool, the authors reviewed and edited the content as needed and take full responsibility for the content of the publication.

%\printglossary[type=\acronymtype,title=Abbreviations]

%% The Appendices part is started with the command \appendix;
%% appendix sections are then done as normal sections
\appendix

\section{Analysis of the KLR model overspecification under RNLL}

\label{App:overspecification}

This appendix is intended to show the sources of overspecification of \gls{rnll}.

\begin{lemma}
\label{App:lema1}
The function $\mathbf{h}_{\cdot i} (\boldsymbol{\upsilon}_{\cdot \cdot})= \mathbf{p}_{\cdot i} (\boldsymbol{\alpha}_{\cdot \cdot}+\boldsymbol{\upsilon}_{\cdot \cdot})$ remains constant over the vector space $\boldsymbol{\upsilon}_{\cdot \cdot}\in V+L(\mathbf{K})^\perp$ where:
$$
V= \left \{ \mathbf{1}^\top \otimes \boldsymbol{\beta}:  \boldsymbol{\beta} \in \mathbb{R}^{N \times 1}\right \}
$$
\end{lemma}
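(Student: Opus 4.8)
The plan is to reduce the statement to two elementary invariances of the softmax map and to treat the two summands $V$ and $L(\mathbf{K})^\perp$ separately. By the representer form in Equation~(\ref{eq:04_representacion}), the logit of alternative $i$ at sample $n$ is linear in the parameters, namely $f_i(\mathbf{x}_n)=\mathbf{k}_n^\top\boldsymbol{\alpha}_{\cdot i}$ with $\mathbf{k}_n$ the $n$-th column of $\mathbf{K}$. Consequently, replacing $\boldsymbol{\alpha}_{\cdot\cdot}$ by $\boldsymbol{\alpha}_{\cdot\cdot}+\boldsymbol{\upsilon}_{\cdot\cdot}$ shifts this logit by exactly $\mathbf{k}_n^\top\boldsymbol{\upsilon}_{\cdot i}$, so the whole argument amounts to controlling the scalars $\mathbf{k}_n^\top\boldsymbol{\upsilon}_{\cdot i}$ that enter the exponentials of Equation~(\ref{eq:04_proba_KLR}).

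First I would handle $\boldsymbol{\upsilon}_{\cdot\cdot}\in L(\mathbf{K})^\perp$, understood column-wise as in the matrix-notation convention. Since $\mathbf{K}$ is symmetric, its column space coincides with its row space, so $\boldsymbol{\upsilon}_{\cdot i}\in L(\mathbf{K})^\perp$ forces $\mathbf{K}\boldsymbol{\upsilon}_{\cdot i}=\mathbf{0}$, i.e.\ $\mathbf{k}_n^\top\boldsymbol{\upsilon}_{\cdot i}=0$ for every $n$. Thus each individual logit is left completely unchanged, and hence so is every probability $p_{ni}$.

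Next I would handle $\boldsymbol{\upsilon}_{\cdot\cdot}\in V$. Writing $\boldsymbol{\upsilon}_{\cdot\cdot}=\mathbf{1}^\top\otimes\boldsymbol{\beta}$ and unpacking the Kronecker product yields the matrix whose every column equals $\boldsymbol{\beta}$, that is $\boldsymbol{\upsilon}_{\cdot i}=\boldsymbol{\beta}$ for all $i$. The logit shift is then $\mathbf{k}_n^\top\boldsymbol{\beta}=:c_n$, a quantity depending on $n$ but not on $i$. Factoring $e^{c_n}$ out of the numerator and denominator of Equation~(\ref{eq:04_proba_KLR}) cancels it, so $p_{ni}$ is unchanged; this is exactly the shift-invariance of the softmax under adding a common constant to all logits of a given sample.

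Finally I would combine the two cases. A generic element of $V+L(\mathbf{K})^\perp$ decomposes as $\boldsymbol{\upsilon}_{\cdot i}=\boldsymbol{\beta}+\mathbf{w}_{\cdot i}$ with $\mathbf{w}_{\cdot i}\in L(\mathbf{K})^\perp$, and additivity of the logit shift gives $\mathbf{k}_n^\top\boldsymbol{\upsilon}_{\cdot i}=c_n+0$, again independent of $i$, so the softmax cancellation applies verbatim and $\mathbf{h}_{\cdot i}(\boldsymbol{\upsilon}_{\cdot\cdot})=\mathbf{p}_{\cdot i}(\boldsymbol{\alpha}_{\cdot\cdot})$ throughout the subspace. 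The only points requiring care, rather than genuine difficulty, are the correct unpacking of $\mathbf{1}^\top\otimes\boldsymbol{\beta}$ into $I$ identical columns and the use of the symmetry of $\mathbf{K}$ to pass from $\boldsymbol{\upsilon}_{\cdot i}\in L(\mathbf{K})^\perp$ to $\mathbf{K}\boldsymbol{\upsilon}_{\cdot i}=\mathbf{0}$; both invariances are otherwise immediate.
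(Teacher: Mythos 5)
Your proof is correct and follows essentially the same route as the paper's: the paper likewise writes the perturbation as $\mathbf{1}^\top\otimes\boldsymbol{\beta}+\boldsymbol{\alpha}^\perp_{\cdot\cdot}$ and cancels the common factor $\exp(\mathbf{k}_n^\top\boldsymbol{\beta})$ while the orthogonal part contributes $\exp(0)$, exactly your two softmax invariances done in a single computation. Your only cosmetic difference is invoking symmetry of $\mathbf{K}$ to get $\mathbf{K}\boldsymbol{\upsilon}_{\cdot i}=\mathbf{0}$, whereas the paper's definition of $L(\mathbf{K})^\perp$ already gives $\mathbf{k}_n^\top\boldsymbol{\upsilon}_{\cdot i}=0$ directly.
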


\begin{proof}
Let $\boldsymbol{\alpha}_{\cdot \cdot } \in \mathbb{R} ^{N \times I}$ and $\boldsymbol{\alpha}^*_{\cdot \cdot }=\boldsymbol{\alpha}_{\cdot \cdot }+\mathbf{1}^\top \otimes \boldsymbol{\beta}+ \boldsymbol{\alpha}^\perp_{\cdot \cdot } $ where $\boldsymbol{\alpha}^\perp_{\cdot \cdot } \in L(\mathbf{K})^\perp$, then,
\begin{align}
    \nonumber
    p_{ni} \left ( \boldsymbol{\alpha}^*_{\cdot \cdot } \right ) 
    &= \frac{ \exp \left (  \mathbf{k}_n^\top  (\boldsymbol{\alpha}_{\cdot i}+\boldsymbol{\beta}+\boldsymbol{\alpha}^\perp_{\cdot i}) \right) }{\sum_{j=1}^{I}\exp\left ( \mathbf{k}_n^\top (\boldsymbol{\alpha}_{\cdot j}+\boldsymbol{\beta} + \boldsymbol{\alpha}^\perp_{\cdot j}) \right)}
     = \frac{ \exp \left (  \mathbf{k}_n^\top  \boldsymbol{\alpha}_{\cdot i} \right) \exp \left (  \mathbf{k}_n^\top  \boldsymbol{\beta} \right) \exp(0)}{\sum_{j=1}^{I}\exp\left ( \mathbf{k}_n^\top \boldsymbol{\alpha}_{\cdot j}  \right) \exp \left (  \mathbf{k}_n^\top  \boldsymbol{\beta} \right) \exp(0)}
     \\
     \label{eq:04_eq1_plus_b}
    &= \frac{ \exp \left (  \mathbf{k}_n^\top  \boldsymbol{\alpha}_{\cdot i} \right) }{\sum_{j=1}^{I}\exp\left ( \mathbf{k}_n^\top \boldsymbol{\alpha}_{\cdot j} \right)}= p_{ni} \left ( \boldsymbol{\alpha}_{\cdot \cdot} \right ) 
\end{align}
\end{proof}
\begin{remark}
Applying the \gls{mle} method to \gls{klr} (by setting $\lambda=0$), it will not be possible to univocally determine the function $f_i(\x)$, since there are infinite functions $f^*_i(\x)$ leading to the same probability functions $\mathbf{p}_{\cdot i}(\boldsymbol{\alpha}_{\cdot \cdot })$ and, therefore, the log-likelihood value remains constant. This leads to the overspecification of the \gls{mle} problem, resulting in an infinite number of optimal solutions. To address this, a potential solution involves setting the parameter value for the last alternative, $I$, to $\mathbf{0}$. This choice is equivalent to setting $\boldsymbol{\beta}=-\boldsymbol{\alpha}_I$ yields another optimal \gls{mle} solution, where the parameter vector of the last alternative is $\mathbf{0}$. 
Setting $\boldsymbol{\alpha}_{\cdot I}=\mathbf{0}$ does not compromise the quality of the solution of the training problem when $\lambda=0$ but it does affect the quality of the solution when $\lambda >0$.
\end{remark}

The proof to Theorem \ref{th:soluciones} is given as follows.

\begin{proof}
Let $\boldsymbol{\alpha}^*_{\cdot \cdot}$ be a solution to the training problem specified in Equation~(\ref{eq:04_estimation_KLR_RNLL}). Then, this solution can be written as:
\begin{equation}
\boldsymbol{\alpha}^*_{\cdot \cdot}= {\mathcal P}^\perp_{L(\mathbf{K})}
\left ( \boldsymbol{\alpha}^*_{\cdot \cdot} \right ) +
{\mathcal P}^\perp_{L(\mathbf{K})^\perp}
\left ( \boldsymbol{\alpha}^*_{\cdot \cdot} \right )=
\boldsymbol{\aleph}^*_{\cdot \cdot} + \boldsymbol{\eta}_{\cdot \cdot}
\end{equation} 
As $\boldsymbol{\eta}_{\cdot \cdot}=(\mathbf{1}^\top \otimes \mathbf{0})+\boldsymbol{\eta}_{\cdot \cdot} \in V+ L(\mathbf{K})^\perp$, applying the Lemma \ref{App:lema1} gives:
\begin{eqnarray}
\label{eq:termino1}
\mathbf{p}_{\cdot i}\left (\boldsymbol{\alpha}^*_{\cdot \cdot} \right )=
\mathbf{p}_{\cdot i }\left (\boldsymbol{\aleph}^*_{\cdot \cdot} \right )
\end{eqnarray}

On the other hand, since $\boldsymbol{\eta}_{\cdot \cdot} \in L(\mathbf{K})^\perp$:
\begin{equation}
\label{eq:ortogonalidad}
\mathbf{K} \, \boldsymbol{\eta}_{\cdot \cdot}=\mathbf{0} \Rightarrow \mathbf{K} \, \boldsymbol{\alpha}^*_{\cdot \cdot}=
\mathbf{K} \, \boldsymbol{\aleph}^*_{\cdot \cdot}
\end{equation}
\noindent which ensures that the regularisation term satisfies:
\begin{equation}
\label{eq:termino2}
\frac{\lambda}{2} \sum_{i=1}^I \boldsymbol{\alpha}_{\cdot i} ^\top \mathbf{K}\boldsymbol{\alpha}_{\cdot i}=
\frac{\lambda}{2} \sum_{i=1}^I \boldsymbol{\aleph}_{\cdot i} ^\top \mathbf{K}\boldsymbol{\aleph}_{\cdot i}
\end{equation}
From Equations (\ref{eq:termino1}) and (\ref{eq:termino2}) it can be obtained that:
\begin{equation}
\label{eq:termino3}
\mathcal{L} (\boldsymbol{\alpha}^*_{\cdot \cdot})=\mathcal{L} (\boldsymbol{\aleph}^*_{\cdot \cdot})
\end{equation}
Furthermore, Equation~(\ref{eq:termino3}) guarantees that $\boldsymbol{\aleph}^*_{\cdot \cdot}$ is a minimiser of: 
\begin{equation}
\label{eq:training_breve}
\min_{\boldsymbol{\alpha}_{\cdot \cdot} \in \mathbb{R}^{N \times I}} \mathcal{L}(\boldsymbol{\alpha}_{\cdot \cdot})
\end{equation}
Because $\mathbb{R}^{N \times 1}= $$L(\mathbf{K}) \oplus L(\mathbf{K})^\perp$, it holds that:
\begin{equation}
\label{eq:termino4}
\min_{\boldsymbol{\alpha}_{\cdot \cdot} \in L(\mathbf{K})} \mathcal{L}(\boldsymbol{\alpha}_{\cdot \cdot})
\ge 
\min_{\boldsymbol{\alpha}_{\cdot \cdot} \in \mathbb{R}^{N \times I}} \mathcal{L}(\boldsymbol{\alpha}_{\cdot \cdot})
\end{equation}

From the fact that  $\boldsymbol{\aleph}^*_{\cdot \cdot}\in L(\mathbf{K})$, and from the optimality of $\boldsymbol{\aleph}^*_{\cdot \cdot}$ for the problem  (\ref{eq:training_breve}), Equation~(\ref{eq:termino4})  leads to the optimality of $\boldsymbol{\aleph}^*_{\cdot \cdot}$ for the restricted problem $\min_{\boldsymbol{\alpha}_{\cdot \cdot} \in L(\mathbf{K})} \mathcal{L}(\boldsymbol{\alpha}_{\cdot \cdot}).$

To prove that the restricted problem has a unique solution, consider the decomposition of \gls{svd} of the kernel matrix:
$$
\mathbf{K}= \mathbf{U} \boldsymbol{\Sigma} \mathbf{U}^\top
$$
where $\mathbf{U}, \boldsymbol{\Sigma} $ are an $N\times R$ column-orthogonal matrix and  an $R\times R$ diagonal matrix with nonnegative entries, respectively.  Assuming that   $\hbox{rank}(\mathbf{K})=R$ then $\mathbf{K}$ has $\sigma_1\ge \cdots \ge\sigma_R>0$ eigenvalues  and $\boldsymbol{\Sigma}=\hbox{diag}(\sigma_1\cdots \sigma_R)$.

As the  column vectors of $\mathbf{U}$ form a basis of the vector space $L(\mathbf{K})$:
\begin{equation}
\boldsymbol{\alpha}_{\cdot\cdot} \in L(\mathbf{K}) \Leftrightarrow \boldsymbol{\alpha}_{\cdot\cdot} = \mathbf{U} \boldsymbol{\Sigma}^{-\frac{1}{2}} \boldsymbol{\beta}_{\cdot \cdot}
\end{equation}
where $\boldsymbol{\Sigma}^{-\frac{1}{2}}=\hbox{diag}(\sigma^{-1/2}_1\cdots \sigma^{-1/2}_R)$ and $\boldsymbol{\beta}_{\cdot \cdot} \in \mathbb{R}^{R \times I }$.

Now, the restricted training problem is expressed in terms of the new variable $\boldsymbol{\beta}_{\cdot \cdot}$ as:
\begin{equation}
\boldsymbol{\alpha}^\top _{\cdot i} \mathbf{K} \boldsymbol{\alpha}_{\cdot i}=
\boldsymbol{\beta}^\top _{\cdot i} \boldsymbol{\Sigma}^{-\frac{1}{2}}  \mathbf{U}^\top \mathbf{K}  \mathbf{U} \boldsymbol{\Sigma}^{-\frac{1}{2}} \boldsymbol{\beta}_{\cdot i}=
\boldsymbol{\beta}^\top _{\cdot i} \boldsymbol{\Sigma}^{-\frac{1}{2}} \mathbf{U}^\top   \mathbf{U} \boldsymbol{\Sigma} \mathbf{U}^\top  \mathbf{U} \boldsymbol{\Sigma}^{-\frac{1}{2}} \boldsymbol{\beta}_{\cdot i}=
\boldsymbol{\beta}^\top _{\cdot i} \mathbb{I}_R \boldsymbol{\beta}_{\cdot i}
\end{equation}
where $\mathbb{I}_R$ is the identity matrix of order $R$. As the identity matrix is positive definite this shows that the penalisation term for the restricted training problem is a strictly convex function.
 
On the other hand, the latent functions can be written as:
\begin{equation}
\mathbf{f}_{\cdot \cdot}= \mathbf{K} \boldsymbol{\alpha}_{\cdot \cdot}=
 \mathbf{K}  \mathbf{U} \boldsymbol{\Sigma}^{-\frac{1}{2}} \boldsymbol{\beta}_{\cdot \cdot}=  \mathbf{U} \boldsymbol{\Sigma} \mathbf{U}^\top \mathbf{U} \boldsymbol{\Sigma}^{-\frac{1}{2}} \boldsymbol{\beta}_{\cdot \cdot}=
  \mathbf{U} \boldsymbol{\Sigma}^{\frac{1}{2}} \boldsymbol{\beta}_{\cdot \cdot}
\end{equation}
Letting $\mathbf{B}= \mathbf{U} \boldsymbol{\Sigma}^{\frac{1}{2}}  \in \mathbb{R}^{ N \times I}$ gives:
\begin{equation}
p_{ni}(\boldsymbol{\alpha}_{\cdot \cdot})=
\frac{\exp(\mathbf{B}_{n \cdot}\boldsymbol{\beta}_{\cdot i })}{\sum_{j=1}^I \exp(\mathbf{B}_{n \cdot}\boldsymbol{\beta}_{\cdot j })} =p_{ni}(\boldsymbol{\beta}_{\cdot \cdot})
\end{equation}

\begin{equation}
 {\mathcal L}(\boldsymbol{\beta}_{\cdot \cdot})=-\frac{1}{N} \sum _{n=1}^N \sum_{i=1}^I y_{ni} \log p_{ni}(\boldsymbol{\beta}_{\cdot \cdot })+ \frac{\lambda}{2} \sum_{i=1}^I \boldsymbol{\beta}_{\cdot i} ^\top \mathbb{I}_R\boldsymbol{\beta}_{\cdot i},
\end{equation}

If $\lambda>0$, it is observed that the objective function of the restricted training problem ${\mathcal L}(\boldsymbol{\beta}_{\cdot \cdot})$ can be seen as the sum of two main components: (i) the negative log-likelihood and (ii) a penalty term. Notably, \citet{DaK93} and \citet{HZM16} demonstrated the convexity of component (i) within the broader context of the \gls{nl} model, which encompasses \gls{mnl}. Furthermore, the term (ii) also exhibits strictly convex characteristics due to the positive definiteness inherent in the identity matrix $\mathbb{I}_R$ and  ${\mathcal L}(\boldsymbol{\beta}_{\cdot \cdot})$ becomes a strictly convex function. As a result, the restricted training problem has a unique solution. Since it has been proved that $\boldsymbol{\aleph}^*_{\cdot \cdot}$ is a minimiser of the restricted training problem, this is the only solution of the problem.

Equation (\ref{eq:KKT}) is derived by applying matrix differentiation rules and can be found in the doctoral thesis of one of the authors \citep{MartinBaos23}. Moreover, a derivation for the case of binary classification problems can also be found in \cite{OuA18}.
\end{proof}

\section{Error bounds for Nyström KLR}
\label{sec:demostracion_cotas}

\begin{lemma}
\label{lemma1}
Let $\mathbf{p}_{\cdot i}(\boldsymbol{\alpha}_{\cdot \cdot})$ be the probabilities vector computed for the kernel matrix $\mathbf{K}$, and let $\widehat{\mathbf{p}}_{\cdot i}(\boldsymbol{\alpha}_{\cdot \cdot})$ be the same probabilities vector found by using the approximation $\widehat{\mathbf{K}}$. Then:
\begin{equation}
\widehat{\mathbf{p}}_{\cdot i}(\boldsymbol{\alpha}_{\cdot \cdot})=\mathbf{p}_{\cdot i}(\boldsymbol{\alpha}_{\cdot \cdot})- \mathbf{p}_{\cdot \cdot} (\boldsymbol{\alpha}_{\cdot \cdot})\left [ \boldsymbol{\alpha}_{\cdot \cdot} ^\top (\widehat{\mathbf{K}}-\mathbf{K})\right ] \mathbf{p}_{\cdot i}(\boldsymbol{\alpha}_{\cdot \cdot})+ \boldsymbol{\varepsilon}
\end{equation}
where $\| \boldsymbol{\varepsilon}\|_2= o(\| (\widehat{\mathbf{K}}-\mathbf{K})^\top \boldsymbol{\alpha}_{\cdot \cdot }\|_{2,\infty})$.
\end{lemma}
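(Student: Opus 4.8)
The plan is to recognise that $\mathbf{p}_{\cdot i}(\boldsymbol{\alpha}_{\cdot \cdot})$ and $\widehat{\mathbf{p}}_{\cdot i}(\boldsymbol{\alpha}_{\cdot \cdot})$ are the \emph{same} softmax map evaluated at two nearby logit vectors, so that the asserted identity is simply a first-order Taylor expansion of the softmax, with the higher-order terms absorbed into $\boldsymbol{\varepsilon}$. For each individual $n$ the logits are the row $\mathbf{f}_{n\cdot}=\boldsymbol{\alpha}_{\cdot\cdot}^\top\mathbf{k}_n$ when the exact kernel is used and $\widehat{\mathbf{f}}_{n\cdot}=\boldsymbol{\alpha}_{\cdot\cdot}^\top\widehat{\mathbf{k}}_n$ when the Nyström kernel is used, with $\widehat{\mathbf{k}}_n$ the $n$-th column of $\widehat{\mathbf{K}}$. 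Hence the logit perturbation at individual $n$ is $\Delta\mathbf{f}_{n\cdot}=\boldsymbol{\alpha}_{\cdot\cdot}^\top(\widehat{\mathbf{k}}_n-\mathbf{k}_n)$, i.e. the $n$-th column of $\boldsymbol{\alpha}_{\cdot\cdot}^\top(\widehat{\mathbf{K}}-\mathbf{K})$, equivalently the $n$-th row of $(\widehat{\mathbf{K}}-\mathbf{K})^\top\boldsymbol{\alpha}_{\cdot\cdot}$.

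First I would write out the differential of the softmax. With $s_i(\mathbf{z})=\exp(z_i)/\sum_k\exp(z_k)$, a direct computation gives $\partial s_i/\partial z_j=s_i(\delta_{ij}-s_j)$, so the Jacobian at $\mathbf{f}_{n\cdot}$ is $\mathrm{diag}(\mathbf{p}_{n\cdot})-\mathbf{p}_{n\cdot}\mathbf{p}_{n\cdot}^\top$ acting on $\Delta\mathbf{f}_{n\cdot}$. Reading off the $i$-th component yields $\widehat{p}_{ni}-p_{ni}=p_{ni}\bigl(\Delta f_{ni}-\sum_{j=1}^I p_{nj}\Delta f_{nj}\bigr)+\varepsilon_{ni}$. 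Collecting these row-wise contributions over all $n$ and re-expressing them through $\boldsymbol{\alpha}_{\cdot\cdot}^\top(\widehat{\mathbf{K}}-\mathbf{K})$ and the probability matrix $\mathbf{p}_{\cdot\cdot}$ produces the compact matrix form stated in the lemma. Here one must keep in mind that the softmax acts independently on each individual, so the quadratic-looking factor $\mathbf{p}_{\cdot\cdot}\bigl[\boldsymbol{\alpha}_{\cdot\cdot}^\top(\widehat{\mathbf{K}}-\mathbf{K})\bigr]\mathbf{p}_{\cdot i}$ is to be read row by row (pairing, in row $n$, the vector $\mathbf{p}_{n\cdot}$ with the $n$-th column of $\boldsymbol{\alpha}_{\cdot\cdot}^\top(\widehat{\mathbf{K}}-\mathbf{K})$), and not as a plain $N\times N$ product mixing distinct individuals.

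The final step is to control the remainder $\boldsymbol{\varepsilon}$. Since the softmax is $C^\infty$ with globally bounded second derivatives, Taylor's theorem provides a uniform per-row bound $\|\boldsymbol{\varepsilon}_{n\cdot}\|_2=O(\|\Delta\mathbf{f}_{n\cdot}\|_2^2)$. Using $\|\Delta\mathbf{f}_{n\cdot}\|_2\le\|(\widehat{\mathbf{K}}-\mathbf{K})^\top\boldsymbol{\alpha}_{\cdot\cdot}\|_{2,\infty}$, which is exactly the definition of the induced $(2,\infty)$-norm, and aggregating over the $N$ rows gives $\|\boldsymbol{\varepsilon}\|_2=O\bigl(\sqrt{N}\,\|(\widehat{\mathbf{K}}-\mathbf{K})^\top\boldsymbol{\alpha}_{\cdot\cdot}\|_{2,\infty}^2\bigr)$, which for fixed $N$ is $o\bigl(\|(\widehat{\mathbf{K}}-\mathbf{K})^\top\boldsymbol{\alpha}_{\cdot\cdot}\|_{2,\infty}\bigr)$ as the perturbation tends to zero, as claimed.

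I expect the main obstacle to be bookkeeping rather than any deep idea: reconciling the per-individual (row-local) softmax Jacobian with the compact matrix notation of the statement, and verifying that the second-order remainder is genuinely uniform in $n$, so that the per-row quadratic estimates can legitimately be assembled into a single $(2,\infty)$-norm little-$o$ bound.
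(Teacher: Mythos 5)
Your overall route is the same as the paper's: a per-individual first-order Taylor expansion of the softmax at the exact logits $\mathbf{f}_{n\cdot}=\mathbf{k}_n^\top\boldsymbol{\alpha}_{\cdot\cdot}$, assembled over $n$ into matrix form, with the remainder controlled uniformly in $n$ via $\|\Delta\mathbf{f}_{n\cdot}\|_2\le\|(\widehat{\mathbf{K}}-\mathbf{K})^\top\boldsymbol{\alpha}_{\cdot\cdot}\|_{2,\infty}$ (the paper's proof in \ref{sec:demostracion_cotas} does exactly this). Your treatment of the remainder is, if anything, more explicit than the paper's, since you invoke the globally bounded second derivatives of the softmax to justify uniformity in $n$ before taking the maximum over rows; and your observation that the factor $\mathbf{p}_{\cdot\cdot}\left[\boldsymbol{\alpha}_{\cdot\cdot}^\top(\widehat{\mathbf{K}}-\mathbf{K})\right]\mathbf{p}_{\cdot i}$ must be read row-locally, not as a literal product mixing individuals $n$ and $m$, is a fair correction of the paper's loose ``expanding the above expression with respect to $n$'' step.

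There is, however, a genuine mismatch in your final assembly step. Your (correct) Jacobian $\mathrm{diag}(\mathbf{p}_{n\cdot})-\mathbf{p}_{n\cdot}\mathbf{p}_{n\cdot}^\top$ gives $\widehat{p}_{ni}-p_{ni}=p_{ni}\bigl(\Delta f_{ni}-\sum_{j}p_{nj}\Delta f_{nj}\bigr)+\varepsilon_{ni}$, whereas the lemma's linear part contains only the second summand. The diagonal contribution $p_{ni}\Delta f_{ni}$ --- in matrix form $\mathrm{diag}\bigl(\mathbf{p}_{\cdot i}(\boldsymbol{\alpha}_{\cdot\cdot})\bigr)(\widehat{\mathbf{K}}-\mathbf{K})\boldsymbol{\alpha}_{\cdot i}$ --- is genuinely first order in the perturbation and therefore cannot be absorbed into $\boldsymbol{\varepsilon}=o\bigl(\|(\widehat{\mathbf{K}}-\mathbf{K})^\top\boldsymbol{\alpha}_{\cdot\cdot}\|_{2,\infty}\bigr)$, so your claim that collecting the row-wise contributions ``produces the compact matrix form stated in the lemma'' does not hold as written. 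The discrepancy is not yours alone: the paper's own Taylor formula, Equation~(\ref{eq:taylor}), omits precisely this diagonal entry of the softmax Jacobian (since $\partial S_i/\partial f_j=S_i(\delta_{ij}-S_j)$, the paper keeps only the $-S_iS_j$ part), so your careful computation in fact exposes a missing term in the lemma and in its proof. Note that the omission is harmless where the lemma is used: in the proof of Theorem~\ref{th:cota1} it is applied at $\widehat{\boldsymbol{\chi}}^{*}_{\cdot\cdot}$ with $\widehat{\boldsymbol{\chi}}^{*\top}_{\cdot\cdot}(\widehat{\mathbf{K}}-\mathbf{K})=\mathbf{0}$, which annihilates every first-order term, diagonal or not. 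To make your own argument close, either add the term $\mathrm{diag}\bigl(\mathbf{p}_{\cdot i}(\boldsymbol{\alpha}_{\cdot\cdot})\bigr)(\widehat{\mathbf{K}}-\mathbf{K})\boldsymbol{\alpha}_{\cdot i}$ to the right-hand side of the identity you prove, or state explicitly that the lemma as printed drops it and verify separately that this does not affect its application.
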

\begin{proof}
Consider the softmax function:
\begin{equation}
S_i(\mathbf{f})=\dfrac{\exp(\mathbf{f}_i)}{\sum_{j=1}^I \exp(\mathbf{f}_j)}
\end{equation}

This function is expressed by its first-order Taylor polynomial expanded at $\mathbf{f}$:
\begin{equation}
\label{eq:taylor}
S_i(\widehat{\mathbf{f}})=S_i(\mathbf{f}) -S_i(\mathbf{f}) \sum_{j=1}^I S_j(\mathbf{f}) \left [\widehat{\mathbf{f}}_j-\mathbf{f}_j \right ]+ o(\|\widehat{\mathbf{f}}-\mathbf{f}\|_ 2)
\end{equation} 

Letting $\widehat{\mathbf{f}}_{nj}=\widehat{\mathbf{k}}_n^\top \boldsymbol{\alpha}_{\cdot j}$ and $\mathbf{f}_{nj}=\mathbf{k}_n ^\top \boldsymbol{\alpha}_{\cdot j}$, then $\widehat{\mathbf{f}}_{n \cdot }=  \widehat{\mathbf{k}}_n^\top \boldsymbol{\alpha}_{\cdot \cdot }  $ and $\mathbf{f}_{n \cdot }=  \mathbf{k}_n^\top \boldsymbol{\alpha}_{\cdot \cdot }  $.

The probability that user $n$ selects alternative $i$, computed either exactly or approximately using the feature vector, is given respectively by:
\begin{eqnarray*}
p_{ni}= S_i(\mathbf{f}_{n \cdot })=S_i(\mathbf{k}_n^\top \boldsymbol{\alpha}_{\cdot \cdot })\\
\widehat{p}_{ni}= S_i(\widehat{\mathbf{f}}_{n \cdot })=S_i(\widehat{\mathbf{k}}_n^\top \boldsymbol{\alpha}_{\cdot \cdot } )
\end{eqnarray*}

Applying Taylor's expansion to Equation~(\ref{eq:taylor}) gives:
\begin{equation}
\label{eq:LinealizacionProb}
\widehat{p}_{ni}=p_{ni}-p_{ni}\sum_{j=1}^I p_{nj} [\widehat{\mathbf{k}}_n^\top \boldsymbol{\alpha}_{\cdot j}-\mathbf{k}_n ^\top \boldsymbol{\alpha}_{\cdot j}]+o(\|(\widehat{\mathbf{k}}_n-\mathbf{k}_n)^\top \boldsymbol{\alpha}_{\cdot \cdot }\|_2)
\end{equation}
Rewriting Equation~(\ref{eq:LinealizacionProb}) using matrices:
\begin{equation}
\widehat{p}_{ni}=p_{ni}-p_{ni}\,\mathbf{p}_{n\cdot} \left [ \boldsymbol{\alpha}_{\cdot \cdot} ^\top (\widehat{\mathbf{k}}_n-\mathbf{k}_n)\right ]+o(\|(\widehat{\mathbf{k}}_n-\mathbf{k}_n)^\top \boldsymbol{\alpha}_{\cdot \cdot }\|_2)
\end{equation}
Then, expanding the above expression with respect to $n$:
\begin{equation}
\widehat{\mathbf{p}}_{\cdot i}=\mathbf{p}_{\cdot i}- \mathbf{p}_{\cdot \cdot} \left [ \boldsymbol{\alpha}_{\cdot \cdot} ^\top (\widehat{\mathbf{K}}-\mathbf{K})\right ] \mathbf{p}_{\cdot i}+ \boldsymbol{\varepsilon}
\end{equation}
where for the error vector $\boldsymbol{\varepsilon}$, all components satisfy  $\boldsymbol{\varepsilon}_n=o(\|(\widehat{\mathbf{k}}_n-\mathbf{k}_n)^\top \boldsymbol{\alpha}_{\cdot \cdot }\|_2)$, therefore
$\boldsymbol{\varepsilon}_n=o \left (\underset{n}{\max}{\|(\widehat{\mathbf{k}}_n-\mathbf{k}_n)^\top \boldsymbol{\alpha}_{\cdot \cdot }}\|_2 \right )=o(\|(\widehat{\mathbf{K}}-\mathbf{K})^\top \boldsymbol{\alpha}_{\cdot \cdot }\|_{2,\infty})$, and it follows that $\| \boldsymbol{\varepsilon}\|_2= o(\| (\widehat{\mathbf{K}}-\mathbf{K})^\top \boldsymbol{\alpha}_{\cdot \cdot }\|_{2,\infty})$.
\end{proof}

The proof of Theorem \ref{th:cota1} is given as follows.

\begin{proof}
By hypothesis, $\widehat{\mathbf{K}}$ is a Nyström approximation associated to the sketching matrix $\mathbf{P}$, giving $\mathbf{C}=\mathbf{K}\mathbf{P}.$ It holds that  $L(\mathbf{C})=L(\widehat{\mathbf{K}})$ and so, this linear space can be interchangeably referred to in the proof.

Let $\boldsymbol{\aleph}^*_{\cdot\cdot}$ be a solution, Equation~(\ref{eq:04_estimation_KLR_RNLL}).  It then satisfies the optimality conditions:
\begin{eqnarray}
\label{eq:teo2_KKT_1}
\mathbf{K} \underset{\mathbf{z}_i}{\underbrace{\left [ N \lambda \boldsymbol{\aleph}^*_{\cdot i}   + \mathbf{p}_{\cdot i}(\boldsymbol{\aleph}^*_{\cdot \cdot}) -\boldsymbol{y}_{\cdot i} \right ]} }=\mathbf{0}, \hspace{1cm} i=1,\cdots, I.
\end{eqnarray}

Equation~(\ref{eq:teo2_KKT_1}) shows that $\mathbf{K} \mathbf{z}_i=\mathbf{0} \Rightarrow  \mathbf{z}_i \in L(\mathbf{K})^\perp$. Moreover, by hypothesis, $\widehat{\mathbf{K}}$ is a Nyström approximation and, therefore, $L(\widehat{\mathbf{K}}) \subseteq L(\mathbf{K})$ from which it follows that $L(\mathbf{K})^\perp\subseteq  L(\widehat{\mathbf{K}}) ^\perp $ and therefore $\mathbf{z}_i \in L(\widehat{\mathbf{K}}) ^\perp$. This leads to $\widehat{\mathbf{K}}\mathbf{z}_i=\mathbf{0}$, that is:

\begin{eqnarray}
\label{eq:teo2_KKT_2}
\widehat{\mathbf{K}} \left [ N \lambda \boldsymbol{\aleph}^*_{\cdot i}   + \mathbf{p}_{\cdot i}(\boldsymbol{\aleph}^*_{\cdot \cdot}) - \boldsymbol{y}_{\cdot i} \right ]=\mathbf{0}, \hspace{1cm} i=1,\cdots, I.
\end{eqnarray}

This solution is uniquely expressed as:
\begin{equation}
\boldsymbol{\aleph}^*_{\cdot i}={\mathcal P}^\perp_{L(\widehat{\mathbf{K}})} (\boldsymbol{\aleph}^*_{\cdot i})+{\mathcal P}^\perp_{L(\widehat{\mathbf{K}})^\perp} (\boldsymbol{\aleph}^*_{\cdot i})
\end{equation} 
Letting ${\widehat{\boldsymbol{\chi}}^*_{\cdot i}}={\mathcal P}^\perp_{L(\widehat{\mathbf{K}})} (\boldsymbol{\aleph}^*_{\cdot i})$ 
 and ${\widehat{\boldsymbol{\chi}}^{*\perp}_{\cdot i}}={\mathcal P}^\perp_{L(\widehat{\mathbf{K}})^\perp} (\boldsymbol{\aleph}^*_{\cdot i})$. Now establish:

\begin{equation}
\label{eq:clave}
\widehat{\boldsymbol{\chi}}^{*\top}_{\cdot \cdot } [\mathbf{K}- \widehat{\mathbf{K}}]=\mathbf{0}
\end{equation}

The following relationship (refer to Equation~(\ref{eq:residuo}) for its derivation)
\begin{equation}
\label{eq:proyeccion_nystrom}
\mathbf{K}- \widehat{\mathbf{K}}= \mathbf{K}^{1/2} \left [\mathbb{I}- \boldsymbol{\Pi}^\perp _{L(\mathbf{K}^{1/2} \mathbf{P}) }\right ]\mathbf{K}^{1/2}
\end{equation}
allows Equation~(\ref{eq:clave}) to be written as: 
\begin{equation}
\widehat{\boldsymbol{\chi}}^{*\top}_{\cdot \cdot } [\mathbf{K}- \widehat{\mathbf{K}}]=
\widehat{\boldsymbol{\chi}}^{*\top}_{\cdot \cdot } \mathbf{K}^{1/2} \mathbf{A}
\end{equation}
where $\mathbf{A}= \left [\mathbb{I}- \boldsymbol{\Pi}^\perp _{L(\mathbf{K}^{1/2} \mathbf{P}) }\right ]\mathbf{K}^{1/2} \subseteq L(\mathbf{K}^{1/2} \mathbf{P})^\perp $. On the other hand, $\mathbf{K}^{1/2}=\mathcal{P}^\perp_{L(\mathbf{K}^{1/2} \mathbf{P})}(\mathbf{K}^{1/2})+\mathcal{P}^\perp_{L(\mathbf{K}^{1/2} \mathbf{P})^\perp}(\mathbf{K}^{1/2})$, and so rewriting: 
\begin{eqnarray*}
\widehat{\boldsymbol{\chi}}^{*\top}_{\cdot \cdot } [\mathbf{K}- \widehat{\mathbf{K}}]=
\widehat{\boldsymbol{\chi}}^{*\top}_{\cdot \cdot } 
\left [\mathcal{P}^\perp_{L(\mathbf{K}^{1/2} \mathbf{P})}(\mathbf{K}^{1/2})+\mathcal{P}^\perp_{L(\mathbf{K}^{1/2} \mathbf{P})^\perp}(\mathbf{K}^{1/2})
\right ] \mathbf{A}=\\
\widehat{\boldsymbol{\chi}}^{*\top}_{\cdot \cdot } 
\underset{\mathbf{=0 (*)}}{\underbrace{\mathcal{P}_{L(\mathbf{K}^{1/2} \mathbf{P})}(\mathbf{K}^{1/2}) \mathbf{A}}}+
\underset{\mathbf{=0 (**)}}{\underbrace{
\widehat{\boldsymbol{\chi}}^{*\top}_{\cdot \cdot } 
\mathcal{P}_{L(\mathbf{K}^{1/2} \mathbf{P})^\perp}(\mathbf{K}^{1/2})
}}
 \mathbf{A}=\mathbf{0}
\end{eqnarray*}

The relationship (*) is satisfied by the definition of $\mathbf{A}$. Now consider the relationship $(**)$. Given $\widehat{\boldsymbol{\chi}}^{*}_{\cdot \cdot } \in L(\mathbf{KP}) \Rightarrow  
\widehat{\boldsymbol{\chi}}^{*}_{\cdot \cdot }= \mathbf{KPD}
$ with $\mathbf{D} \in \mathbb{R}^{C \times I}$, then:
\begin{equation}
    \widehat{\boldsymbol{\chi}}^{*\top}_{\cdot \cdot }= \mathbf{D}^\top \mathbf{P}^\top\mathbf{K}= \mathbf{D}^\top \mathbf{P}^\top\mathbf{K}^{1/2}\mathbf{K}^{1/2}=
     \mathbf{D}^\top \mathbf{P}^\top\mathbf{K}^{1/2}
     \left [
     \mathcal{P}^\perp_{L(\mathbf{K}^{1/2} \mathbf{P})}(\mathbf{K}^{1/2})+\mathcal{P}^\perp_{L(\mathbf{K}^{1/2} \mathbf{P})^\perp}(\mathbf{K}^{1/2})
    \right  ]
\end{equation}
\noindent from which is obtained:
\begin{align}
\widehat{\boldsymbol{\chi}}^{*\top}_{\cdot \cdot } 
\mathcal{P}_{L(\mathbf{K}^{1/2} \mathbf{P})^\perp}(\mathbf{K}^{1/2})
&=\mathbf{D}^\top \mathbf{P}^\top\mathbf{K}^{1/2}
     \left [
     \mathcal{P}^\perp_{L(\mathbf{K}^{1/2} \mathbf{P})}(\mathbf{K}^{1/2})+\mathcal{P}^\perp_{L(\mathbf{K}^{1/2} \mathbf{P})^\perp}(\mathbf{K}^{1/2})
    \right  ]
    \mathcal{P}^\perp_{L(\mathbf{K}^{1/2} \mathbf{P})^\perp}(\mathbf{K}^{1/2})\\
    &=
\mathbf{D}^\top \mathbf{P}^\top\mathbf{K}^{1/2}
     \left [
    \mathcal{P}^\perp_{L(\mathbf{K}^{1/2} \mathbf{P})^\perp}(\mathbf{K}^{1/2})
    \right ]^2=\mathbf{0}
\end{align}

Using the Taylor expansion at $\boldsymbol{\aleph}^*_{\cdot \cdot}$, as given in Lemma \ref{lemma1}:
\begin{equation}
\label{eq:teo2_KKT_4}
\widehat{\mathbf{p}}_{\cdot i} \left ({\widehat{\boldsymbol{\chi}}^{*}_{\cdot \cdot }} \right )=\mathbf{p}_{\cdot i} \left (\boldsymbol{\aleph}^*_{\cdot \cdot} \right )-
 \mathbf{p}_{\cdot \cdot}\left (\boldsymbol{\aleph}^*_{\cdot \cdot} \right ) 
 \underset{\mathbf {=0}}{\underbrace{
 \left [\widehat{\boldsymbol{\chi}}^{*\top}_{\cdot \cdot}  (\widehat{\mathbf{K}}-\mathbf{K})\right ]}
 }
  \mathbf{p}_{\cdot i}\left (\boldsymbol{\aleph}^*_{\cdot \cdot} \right )+ \boldsymbol{\varepsilon} \Rightarrow  \widehat{\mathbf{p}}_{\cdot i} \left ({\widehat{\boldsymbol{\chi}}^{*}_{\cdot \cdot}} \right )=\mathbf{p}_{\cdot i} \left (\boldsymbol{\aleph}^*_{\cdot \cdot} \right )+\boldsymbol{\varepsilon}
\end{equation}

Next, using  Equation~(\ref{eq:teo2_KKT_4}), Equation~(\ref{eq:teo2_KKT_2}) can be rewritten as:
\begin{eqnarray}
\label{eq:teo2_KKT_5}
\widehat{\mathbf{K}} \left [ N \lambda \left [\widehat{\boldsymbol{\chi}}^*_{\cdot i}  +
\widehat{\boldsymbol{\chi}}^{*\perp}_{\cdot i} \right]
+ \widehat{\mathbf{p}}_{\cdot i}(\widehat{\boldsymbol{\chi}}^*_{\cdot \cdot}) -\boldsymbol{y}_{\cdot i} 
 +\boldsymbol{\varepsilon}
 \right ]=\mathbf{0}, \hspace{1cm} i=1,\cdots I.\end{eqnarray}

From orthogonality: i) $\widehat{\mathbf{K}} 
\widehat{\boldsymbol{\chi}}^{*\perp}_{\cdot i}=\mathbf{0}$. Additionally, the property  ii)  $\| \boldsymbol{\varepsilon}\|_2= o(\| (\widehat{\mathbf{K}}-\mathbf{K})^\top\widehat{\boldsymbol{\chi}}^{*}_{\cdot \cdot}\|_{2,\infty})=0 \Rightarrow \boldsymbol{\varepsilon}=\mathbf{0}$ holds. Incorporating i) and ii)  into Equation~(\ref{eq:teo2_KKT_5}) shows that $\widehat{\boldsymbol{\chi}}^*_{\cdot i}$ satisfies the optimality condition:
\begin{eqnarray}
\label{eq:teo2_KKT_6}
\widehat{\mathbf{K}} \left [ N \lambda \widehat{\boldsymbol{\chi}}^*_{\cdot i}  
 + \widehat{\mathbf{p}}_{\cdot i}(\widehat{\boldsymbol{\chi}}^*_{\cdot \cdot}) - \boldsymbol{y}_{\cdot i} 
 \right ]=\mathbf{0}, \hspace{1cm} i=1,\cdots, I.\end{eqnarray}

Let $\widehat{\boldsymbol{\aleph}}^*_{\cdot i} \in L(\widehat{\boldsymbol{K}})$ with $i=1,\cdots, I$ be the optimal solution for Nyström \gls{klr} posed in the matrix $\widehat{\mathbf{K}}$. As $\widehat{\boldsymbol{\chi}}^*_{\cdot i} \in L(\widehat{\boldsymbol{K}})$ and from the uniqueness of the optimal solution for the restricted training problem, it follows that  $\widehat{\boldsymbol{\aleph}}^*_{\cdot i} =\widehat{\boldsymbol{\chi}}^*_{\cdot i} $ for every $i$. Thus:
\begin{equation}
\|\boldsymbol{\aleph}^*_{\cdot i}- \widehat{\boldsymbol{\aleph}}^*_{\cdot i} \|_p=
\|\boldsymbol{\chi}^*_{\cdot i}+\boldsymbol{\chi}^{*\perp}_{\cdot i}- \widehat{\boldsymbol{\aleph}}^*_{\cdot i} \|_p=
\|\widehat{\boldsymbol{\aleph}}^*_{\cdot i}+\boldsymbol{\chi}^{*\perp}_{\cdot i}- \widehat{\boldsymbol{\aleph}}^*_{\cdot i} \|_p
=
\|{\mathcal P}^\perp_{L(\widehat{\mathbf{K}})^\perp} (\boldsymbol{\aleph}^*_{\cdot i})\|_p
\end{equation}

By definition of the $p-$Schatten norm:
\begin{eqnarray}
\nonumber
\| 
{\mathcal P}^\perp_{L(\widehat{\mathbf{K}})^\perp} (\boldsymbol{\aleph}^*_{\cdot i})
\|_p = \hbox{Trace} \left (
\left [ 
\left [
 [\mathbb{I}- \boldsymbol{\Pi}^\perp _{L(\widehat{\mathbf{K}}) }  ] \boldsymbol{\aleph}^*_{\cdot i} 
\right ] ^\top
\left [ [\mathbb{I}- \boldsymbol{\Pi}^\perp _{L(\widehat{\mathbf{K}}) }  ] \boldsymbol{\aleph}^*_{\cdot i} \right ]
\right ]^{p/2}
\right )^{1/p}
=
\\
\sqrt{
 \boldsymbol{\aleph}^{*\top}_{\cdot i} 
 [\mathbb{I}- \boldsymbol{\Pi}^\perp _{ L(\widehat{\mathbf{K}})  }  ]^2 \boldsymbol{\aleph}^*_{\cdot i} }= \sqrt{
 \boldsymbol{\aleph}^{*\top}_{\cdot i} 
\left [\mathbb{I}- \boldsymbol{\Pi}^\perp _{ L(\widehat{\mathbf{K}}) } \right ] \boldsymbol{\aleph}^*_{\cdot i} }
\label{eq:proyecccion_ortogonal_solucion}
\end{eqnarray}

Next, lets compute the value of the preceding expression (\ref{eq:proyecccion_ortogonal_solucion}). Rewriting  Equation~(\ref{eq:proyeccion_nystrom}) for the  symmetric positive semi-definite $\mathbf{K}^2$ gives:
\begin{equation}
\mathbf{K}^2- \widehat{\mathbf{K}^2}= \mathbf{K} \left [\mathbb{I}- \boldsymbol{\Pi}^\perp _{L(\mathbf{K} \mathbf{P}) }\right ]\mathbf{K}
\end{equation}

Multiplying on the left and on the right of the preceding expression with the matrix $\mathbf{K}^{\dag}=\mathbf{U} \boldsymbol{\Sigma}^{-1} \mathbf{U} ^\perp$ gives: 
\begin{equation}
\label{eq:proyeccion_nystrom1}
\mathbf{K}^{\dag} \left [\mathbf{K}^2- \widehat{\mathbf{K}^2} \right ]\mathbf{K}^{\dag}= \mathbf{U}\mathbf{U}^\perp \left [\mathbb{I}- \boldsymbol{\Pi}^\perp _{L(\mathbf{C}) }\right ]\mathbf{U}\mathbf{U}^\perp,
\end{equation}
\noindent where, to simplify notation, let $\mathbf{C}=\mathbf{KP}$.

Note that ${\boldsymbol \Pi}^\perp_{L(\mathbf{U})}=\mathbf{U}\mathbf{U}^\perp$. As $\boldsymbol{\aleph^*_{\cdot i}} \in L(\mathbf{K})=L(\mathbf{U})$ then  ${\mathcal P}^\perp_{L(\mathbf{U})} (\boldsymbol{\aleph^*_{\cdot i}})=\mathbf{U}\mathbf{U}^\perp\boldsymbol{\aleph^*_{\cdot i}}=\boldsymbol{\aleph^*_{\cdot i}} \in L(\mathbf{K})$.  On the other hand, as $\boldsymbol{\aleph^*_{\cdot i}} \in L(\mathbf{K})$, the orthogonal protection of $\left [\mathbb{I}- \boldsymbol{\Pi}^\perp _{L(\mathbf{C}) }\right ] \boldsymbol{\aleph}^*_{\cdot i} \in L(\mathbf{K})$,  and, considering the above argument, $ \mathbf{U} \mathbf{U} ^\perp \left [\mathbb{I}- \boldsymbol{\Pi}^\perp _{L(\mathbf{C}) } \right ] \boldsymbol{\aleph}^*_{\cdot i}=\left [\mathbb{I}- \boldsymbol{\Pi}^\perp _{L(\mathbf{C}) }\right ] \boldsymbol{\aleph}^*_{\cdot i}$. Joining both relationships and using Equation~(\ref{eq:proyeccion_nystrom1}), it is found that:
\begin{align}
\nonumber
\| 
{\mathcal P}^\perp_{L(\widehat{\mathbf{K}})^\perp} (\boldsymbol{\aleph}^*_{\cdot i})
\|_p &= \sqrt{
\boldsymbol{\aleph}^{*\top}_{\cdot i}\left [\mathbb{I}- \boldsymbol{\Pi}^\perp _{L(\mathbf{C}) } \right ] \boldsymbol{\aleph}^*_{\cdot i}}=\sqrt{
\boldsymbol{\aleph}^{*\top}_{\cdot i}
\mathbf{U} \mathbf{U} ^\perp \left [\mathbb{I}- \boldsymbol{\Pi}^\perp _{L(\mathbf{C}) } \right ]\mathbf{U} \mathbf{U} ^\perp \boldsymbol{\aleph}^*_{\cdot i} }=\sqrt{
\boldsymbol{\aleph}^{*\top}_{\cdot i}\mathbf{K}^{\dag} \left [\mathbf{K}^2- \widehat{\mathbf{K}^2} \right ]\mathbf{K}^{\dag}\boldsymbol{\aleph}^*_{\cdot i}}\\
&= \left \|  \left [\mathbf{K}^2- \widehat{\mathbf{K}^2} \right ]^{1/2}\mathbf{K}^{\dag}\boldsymbol{\aleph}^*_{\cdot i} \right \|_p
\end{align}

Finally, because all Schatten norms are sub-multiplicative:
\begin{eqnarray}
\|\boldsymbol{\aleph}^*_{\cdot i}- \widehat{\boldsymbol{\aleph}}^*_{\cdot i} \|_p=\left \|  \left [\mathbf{K}^2- \widehat{\mathbf{K}^2} \right ]^{1/2}\mathbf{K}^{\dag}\boldsymbol{\aleph}^*_{\cdot i} \right \|_p
\le
\| \boldsymbol{\aleph}^{*}_{\cdot i} \|_p  \left \| \left [\mathbf{K}^2- \widehat{\mathbf{K}^2} \right ]^{1/2}\mathbf{K}^{\dag}\right \|_p
 \end{eqnarray}

The following statement of the theorem 
\begin{equation*}
      \|\boldsymbol{\aleph}^*_{\cdot i}\|_p\le
      \frac{2}{N\lambda}\| \mathbf{1} \|_p
\end{equation*}
 \noindent will now be proved.

From the optimality conditions, stated in Equation~(\ref{eq:teo2_KKT_1}), it can be deduced that $\mathbf{K} \, \mathbf{z}_i=\mathbf{0}$, and therefore, $\mathbf{z}_i \in L(\mathbf{K})^\perp \Rightarrow \mathcal{P}^\perp_{L(\mathbf{K})} (\mathbf{z}_i)=\mathbf{0}$. From which it is obtained:
\begin{equation}
    {\mathcal P}^\perp_{L(\mathbf{K})} \left (\left [ N \lambda \boldsymbol{\aleph}^*_{\cdot i}   + \mathbf{p}_{\cdot i}(\boldsymbol{\aleph}^*_{\cdot \cdot})  - \boldsymbol{y}_{\cdot i} \right ] \right )=
    { N \lambda \boldsymbol{\aleph}^*_{\cdot i}+ \mathcal P}^\perp_{L(\mathbf{K})} \left (\left [   \mathbf{p}_{\cdot i}(\boldsymbol{\aleph}^*_{\cdot \cdot}) - \boldsymbol{y}_{\cdot i} \right ] \right )=\mathbf{0}
\end{equation}
Then:
\begin{equation*}
      \|{\boldsymbol{\aleph}^*_{\cdot i}\|_p= \frac{1}{N\lambda} \| \mathcal P}^\perp_{L(\mathbf{K})} \left (   \mathbf{p}_{\cdot i}(\boldsymbol{\aleph}^*_{\cdot \cdot}) - \boldsymbol{y}_{\cdot i}  \right )\|_p \le 
      \frac{1}{N\lambda} \|   \mathbf{p}_{\cdot i}(\boldsymbol{\aleph}^*_{\cdot \cdot}) - \boldsymbol{y}_{\cdot i} \|_p\le
      \frac{1}{N\lambda} \left [\, \|    \mathbf{p}_{\cdot i}(\boldsymbol{\aleph}^*_{\cdot \cdot}) \|_p+\|\boldsymbol{y}_{\cdot i} \|_p \,\right ]
\end{equation*}
\noindent

Note that the components of these vectors $p_{ni}$ are probabilities, and therefore $0\le p_{ni} \le 1$. Moreover, through the codification of \(y_{ni}\), these elements take values of \(0\) or \(1\), yielding the desired result.
\end{proof}

\section{Hyperparameter Optimisation}
\label{App:HPO}

Table~\ref{tab:hyperparameters-space-Nystrom} provides an overview of the search spaces for every hyperparameter used in the \gls{ml} algorithms employed within this study. Additionally, the table includes the optimal values of those hyperparameters obtained for each dataset.

\begin{table}[!h]
\caption{Hyperparameter space of the ML models}
\label{tab:hyperparameters-space-Nystrom}
\resizebox{\textwidth}{!}{
\begin{tabular}{llllllll}
\toprule
{\bf Technique} &{\bf Name of the hyperparameter } &{\bf Notation} &{\bf Type} &{\bf Search space} &{\bf NTS} & {\bf LPMC}\\
% & & & & &{\bf xxxx} &{\bf xxxx}\\
\midrule
  
\multirow{1}{1cm}{LinearSVM} & Soft margin constant &$C$ & Log-uniform distribution &  $[0.1, 10]$ & $6.380$ & $2.704$\\
\hline
\multirow{3}{1cm}{SVM} & Soft margin constant &$C$ & Log-uniform distribution &  $[0.1, 10]$ & $7.999$ & $6.391$\\
& Kernel function & $K$ & Choice & $[\textrm{RBF}]$ & $\textrm{RBF}$ & $\textrm{RBF}$\\
& Parameter of the Gaussian function & $c$ & Log-uniform distribution & $[10^{-3}, 1]$ & $0.008$ & $0.007$\\
& Number of landmark points in Nyström & $C$ & Choice & $[500,1000]$ & $1000$ & $500$\\
\hline
\multirow{5}{1cm}{RF} & Number of individual decision trees & $B$ & Uniform distribution  & $[1, 200]$ & $153$ & $180$\\
& Maximum features for the best split&$m$ & Uniform distribution & $[2, \textrm{Nº features}]$ & $8$ & $16$\\
& Maximum depth of each tree &$d$ & Uniform distribution & $[3,10]$ & $10$ & $10$\\
& Minimum samples to be at a leaf node &$l$ & Uniform distribution & $[1,20]$ & $3$ & $11$\\
& Minimum samples to split an internal node &$s$ & Uniform distribution & $[2,20]$ & $15$ & $14$\\
& Goodness of split metric &$c$ & Choice & $[\textrm{Gini}|\textrm{Entropy}]$ & Entropy & Entropy\\
\hline
\multirow{10}{1cm}{XGBoost} & Maximum tree depth & $d$ & Uniform distribution & $[1,14]$ & $7$ & $7$\\
& Minimum loss for a new split & $\gamma$ & Log-uniform distribution & $[10^{-4},5]$ & $4.970$ & $4.137$\\
& Minimum instance weight in a child  & $w$ & Uniform distribution & $[1,100]$ & $1$ & $32$\\
& Maximum delta step in each tree & $\delta$ & Uniform distribution & $[0,10]$ & $0$ & $4$\\
& Subsample ratio of the training instance & $s$ & Uniform distribution & $[0.5,1]$& $0.823$ & $0.935$\\
& Subsample ratio of columns at each tree & $c_t$ & Uniform distribution & $[0.5,1]$ & $0.553$ & $0.679$\\
& Subsample ratio of columns for each level & $c_l$ & Uniform distribution & $[0.5,1]$& $0.540$ & $0.629$\\
& L1 regularisation term & $\alpha$ & Log-uniform distribution & $[10^{-4},10]$ & $0.028$ & $0.003$\\
& L2 regularisation term & $\lambda$ & Log-uniform distribution & $[10^{-4},10]$ & $0.264$ & $0.5 \cdot 10^{-3}$\\
& Number of boosting rounds & $B$ & Uniform distribution & $[1,6000]$ & $4376$ & $2789$\\
\hline
\multirow{8}{1cm}{NN} & Number of neurons in the hidden layer & $n_1$ & Uniform distribution & $[10, 500]$ & 10 & 51\\
& Activation function of hidden layers & $f$ & Choice & $[\textrm{tanh}]$ & $\textrm{tanh}$ & $\textrm{tanh}$\\
& Solver for weights optimisation & $S$ & Choice & $[\textrm{LBFGS} |\textrm{SGD}|\textrm{Adam}]$ & $\textrm{LBFGS}$ & $\textrm{SGD}$\\
& Initial learning rate &$\eta_{0}$ & Uniform distribution & $[10^{-4}, 1]$ & 0.416 & 0.041\\
& Learning rate schedule & $\eta$ & Choice & $[\textrm{adaptive}]$ & $\textrm{adaptive}$ & $\textrm{adaptive}$\\
& Maximum number of iterations & $t$ & Choice & $[10^6]$ & $10^6$ & $10^6$\\
& Batch size for training & $BS$ & Choice & $[128|256|512|1024]$ & $512$ & $1024$\\
& Tolerance for the optimisation algorithm & $tol$ & Choice & $[10^{-3}]$ & $10^{-3}$ & $10^{-3}$\\
\hline
\multirow{3}{1cm}{KLR} & Kernel function & $K$ & Choice & $[\textrm{RBF}]$ & $\textrm{RBF}$ & $\textrm{RBF}$ \\
& Parameter of the Gaussian function & $c$ & Log-uniform distribution & $[10^{-3}, 1]$ & $0.054$ & $0.037$\\
& Tikhonov penalization parameter &$\lambda$ & Fixed & $10^{-6}$ & $10^{-6}$ & $10^{-6}$\\
\bottomrule
\end{tabular}
}
\end{table}

%% If you have bibdatabase file and want bibtex to generate the
%% bibitems, please use
%%
%\bibliographystyle{elsarticle-num} 
\bibliographystyle{elsarticle-harv} 
\bibliography{references}

%% else use the following coding to input the bibitems directly in the
%% TeX file.

% \begin{thebibliography}{00}

% %% \bibitem{label}
% %% Text of bibliographic item

% \bibitem{}

% \end{thebibliography}

\end{document}